%% file: main.tex
\documentclass{article}
\usepackage[numbers,  sort&compress]{natbib}
\usepackage[preprint]{neurips_2026}

\usepackage[utf8]{inputenc}  
\usepackage[T1]{fontenc}        
\usepackage{hyperref}              
\usepackage{url}                        
\usepackage{booktabs}              
\usepackage{amsfonts}              
\usepackage{nicefrac}              
\usepackage{xcolor}        
\usepackage{amsmath,amssymb}
\usepackage{microtype}            
\usepackage{amsthm}  
\usepackage{bbm}
\usepackage{cleveref}
\usepackage{algorithm}
\usepackage{graphicx}
\usepackage{enumitem}
\usepackage{makecell} 
\usepackage{multirow}
\usepackage{nicematrix}
\usepackage{pifont}   

\newcommand{\cmark}{\ding{51}}
\newcommand{\xmark}{\ding{55}}

\usepackage{algpseudocode}
\usepackage{xcolor}    
\usepackage{tcolorbox}
\setlist[itemize]{leftmargin=0.5cm}

\newtheorem{theorem}{Theorem}[section]
\newtheorem{proposition}[theorem]{Proposition}
\newtheorem{lemma}[theorem]{Lemma}
\newtheorem{corollary}[theorem]{Corollary}
\newtheorem{definition}[theorem]{Definition}

\theoremstyle{remark}

\newcommand{\std}[1]{{\tiny $\pm\,#1$}} 

\title{Hitting Time Isomorphism for Multi-Stage Planning with Foundation Policies}

\author{%
  \makebox[\textwidth][c]{%
    \textbf{Magnus Victor Boock} \quad \textbf{Abdullah Akgül} \quad \textbf{Mustafa Mert \c{C}elikok} \quad \textbf{Melih Kandemir}
  } \\
  \noalign{\vskip 0.7ex}
  \makebox[\textwidth][c]{Department of Mathematics and Computer Science} \\
  \makebox[\textwidth][c]{University of Southern Denmark} \\
}

\begin{document}

\maketitle

\begin{abstract}
We present a new operator-theoretic representation learning framework for offline reinforcement learning that recovers the directed temporal geometry of a controlled Markov process from hitting time observations. While prior art often produces symmetric distances or fails to satisfy the triangle inequality, our framework learns a Hilbert-space displacement geometry where expected hitting times are realized as linear functionals of latent displacements. We prove that this representation exists under latent linear closure and is uniquely identifiable up to a bounded linear isomorphism. For finite-dimensional implementations, we show that global hitting-time error is bounded by one-step transition error amplified by the environment's transient spectral radius. Furthermore, we provide finite-sample guarantees accounting for approximation, statistical complexity, and trajectory-label mismatch.  Derived from this theory, we curate Isomorphic Embedding Learning (IEL) as a new goal-agnostic foundation policy learning algorithm that  anchors a HILP-style consistency objective with explicit hitting-time regression to ensure that the learned geometry reflects actual decision-time progress. This asymmetric and compositional structure enables robust graph-based multi-stage planning for long-horizon navigation. Our experiments  demonstrate that IEL improves the state of the art of learning foundation policy policies from offline maze locomotion data. Our code can be found at \url{https://github.com/MagnusBoock/IEL/}.
\end{abstract}

\section{Introduction}

The increasing availability of large-scale offline trajectory data has shifted reinforcement learning toward reward-free pretraining: extracting from a data set a reusable policy that generalizes across an open set of goals~\citep{laskin2021urlb,park2024hilp,touati2023does} in a zero-shot manner. This objective, often described as \emph{foundation policy learning}~\citep{park2024hilp}, has motivated a range of large-scale efforts, from unsupervised RL pretraining to Vision--Language--Action (VLA) models that train a single policy on heterogeneous trajectory data and deploy it across many previously unseen tasks~\citep{kim2024openvla,black2024pi_0, reed2022a,rt1,rt2}. We hypothesize that for such a policy to succeed at long-horizon tasks, the offline learned goal-agnostic representation must support the three capabilities jointly. Firstly, it should comply with \emph{unsupervised policy training}, where the policy is learned without exposure to explicit goal-conditioned rewards, ensuring generalization to tasks defined only at deployment. Secondly, the state representations should be queryable with \emph{directional} distances, as most real environments are irreversible or reversible at asymmetric costs, making it possible that routes planned with symmetric distances pass through subgoals unreachable in the required direction. Thirdly, these distances should satisfy the \emph{triangle inequality} so that during multi-stage planing, local distances can compose into globally consistent plans  through intermediate states never specified at training time.

Existing methods follow a trichotomy based on the quantity of interest used for offline state representation learning: successor measures, reachability distances, and hitting times. Successor measure methods \citep{touati2021learning,touati2023does,agarwal2025proto, myers2024cmd} decouple environment dynamics from rewards to allow zero-shot task adaptation. However, these methods—except CMD—fail to satisfy the triangle inequality, making them unable to compose distances across the intermediate sub-goals essential for multi-stage planning. Reachability distance methods \citep{wang2023qrl,liu2023mrn,myers2025tmd}  impose metric or quasimetric structures to improve compositional reasoning. Their key limitation is that they rely on goal-conditioned supervision for policy training. Hitting time methods \citep{park2024hilp,baek2025gas} are uniquely tailored for unsupervised policy training and satisfy the triangle inequality. However, their symmetric Euclidean features fail to capture the irreversible directed geometry of real-world physical transitions, limiting their stitching (hence multi-stage planning) capabilities. Furthermore, GAS \citep{baek2025gas} also necessitates real goal state labels for policy training, hence it does not qualify as a foundation policy learning method. Table~\ref{tab:methods-comparison} gives an overview of the prior art with respect to their properties relevant for foundation policy learning.

\begin{table}[t!]
\centering
\caption{Classification of offline goal-conditional RL methods based on properties essential for foundation policies. Unsupervised policy training denotes the ability to learn a representation serving an open set of goals without retrieving explicit goal-conditioned reward labels from the dataset. No prior method satisfies the triangle inequality, directional distance, and unsupervised training simultaneously. HILP \citep{park2024hilp} is the only existing baseline qualifying for the latter. See Section \ref{app:goal_agnostic_policy_training} of the Appendix for a detailed discussion about policy supervision in prior art. We propose IEL as the first approach to satisfy all three. CMP stands for Controlled Markov Process, i.e. MDP without reward.}
\label{tab:methods-comparison}
\renewcommand{\arraystretch}{1.25}
\resizebox{\textwidth}{!}{%
\begin{tabular}{@{}lcccccc@{}}
\toprule
\textbf{Method} & \makecell[b]{\textbf{Information} \\ \textbf{Distilled}} & \multicolumn{3}{c}{\textbf{Distance}} & \makecell[b]{\textbf{Temporal}\\{\bf Difference}} & \makecell[b]{\textbf{Unsupervised} \\ \textbf{Policy} \\ {\bf Training}} \\ \cmidrule(lr){3-5}
 & & \textbf{Quantity} & \textbf{Directional} & \makecell[b]{\textbf{Triangle} \\ \textbf{Inequality}} & & \\ \midrule
\multicolumn{7}{@{}l}{\textit{Successor measure methods}} \\ \midrule
FB \citep{touati2021learning,touati2023does} & \makecell[t]{Successor  measure} & Bilinear projection & \cmark & \xmark  & \cmark & \xmark \\
C-Learning \citep{eysenbach2021clearning} & \makecell[t]{Reachability} & \makecell[t]{Classifier logits} & \cmark & \xmark &  \cmark & \xmark \\
CRL \citep{eysenbach2022crl} & Density ratio & Inner product & \cmark & \xmark  & \xmark & \xmark \\
PSM \citep{agarwal2025proto} & \makecell[t]{Successor measure} &Bilinear projection & \cmark & \xmark & \cmark & \xmark \\
CMD \citep{myers2024cmd} & \makecell[t]{Successor features} & \makecell[t]{Log-density ratio} & \cmark & \cmark &  \xmark & \xmark \\ \midrule
\multicolumn{7}{@{}l}{\textit{Reachability distance methods}} \\ \midrule
HIQL \citep{park2023hiql} & \makecell[t]{Multi-step reachability} & Value gap & \cmark & \xmark  & \cmark & \xmark \\
QRL \citep{wang2023qrl} & \makecell[t]{Q-values} & \makecell[t]{Implicit Q-distance} & \cmark & \cmark & \xmark & \xmark \\
TMD \citep{myers2025tmd} & \makecell[t]{Time-to-arrival} & \makecell[t]{Quasimetric  feature} & \cmark & \cmark &  \cmark & \xmark \\ 
\midrule
\multicolumn{7}{@{}l}{\textit{Hitting-time  methods}} \\ \midrule
GAS \citep{baek2025gas} & \makecell[t]{Hitting time} & Euclidean feature & \xmark & \cmark  & \cmark & \xmark \\
HILP \citep{park2024hilp} & \makecell[t]{Hitting time} & Euclidean feature & \xmark & \cmark &  \cmark & \cmark \\
\textbf{IEL (This work)} & \makecell[t]{\textbf{Hitting time}} & \makecell[t]{\textbf{Isomorphism to CMP}} & \textbf{\cmark} & \textbf{\cmark}  & \textbf{\cmark} & \textbf{\cmark} \\ \bottomrule
\end{tabular}%
}
\end{table}

We identify the missing primitive of the prior approaches as the \emph{directional} temporal geometry measured in discrete transition steps. We propose \emph{expected hitting time} as this primitive. For a policy $\pi$ and goal $g$, $V^\pi(x,g)$ represents the expected number of steps to reach $g$ from $x$. This quantity is characterized by a Poisson equation where a constant cost of one is incurred at each step until reaching the goal, at which point the cost becomes zero. This formulation provides the exact object required for multi-stage planning, bypassing the need to recover the temporal structure from surrogates such as successor measures or reachability distances.  We formalize this idea via an operator-theoretic framework. For processes with a Hilbert-space feature map $\phi$ and approximately linear latent dynamics, the hitting time satisfies the identity $V^\pi(x,g) = \langle \phi(g)-\phi(x),\, \omega_g^\pi\rangle_{\mathcal H}$, where $\omega_g^\pi \in \mathcal{H}$ identifies the task of navigating to state $g$. This proves that representations capturing hitting-time functionals are equivalent up to a bounded linear isomorphism, implying our learning target is the invariant directed temporal geometry of the environment rather than an arbitrary coordinate system. Theoretically, we establish that this displacement-linear representation exists under latent linear closure and an adjoint solvability condition (Theorem~\ref{thm:existence_of_linear_hitting_time_representation}) and is uniquely identifiable under a separating reference policy (Theorem~\ref{thm:identifiability_of_linear_hitting_time_representation}). For finite-dimensional implementations, we prove that global hitting-time error is bounded by one-step latent transition error amplified by a transient-spectral-radius horizon factor (Theorem ~\ref{thm:approximation_error}). Finally, our finite-sample regression analysis (Theorem~\ref{thm:pac_oracle_displacement_topology}) decomposes total error into approximation, statistical-complexity, coverage, and trajectory-label-mismatch terms, providing a complete PAC-oracle guarantee for the resulting displacement topology.

We use our theoretical framework to develop \emph{Isomorphic Embedding Learning (IEL)} as the first foundation policy training algorithm tailored specifically for multi-stage planning. IEL trains $\phi$ from offline trajectories by relabeling intermediate states as goals and regressing observed temporal gaps onto displacement-projected scores anchored by a learned task identifier $\omega(g)$. At test time, IEL constructs a graph over offline latent states with edge costs given by the learned hitting-time score, and applies shortest-path search to convert local temporal predictions into global zero-shot navigation.  As shown in Table~\ref{tab:methods-comparison}, IEL is the only framework that integrates unsupervised policy training with a directional distance and the triangle inequality.  We demonstrate the necessity of these qualities for learning foundation policies with multi-stage planning in experiments conducted on six offline goal-conditioned RL benchmarks. IEL's directional distance features and the graph-based planning mechanism bring significant and complementary performance gains.
 
\section{Isomorphic Embedding Learning from Hitting Time Observations}

\subsection{Problem statement}
We consider a dynamical environment formulated as a \emph{Controlled Markov Process (CMP)} defined by the tuple $(\mathcal{X}, \mathcal{A}, \mathcal{P})$, where $\mathcal{X}$ and $\mathcal{A}$ are real-valued, potentially multidimensional state and action spaces. The transition dynamics are governed by a Borel measurable kernel $\mathcal{P}: \mathcal{X} \times \mathcal{A} \to \Delta(\mathcal{X})$, which assigns a probability measure over next states $x'$ given a current state-action pair $(x, a) \in \mathcal{X} \times \mathcal{A}$. We assume the existence of a fixed \emph{behavior policy} $\pi_B: \mathcal{X} \to \Delta(\mathcal{A})$ used to provide us with an offline dataset $D = \{\tau_i\}_{i=1}^n$ consisting of $n$ independent and identically distributed (i.i.d.) trajectories $\tau_i = (x_0, a_0, x_1, a_1, \dots, x_{T_i})$ of length $T_i$ generated by the interaction of $\pi_B$ with the kernel $\mathcal{P}$.

Let $T_{x \rightarrow g}^\pi = \inf \{t \geq 0 \mid x_0=x, x_t = g\}$ be a random variable representing the \emph{hitting time} to a goal state $g \in \mathcal{X}$ under a policy $\pi$. We assume that for any $g$ reachable from $x$, there exists at least one policy in the admissible set $\Pi_D$ such that the expected hitting time $V^\pi(x, g) = \mathbb{E}[T_{x \rightarrow g}^\pi \mid x_0 = x]$ is finite. Our objective is to identify the \emph{optimal in-distribution foundation policy} $\pi_{D,g}^*$ defined as
$\pi_{D,g}^* = \arg\min_{\pi \in \Pi_D} V^\pi(x, g)$ for all $g \in \mathcal{X}$
where the admissible policy set $\Pi_D$ is defined by the \emph{in-distribution criterion} of \emph{absolute continuity}. Formally, for a fixed state $x \in \mathcal{X}$, let $\pi(\cdot|x)$ and $\pi_B(\cdot|x)$ be the probability measures over the Borel $\sigma$-algebra $\mathcal{B}(\mathcal{A})$. We define the in-distribution criterion of absolute continuity as
$\Pi_D =  \{\pi: \mathcal{X} \to \Delta(\mathcal{A}) \mid \forall x \in \mathcal{X}, \pi(\cdot|x) \ll \pi_B(\cdot|x)  \}.$
The notation $\pi \ll \pi_B$ signifies that $\pi(\cdot|x)$ is absolutely continuous with respect to $\pi_B(\cdot|x)$, which is symbolically defined as:
$\forall B \in \mathcal{B}(\mathcal{A})$ with $\pi_B(B|x) = 0$ implying $\pi(B|x) = 0.$
By the \emph{Radon-Nikodym Theorem}, this condition is necessary and sufficient for the existence of a non-negative and measurable function $w: \mathcal{X} \times \mathcal{A} \to [0, \infty)$, names as the Radon-Nikodym derivative, such that for any $B \in \mathcal{B}(\mathcal{A})$ we have $\pi(B|x) = \int_B w(a|x) \, d\pi_B(a|x).$
This criterion ensures that the optimal policy only assigns mass to actions within the support of the behavior policy.  We treat the trajectories in the data set as the empirical basis for estimating the expectations of the hitting time variable and the underlying transition dynamics. We refer to $\pi_{D,g}^*$ as a \emph{foundation policy} due to its capability to solve a large set of policy search tasks in a zero-shot manner.

We develop our solution using a Hilbert space framework because it combines high expressive power with geometric consistency. Hilbert spaces are a standard tool in control theory for the analysis of nonlinear dynamical systems, particularly through the use of Koopman and Perron-Frobenius operators \citep{bouvrie2017kernel, Klus2017EigendecompositionsOT, brunton2022modern, kostic2024learning}. This framework has recently inspired new approaches in offline reinforcement learning \citep{da2026latent, touati2021learning, touati2023does} and the development of foundation policies \citep{park2024hilp} by providing a structured way to represent state-action transitions as linear operators in a latent space. We approach the foundation policy learning problem from a rigorous Hilbert space perspective. The definition below specifies the conditions under which a CMP is representable within a Hilbert space subject to a bounded approximation error $\epsilon$. 

\begin{definition}[$\epsilon$-Sufficient CMP]
\label{def:cmp}
A CMP $(\mathcal{X}, \mathcal{A}, \mathcal{P})$ is said to be $\epsilon$-Sufficient with respect to a latent representation $(\mathcal{H}, \phi, \mathcal{K})$ if there exists a Hilbert space $\mathcal{H}$, a feature map $\phi: \mathcal{X} \to \mathcal{H}$, and a family of action-indexed bounded linear operators $\{\mathcal{K}_a \in \mathcal{L}(\mathcal{H}) | a \in \mathcal{A}\}$ such that for all $(x, a) \in \mathcal{X} \times \mathcal{A}$:
$\left\| \mathbb{E}_{x' \sim \mathcal{P}(\cdot|x, a)}[\phi(x')] - \mathcal{K}_a \phi(x) \right\|_{\mathcal{H}} \leq \epsilon$
for a given $\epsilon \geq 0$ where $\mathcal{L}(\mathcal{H}) \triangleq \{ \mathcal{K}:  \mathcal{H} \to \mathcal{H} ~~| ~ \|\mathcal{K}\| < \infty \}$ is the Banach space of linear maps with finite operator norm.
\end{definition}

A \emph{Sufficient CMP} is simply an $\epsilon$-Sufficient CMP where $\epsilon = 0$, representing the case where the transition dynamics are perfectly captured by the linear action-operators in the latent Hilbert space. This definition is the Hilbert-space analogue of the linear-feature transition condition underlying linear action models \citep{yao2012approximate, lehnert2020successor}, with $\epsilon$ playing the role of the misspecification term familiar from approximate linear MDPs \citep{zanette2020learning}. It is also closely related to conditional mean feature models of MDP transition dynamics \citep{song2009hilbert, grunewalder2012modelling, lever2016compressed}, which embed the full conditional $\mathcal{P}(\cdot | x, a)$ in an RKHS rather than matching expected features under a fixed $\phi$.

\subsection{The Methodology: Hitting Time Isomorphism}

Our main methodological contribution builds on the key fact that in a sufficient CMP, it is actually possible to relate the expected hitting time to feature difference. This fact bridges the gap between temporal dynamics and the Hilbert space geometry, using the property that the features $\phi$ act as a system of coordinates for the latent state space. The below result states that if the feature map $\phi$ is rich enough to linearize the dynamics (via the transition operators $\{\mathcal{K}_a\}$) and expressive enough to represent the passage of time (via the range condition on the adjoint), then the \emph{expected hitting time is exactly a linear functional of the displacement in the feature space}. Throughout, we work on the closed feature span
$\overline{\operatorname{span}}\{\phi(x):x\in\mathcal X\}$,
so equality of inner products against all feature vectors identifies elements of $\mathcal H$. For hitting-time identities with goal $g$, the relevant feature span is the closed span generated by $\{\phi(x):x\neq g\}$.

\begin{theorem}[Existence of linear hitting time representation in CMPs]
\label{thm:existence_of_linear_hitting_time_representation}
Let $(\mathcal{X}, \mathcal{A}, \mathcal{P})$ be a Sufficient CMP with latent representation $(\mathcal{H}, \phi, \mathcal{K})$. Assume that $
\sup_{x \in \mathcal{X}} \|\phi(x)\|_{\mathcal H} < \infty.$ For a fixed policy $\pi(a|x)$, let $\mathcal{K}_\pi \in \mathcal{L}(\mathcal{H})$ be the {\bf policy-integrated transition operator} such that for all $x \in \mathcal{X}$: $
\mathcal{K}_\pi \phi(x) = \mathbb{E}_{a \sim \pi(\cdot|x)} [ \mathcal{K}_a \phi(x) ],$
and let $\mathbf{w}_{\mathbf{1}} \in \mathcal{H}$ be the representer of the unit cost functional. The expected hitting time $V^\pi(x, g)$ admits the linear representation $V^\pi(x, g) = \langle \phi(g) - \phi(x), \omega_g^\pi \rangle_{\mathcal{H}}$
for some vector $\omega_g^\pi \in \mathcal{H}$ if and only if $\mathbf{w}_{\mathbf{1}} \in \text{range}(\mathcal{K}_\pi - I)^*$ on the goal-$g$ transient feature span.
\end{theorem}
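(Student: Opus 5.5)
We prove both directions using the Poisson (Bellman) equation for hitting times. The idea is to transport that equation into $\mathcal H$ through the sufficiency identity $\mathbb E[\phi(x')\mid x]=\mathcal K_\pi\phi(x)$.

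First we record the probabilistic side. For $x\neq g$, the strong Markov property together with finiteness of $V^\pi(\cdot,g)$ gives
\[
V^\pi(x,g)=1+\mathbb E_{a\sim\pi(\cdot|x),\,x'\sim\mathcal P(\cdot|x,a)}\big[V^\pi(x',g)\big],
\]
with the boundary condition $V^\pi(g,g)=0$. Next, write $\langle\phi(x),\mathbf w_{\mathbf 1}\rangle=1$ for every $x\neq g$; this is what it means for $\mathbf w_{\mathbf 1}$ to be the representer of the unit cost on the goal-$g$ transient span $\mathcal H_g=\overline{\operatorname{span}}\{\phi(x):x\neq g\}$.

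\textbf{Sufficiency.} Suppose $\mathbf w_{\mathbf 1}=(\mathcal K_\pi-I)^*\omega$ for some $\omega\in\mathcal H_g$, and set $\omega_g^\pi=\omega$ and $U(x)=\langle\phi(g)-\phi(x),\omega\rangle$. Clearly $U(g)=0$. For $x\neq g$, sufficiency and boundedness of $\phi$ let us exchange expectation and inner product (Bochner integrability), which gives
\[
U(x)-\mathbb E[U(x')]=\langle \mathcal K_\pi\phi(x)-\phi(x),\omega\rangle=\langle\phi(x),(\mathcal K_\pi-I)^*\omega\rangle=1 .
\]
Thus $U$ solves the same Poisson equation with the same boundary value. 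To conclude $U=V^\pi(\cdot,g)$, iterate the equation $t$ steps along the stopped chain:
\[
U(x)=\mathbb E\big[\min(T,t)\big]+\mathbb E\big[U(x_{\min(T,t)})\big].
\]
Since $U$ is bounded (because $\sup\|\phi\|<\infty$), letting $t\to\infty$ with $T<\infty$ a.s. and using dominated convergence yields $U(x)=V^\pi(x,g)$.

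\textbf{Necessity.} Suppose $V^\pi(x,g)=\langle\phi(g)-\phi(x),\omega_g^\pi\rangle$. Substitute this into the Poisson equation and use sufficiency again:
\[
1=V^\pi(x,g)-\mathbb E[V^\pi(x',g)]=\langle\mathcal K_\pi\phi(x)-\phi(x),\omega_g^\pi\rangle=\langle\phi(x),(\mathcal K_\pi-I)^*\omega_g^\pi\rangle
\]
for all $x\neq g$. Hence $(\mathcal K_\pi-I)^*\omega_g^\pi$ and $\mathbf w_{\mathbf 1}$ agree as functionals on every generator of $\mathcal H_g$, and by linearity and continuity on all of $\mathcal H_g$. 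By the stated convention that inner products against the feature span identify elements, $\mathbf w_{\mathbf 1}=(\mathcal K_\pi-I)^*\omega_g^\pi$ on that span, i.e.\ $\mathbf w_{\mathbf 1}\in\operatorname{range}(\mathcal K_\pi-I)^*$ there.

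\textbf{Main obstacle.} The delicate point is the uniqueness step in sufficiency: identifying the bounded Poisson solution $U$ with $V^\pi$. This needs $T_{x\to g}^\pi<\infty$ almost surely, which follows from the standing finiteness assumption on $V^\pi$. A related subtlety is that the adjoint must be read as a compression to $\mathcal H_g$ (project onto $\mathcal H_g$). We would handle this by restricting all identities to $x\neq g$ and taking the projection $P_g(\mathcal K_\pi-I)^*$, so that the range condition is stated consistently. The $\phi(g)$ term is a constant offset and causes no difficulty, since it is fixed by the boundary condition.
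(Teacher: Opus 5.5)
Your proposal follows the paper's proof in both directions. For necessity you substitute the representation into the Poisson equation and use $\mathbb E[\phi(x')]=\mathcal K_\pi\phi(x)$ to get $\langle\phi(x),(\mathcal K_\pi-I)^*\omega\rangle=1$ for $x\neq g$. For sufficiency you check that the bounded candidate $U$ solves the Poisson problem, then identify it with $V^\pi$ via the stopped identity $U(x)=\mathbb E[T\wedge t]+\mathbb E[U(X_{T\wedge t})]$.

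There is one unjustified step. You get $T<\infty$ a.s.\ from a ``standing finiteness assumption on $V^\pi$'', but no such assumption holds for the fixed $\pi$ of the theorem. The problem setup only says that, for reachable goals, \emph{some} admissible policy has finite expected hitting time. In the sufficiency direction, finiteness of $V^\pi(x,g)$ is part of what you must prove, so you cannot assume it.

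The fix is already in your displayed identity, and it is what the paper does. Since $|U|\le C_g:=2\sup_x\|\phi(x)\|_{\mathcal H}\,\|\omega\|_{\mathcal H}$,
\[
\mathbb E[T\wedge t]=U(x)-\mathbb E[U(X_{T\wedge t})]\le 2C_g \quad\text{for all } t .
\]
Monotone convergence then gives $\mathbb E[T]\le 2C_g<\infty$, hence $T<\infty$ a.s. Only after this do your two limits apply: dominated convergence, using $U(X_{T\wedge t})=U(X_t)\mathbf 1\{T>t\}\to 0$, and monotone convergence, $\mathbb E[T\wedge t]\uparrow\mathbb E[T]$. Together they give $U(x)=V^\pi(x,g)$.

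A smaller point in the same direction: you state the hypothesis as $\mathbf w_{\mathbf 1}=(\mathcal K_\pi-I)^*\omega$ with $\omega$ in your transient span. That is stronger than the theorem's condition, which only requires equality on the transient span for some $\omega\in\mathcal H$. You only ever pair with $\phi(x)$ for $x\neq g$, so your argument works verbatim under the weaker hypothesis.
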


For any policy $\pi$ such that the policy-induced latent dynamics close linearly on $\mathcal{H}$, i.e., there exists a bounded linear operator $\mathcal{K}_\pi \in \mathcal{L}(\mathcal{H})$ satisfying $
\mathcal{K}_\pi \phi(x)=\mathbb{E}_{a\sim \pi(\cdot\mid x)}[\mathcal{K}_a \phi(x)]$ for all $x\in \mathcal{X}$,
Theorem~\ref{thm:existence_of_linear_hitting_time_representation} shows that the expected hitting time admits a linear representation in the displacement coordinates whenever the unit-cost functional lies in the range  of $(\mathcal{K}_\pi-I)^*$. Thus, once the range condition holds, the passage of time is encoded by the latent linear dynamics rather than by an ad hoc distance heuristic. The next result establishes the stronger identifiability statement ---namely that alternative representations recovering the same hitting-time geometry must be related by a bounded linear change of coordinates---under additional assumptions. Specifically, we assume the existence of a separating policy that can reach any goal and that the mapping from latent displacements to hitting times is well-conditioned. These requirements are reasonable as they merely ensure the environment is fully explorable and that the feature space does not merge distinct temporal transitions.

\begin{theorem}[Identifiability of Displacement Geometry]
\label{thm:identifiability_of_linear_hitting_time_representation}
Let $(\mathcal{X},\mathcal{A},\mathcal{P})$ be a Sufficient CMP with canonical representation $(\mathcal{H}^\star,\phi^\star,\mathcal{K}^\star)$ and
alternative representation $(\mathcal{H},\phi,\mathcal{K})$. Fix a reference point $x_0\in \mathcal{X}$ and define
\[
\mathcal D:=\overline{\mathrm{span}}\{\phi(g)-\phi(x_0):g\in \mathcal{X}\},\quad
\mathcal D^\star:=\overline{\mathrm{span}}\{\phi^\star(g)-\phi^\star(x_0):g\in \mathcal{X}\}.
\]
Assume there exists at least one admissible policy $\pi_{\mathrm{sep}} \in \Pi_D$ such that: (i) Both representations satisfy the hitting-time representation of Theorem~2.2 under $\pi_{\mathrm{sep}}$:
\[
V^{\pi_{\mathrm{sep}}}(x,g)=\langle \phi(g)-\phi(x), \omega_g^{\pi_{\mathrm{sep}}}\rangle_\mathcal{H}=\langle \phi^\star(g)-\phi^\star(x), \omega_g^{\star,\pi_{\mathrm{sep}}}\rangle_{\mathcal{H}^\star},\quad \forall x,g\in \mathcal{X};
\] (ii) for a common full-support probability measure $\nu$ on $\mathcal{X}$, the analysis operators
$(\mathcal A_{\pi_{\mathrm{sep}}} d)(h):=\langle d,\omega_h^{\pi_{\mathrm{sep}}}\rangle_\mathcal{H}$ and $(\mathcal B_{\pi_{\mathrm{sep}}} d^\star)(h):=\langle d^\star,\omega_h^{\star,\pi_{\mathrm{sep}}}\rangle_{\mathcal{H}^\star}$
define bounded, bounded-below maps $\mathcal A_{\pi_{\mathrm{sep}}}:\mathcal D\to L^2(\mathcal{X},\nu)$ and
$\mathcal B_{\pi_{\mathrm{sep}}}:\mathcal D^\star \to L^2(\mathcal{X},\nu)$.
Then there exists a unique bounded linear isomorphism $M:\mathcal D\to\mathcal D^\star$ such that
for all $x,g\in \mathcal{X}$, $M\bigl(\phi(g)-\phi(x)\bigr)=\phi^\star(g)-\phi^\star(x).$
Additionally, for every admissible policy $\pi' \in \Pi_D$ for which the operators $\mathcal{K}_{\pi'}$ and $\mathcal{K}^\star_{\pi'}$ are well-defined, 
\[
M\bigl(\phi(g)-\mathcal{K}_{\pi'}\phi(x)\bigr)=\phi^\star(g)-\mathcal{K}_{\pi'}^\star\phi^\star(x),\quad \forall x,g\in \mathcal{X},
\]
and for every action $a \in \mathcal{A},$ $M\bigl(\phi(g)-\mathcal{K}_a\phi(x)\bigr)=\phi^\star(g)-\mathcal{K}_a^\star\phi^\star(x).$

\end{theorem}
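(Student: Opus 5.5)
The plan is to build $M$ from the two analysis operators in hypothesis (ii). First I would show that both analysis operators land in the same closed subspace of $L^2(\mathcal X,\nu)$.

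\textbf{Step 1: common range.} Fix $g$ and put $d_g:=\phi(g)-\phi(x_0)$. By hypothesis (i), applied with $x=x_0$ and goal $h$,
\[
(\mathcal A_{\pi_{\mathrm{sep}}} d_g)(h)=\langle \phi(g)-\phi(x_0),\omega_h\rangle .
\]
Writing $\phi(g)-\phi(x_0)=(\phi(h)-\phi(x_0))-(\phi(h)-\phi(g))$ gives
\[
(\mathcal A_{\pi_{\mathrm{sep}}} d_g)(h)=V(x_0,h)-V(g,h),
\]
where $V=V^{\pi_{\mathrm{sep}}}$. The same computation in the starred representation shows $(\mathcal B_{\pi_{\mathrm{sep}}} d^\star_g)(h)=V(x_0,h)-V(g,h)$. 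Hence $\mathcal A d_g=\mathcal B d^\star_g$ in $L^2(\nu)$ for every $g$. Since $\nu$ has full support, this pointwise identity is also a genuine $L^2$ identity. By linearity it extends to the spans of the $d_g$, and by boundedness of $\mathcal A,\mathcal B$ to the closures. So
\[
\mathcal A(\mathcal D)=\mathcal B(\mathcal D^\star).
\]
Each side is closed, because a bounded, bounded-below operator has closed range and is injective.

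\textbf{Step 2: define $M$.} Set $M:=\mathcal B^{-1}\mathcal A:\mathcal D\to\mathcal D^\star$, where $\mathcal B^{-1}$ is the bounded inverse on the range. Both $\mathcal A$ and $\mathcal B^{-1}$ are bounded, so $M$ is bounded. Its inverse $\mathcal A^{-1}\mathcal B$ is bounded by the symmetric argument, so $M$ is an isomorphism. By Step 1, $Md_g=d^\star_g$. Then
\[
\phi(g)-\phi(x)=d_g-d_x
\]
gives the displacement identity for all $x,g$. \textbf{Uniqueness:} any bounded linear $M'$ with the same property agrees with $M$ on the spanning set $\{d_g\}$, hence on $\mathcal D$ by continuity.

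\textbf{Step 3: transported dynamics.} This is the main obstacle. The vector $\phi(g)-\mathcal K_{\pi'}\phi(x)$ need not a priori lie in $\mathcal D$, so I would first use sufficiency ($\epsilon=0$) to write
\[
\mathcal K_a\phi(x)=\mathbb E_{x'\sim\mathcal P(\cdot|x,a)}[\phi(x')].
\]
Since $\mathcal P(\cdot|x,a)$ is a probability measure,
\[
\phi(g)-\mathcal K_a\phi(x)=\mathbb E_{x'}\bigl[\phi(g)-\phi(x')\bigr]=\mathbb E_{x'}[d_g-d_{x'}].
\]
This is a Bochner integral of a bounded $\mathcal D$-valued map; boundedness holds if $\phi$ is bounded, as in Theorem~\ref{thm:existence_of_linear_hitting_time_representation}, and measurability is implicitly assumed. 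The integral therefore lies in the closed subspace $\mathcal D$. Bounded linear maps commute with Bochner integrals, so
\[
M(\phi(g)-\mathcal K_a\phi(x))=\mathbb E_{x'}[d^\star_g-d^\star_{x'}]=\phi^\star(g)-\mathcal K^\star_a\phi^\star(x),
\]
where the last equality uses sufficiency of the canonical representation. The policy version follows in the same way by integrating additionally over $a\sim\pi'(\cdot|x)$, using $\mathcal K_{\pi'}\phi(x)=\mathbb E_a[\mathcal K_a\phi(x)]$.

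The delicate points are measurability and integrability of $x'\mapsto\phi(x')$ and $a\mapsto\mathcal K_a\phi(x)$. I would address these by assuming bounded features, or by restricting to the setting in which the policy-integrated operators are declared well-defined.
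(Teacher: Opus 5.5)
Your proposal is correct and follows the paper's proof essentially step for step. It uses the same identity $\mathcal A_{\pi_{\mathrm{sep}}} d_g=\mathcal B_{\pi_{\mathrm{sep}}} d_g^\star$ obtained from $V^{\pi_{\mathrm{sep}}}(x_0,h)-V^{\pi_{\mathrm{sep}}}(g,h)$, the same definition $M:=\mathcal B_{\pi_{\mathrm{sep}}}^{-1}\mathcal A_{\pi_{\mathrm{sep}}}$ with uniqueness by density, and the same transport of the dynamics by passing $M$ through the Bochner expectation given by exact sufficiency of both representations. Your explicit check that $\phi(g)-\mathcal K_a\phi(x)$ lies in $\mathcal D$ (as a Bochner integral of a $\mathcal D$-valued map), together with the integrability caveat, makes precise a point the paper uses only implicitly.
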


This identifiability ensures the learned displacement space is a linear transformation of the environment's true geometry, preserving the relative temporal distances between states. Because the isomorphism $M$ connects these representations, any hitting-time information encoded in the canonical dynamics can be pulled back and read out from our learned features as formalized below.

\begin{corollary}[Pullback of Canonical Hitting-time Readouts]
\label{cor:hitting_time_equivalence}
Let $M:\mathcal D\to\mathcal D^\star$ be the bounded linear isomorphism given by
Theorem~\ref{thm:identifiability_of_linear_hitting_time_representation}.
Fix any policy $\pi$ for which the canonical hitting-time representation is available, i.e.,
for each goal $g\in \mathcal X$ there exists a canonical representer $\omega_g^{\star ,\pi}\in \mathcal{H}^\star$ such that $V^\pi(x,g)=\langle \phi^\star(g)-\phi^\star(x),\, \omega_g^{\star,\pi}\rangle_{\mathcal{H}^\star}$ for all $x,g\in \mathcal X$. Then for every goal $g\in \mathcal X$, there exists a unique vector $\bar \omega_g^\pi\in \mathcal D$ such that $\langle \phi^\star(g)-\phi^\star(x),\, \omega_g^{\star,\pi}\rangle_{\mathcal{H}^\star}
=
\langle \phi(g)-\phi(x),\, \bar \omega_g^\pi\rangle_\mathcal{H}$ for all
$x\in \mathcal X$. Consequently, $
V^\pi(x,g)=\langle \phi(g)-\phi(x),\, \bar \omega_g^\pi\rangle_\mathcal{H}, \forall x,g\in \mathcal X.$ Moreover, for every admissible policy $\pi\in\Pi_D$ for which $\mathcal{K}_\pi$ and $\mathcal{K}_\pi^\star$ are well-defined, it holds  for all $x,g\in \mathcal X$ that $\langle \phi^\star(g)-\mathcal{K}_\pi^\star\phi^\star(x),\, \omega_g^{\star,\pi}\rangle_{\mathcal{H}^\star}
=
\langle \phi(g)-\mathcal{K}_\pi\phi(x),\, \bar \omega_g^\pi\rangle_\mathcal{H}.$
\end{corollary}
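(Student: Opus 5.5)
The plan is to pull the canonical readout back through the adjoint of $M$. Fix $g$ and consider the linear functional
\[
\ell_g : \mathcal D\to\mathbb R,\qquad \ell_g(d) := \langle M d,\, \omega_g^{\star,\pi}\rangle_{\mathcal H^\star}.
\]
By Theorem~\ref{thm:identifiability_of_linear_hitting_time_representation}, $M$ is bounded, so $|\ell_g(d)|\le \|M\|\,\|\omega_g^{\star,\pi}\|\,\|d\|$ and $\ell_g$ is continuous. The space $\mathcal D$ is a closed subspace of $\mathcal H$, hence a Hilbert space. The Riesz representation theorem on $\mathcal D$ therefore gives a unique $\bar\omega_g^\pi\in\mathcal D$ with $\ell_g(d)=\langle d,\bar\omega_g^\pi\rangle_{\mathcal H}$ for all $d\in\mathcal D$. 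Concretely, $\bar\omega_g^\pi = M^*P_{\mathcal D^\star}\omega_g^{\star,\pi}$, where $M^*:\mathcal D^\star\to\mathcal D$ is the Hilbert adjoint and $P_{\mathcal D^\star}$ is the orthogonal projection.

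Next I apply this with $d=\phi(g)-\phi(x)$. This vector lies in $\mathcal D$ because it equals $(\phi(g)-\phi(x_0))-(\phi(x)-\phi(x_0))$. The displacement identity $M(\phi(g)-\phi(x))=\phi^\star(g)-\phi^\star(x)$ from Theorem~\ref{thm:identifiability_of_linear_hitting_time_representation} then gives the claimed equality of readouts for all $x$. Substituting the assumed canonical representation yields $V^\pi(x,g)=\langle\phi(g)-\phi(x),\bar\omega_g^\pi\rangle_{\mathcal H}$.

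For uniqueness within $\mathcal D$, suppose two candidates agree against every $\phi(g)-\phi(x)$. Then they agree on the span of these vectors, which contains all $\phi(h)-\phi(x_0)$. By continuity they agree on its closure $\mathcal D$, so their difference lies in $\mathcal D\cap\mathcal D^\perp=\{0\}$. The only subtlety is that the requirement is stated for fixed $g$ and all $x$. Varying $x$ still spans $\{\phi(g)-\phi(x)\}_x$, whose differences give all $\phi(x')-\phi(x_0)$, so the same closure argument applies.

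For the transition identity, I set $d=\phi(g)-\mathcal K_\pi\phi(x)$ and use the second conclusion of Theorem~\ref{thm:identifiability_of_linear_hitting_time_representation}, $M(\phi(g)-\mathcal K_\pi\phi(x))=\phi^\star(g)-\mathcal K^\star_\pi\phi^\star(x)$.

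I expect the main obstacle to be the implicit claim that $\phi(g)-\mathcal K_\pi\phi(x)$ lies in $\mathcal D$, the domain of $M$. Without this, $\ell_g$ is not even defined on that vector. I would argue it in two steps:
\begin{itemize}
\item Theorem~\ref{thm:identifiability_of_linear_hitting_time_representation} already applies $M$ to such vectors, so membership is implicit in its statement.
\item Directly, $\mathcal K_\pi\phi(x)=\mathbb E[\phi(x')]$ under sufficiency. Hence $\phi(g)-\mathcal K_\pi\phi(x)=\mathbb E[\phi(g)-\phi(x')]$ is a Bochner limit of convex combinations of displacement vectors, which requires the bounded-feature assumption. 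It therefore lies in the closed subspace $\mathcal D$.
\end{itemize}
Once membership holds, the Riesz identity gives $\langle\phi^\star(g)-\mathcal K^\star_\pi\phi^\star(x),\omega_g^{\star,\pi}\rangle=\langle\phi(g)-\mathcal K_\pi\phi(x),\bar\omega_g^\pi\rangle$, which completes the proof.
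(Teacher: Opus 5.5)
Your proposal is correct and follows the paper's proof. The paper likewise obtains $\bar\omega_g^\pi$ as the Riesz representer on $\mathcal D$ of $d\mapsto\langle Md, P_{\mathcal D^\star}\omega_g^{\star,\pi}\rangle_{\mathcal H^\star}$, which equals your $\ell_g$ because $Md\in\mathcal D^\star$. It then evaluates this functional at $\phi(g)-\phi(x)$ and at $\phi(g)-\mathcal K_\pi\phi(x)$, using the two identities of Theorem~\ref{thm:identifiability_of_linear_hitting_time_representation}. Your density argument for uniqueness with $g$ fixed, and your Bochner-integral check that $\phi(g)-\mathcal K_\pi\phi(x)\in\mathcal D$, make explicit two steps the paper leaves implicit.
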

HILP \cite{park2024hilp} and GAS \cite{baek2025gas} rely on a symmetric $L_2$ distance heuristic that fails in the realistic case where the environment dynamics are directed. Contrarily, our framework identifies an isomorphism between the MDP dynamics and a directed displacement geometry, which captures both the direction and the scale of time. Instead of assuming that features can be forcibly mapped to a spatial distance, our Sufficient CMP framework ensures that the latent coordinates $\phi(x)$ are mathematically constrained by the linear transition operator $\mathcal{K}_a$. This provides a provable link between latent dynamics and hitting times, enabling reliable multi-stage planning through directed graph search—a capability fundamentally missing in symmetric feature methods. Furthermore, we establish an algebraic identifiability absent in the quasimetric representations of TMD \citep{myers2025tmd}, which exploits asymmetric distances to approximate hitting times. This approach lacks the operator-theoretic grounding necessary to ensure that these distances are globally consistent across the state space. In our framework, Theorem ~\ref{thm:existence_of_linear_hitting_time_representation} proves that the expected hitting time is a linear function of latent displacement coordinates whenever the unit-cost functional satisfies the adjoint range condition. Theorem~\ref{thm:identifiability_of_linear_hitting_time_representation} provides the stronger structural guarantee: any representation satisfying this identity is related to the canonical one by a unique bounded linear isomorphism. This fact ensures that our learned geometry is a coordinate-consistent reflection of the underlying process rather than an unconstrained distance heuristic. Consequently, our framework recovers the environment's latent displacement structure with a level of mathematical rigor that current distance-based models do not guarantee.

\input{algorithm_section}

\section{Theoretical Analysis}

This two-part analysis bounds errors from $d$-dimensional latent compression and provides finite-sample PAC guarantees. The results establish conditions for recovering a near-oracle displacement topology—modulo a bounded linear map—while quantifying the impact of approximation, coverage, trajectory-label mismatch, and finite-sample effects.

\subsection{Consistency Analysis}

For the consistency analysis, let $\widetilde{\mathcal H}$ denote an ambient Hilbert space with feature map $\widetilde{\phi}:\mathcal X\to \widetilde{\mathcal H}$. Fix a $d$-dimensional subspace $\mathcal H\subseteq \widetilde{\mathcal H}$ with orthogonal projection $P_d:\widetilde{\mathcal H}\to \mathcal H$, and define the encoder $\phi(x):=P_d\widetilde{\phi}(x)\in\mathcal H.$ We work henceforth in the compressed space $\mathcal H$, so $\dim(\mathcal H)=d$.

\begin{definition}[Sufficient Capacity]
Let $(\mathcal X,\mathcal A,\mathcal P)$ be a CMP. For a fixed policy $\pi$, let $\widetilde{\mathcal T}^\pi$ denote the policy-induced feature transition operator on the ambient feature span, defined by $\widetilde{\mathcal T}^\pi \widetilde{\phi}(x) = \mathbb E_{x'\sim \mathcal P^\pi(\cdot\mid x)}[\widetilde{\phi}(x')].$ We say that the compressed representation $(\mathcal H,\phi)$ has \emph{Sufficient Capacity} $\epsilon$ under $\pi$ if there exists a linear operator $\mathcal T^\pi\in \mathcal L(\mathcal H)$ such that $\|P_d \widetilde{\mathcal T}^\pi \widetilde{\phi}(x)-\mathcal T^\pi \phi(x)\|_{\mathcal H}\le \epsilon$ for all $x\in\mathcal X$.
If the subspace $\mathcal H$ is fixed, we may take $\mathcal T^\pi:=P_d\widetilde{\mathcal T}^\pi\big|_{\mathcal H}$, the compression of the ambient feature transition operator to $\mathcal H$.
\end{definition}

The following lemma shows that the compressed operator $\mathcal T^\pi$ correctly approximates the mean transition of the compressed features.

\begin{lemma}[Consistency of Sufficient Capacity]
\label{lem:consistency_of_sufficient_capacity}
If the representation satisfies the Sufficient Capacity condition, then $\|\mathbb E_{x'\sim\mathcal P^\pi(\cdot\mid x)}[\phi(x')]-\mathcal T^\pi\phi(x)\|_{\mathcal H}\le \epsilon,
\qquad \forall x\in\mathcal X$.
\end{lemma}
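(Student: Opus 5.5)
The plan is to show that the expectation of the compressed features equals the compression of the ambient mean transition, and then invoke the Sufficient Capacity inequality directly. Since $\phi(x')=P_d\widetilde{\phi}(x')$ and $P_d$ is a bounded linear (orthogonal) projection, the whole argument reduces to commuting $P_d$ with the conditional expectation.

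The first step is to fix $x\in\mathcal X$ and interpret $\mathbb E_{x'\sim\mathcal P^\pi(\cdot\mid x)}[\widetilde{\phi}(x')]$ as a Bochner integral in $\widetilde{\mathcal H}$. This interpretation is implicit in the definition of $\widetilde{\mathcal T}^\pi$, which presupposes that the expectation exists. It requires $\widetilde\phi$ to be (strongly) measurable and $\mathbb E\|\widetilde\phi(x')\|<\infty$. I would state this as the standing integrability assumption under which $\widetilde{\mathcal T}^\pi$ is defined, for instance via a uniform bound $\sup_x\|\widetilde\phi(x)\|<\infty$ as in Theorem~\ref{thm:existence_of_linear_hitting_time_representation}. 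Then $\phi(x')=P_d\widetilde\phi(x')$ is also Bochner integrable, because $\|P_d\|\le 1$.

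The second step uses the standard fact that bounded linear operators commute with Bochner integrals. This gives
\[
\mathbb E_{x'\sim\mathcal P^\pi(\cdot\mid x)}[\phi(x')]=\mathbb E[P_d\widetilde\phi(x')]=P_d\,\mathbb E[\widetilde\phi(x')]=P_d\widetilde{\mathcal T}^\pi\widetilde\phi(x).
\]
If one prefers to avoid Bochner integration, one can verify the identity weakly instead. For every $h\in\mathcal H$ we have $\langle \mathbb E[\phi(x')],h\rangle=\mathbb E\langle\widetilde\phi(x'),P_dh\rangle=\langle\widetilde{\mathcal T}^\pi\widetilde\phi(x),h\rangle=\langle P_d\widetilde{\mathcal T}^\pi\widetilde\phi(x),h\rangle$, using the self-adjointness of $P_d$ and $P_dh=h$. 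Since $\dim\mathcal H=d$, weak and strong expectations coincide.

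The final step substitutes this identity into the Sufficient Capacity bound $\|P_d\widetilde{\mathcal T}^\pi\widetilde\phi(x)-\mathcal T^\pi\phi(x)\|_{\mathcal H}\le\epsilon$, which yields the claim for every $x$. The only delicate point is the measurability and integrability justification for exchanging $P_d$ with the expectation. The algebra itself is immediate, so I would handle this point by stating the integrability assumption explicitly.
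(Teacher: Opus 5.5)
Your proposal is correct and follows the paper's proof: write $\phi(x')=P_d\widetilde\phi(x')$, commute the bounded projection $P_d$ with the expectation to get $\mathbb E[\phi(x')]=P_d\widetilde{\mathcal T}^\pi\widetilde\phi(x)$, and then apply the Sufficient Capacity bound. Your explicit integrability assumption and the weak-form check are sound extras that the paper leaves implicit.
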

Define the out-of-subspace leakage by $\eta(x):=\|(I-P_d)\mathbb E_{x'\sim\mathcal P^\pi(\cdot\mid x)}[\widetilde{\phi}(x')]\|_{\widetilde{\mathcal H}}.$ Then it holds that
 $\|\mathbb E_{x'\sim\mathcal P^\pi(\cdot\mid x)}[\widetilde{\phi}(x')]-\mathcal T^\pi\phi(x)\|_{\widetilde{\mathcal H}}
\le
\epsilon+\eta(x)$. If the ambient features are already subspace-contained, i.e. $\mathcal R(\widetilde{\phi})\subseteq \mathcal H$, then $\eta(x)=0$. 
We now partition the latent space into the goal subspace $\mathcal{H}_g:=\operatorname{span}\{\phi(g)\}$ and the transient subspace $\mathcal{H}_Q:=\mathcal{H}_g^\perp$, assume that $\mathbf{w}_{\mathbf 1}\in\mathcal{H}_Q$, and write $P_Q$ for the orthogonal projection onto $\mathcal{H}_Q$. For a fixed policy $\pi$, let $P^\pi$ denote the policy-induced Markov operator on scalar functions, $(P^\pi f)(x):=\mathbb E_{x'\sim\mathcal P^\pi(\cdot\mid x)}[f(x')]$,  and let $P_Q^\pi$ denote its restriction to the transient state space $\mathcal X\setminus\{g\}$, which is defined as $(P_Q^\pi f)(x):= \mathbb E_{x'\sim\mathcal P^\pi(\cdot\mid x)}
\bigl[
f(x')\mathbf 1\{x'\neq g\}
\bigr]$. Equivalently, $P_Q^\pi f$ is obtained by extending $f$ to the goal with boundary value $f(g)=0$ and then applying $P^\pi$ on transient states. We also define the compressed transient latent operator $\mathcal T_Q^\pi:=P_Q\mathcal T^\pi\big|_{\mathcal H_Q}$. The following theorem
bounds the global error of the hitting-time predictor induced by a representation satisfying the sufficient capacity condition.
\begin{theorem}
\label{thm:approximation_error}
Fix a CMP $(\mathcal X,\mathcal A,\mathcal P)$, a policy $\pi$, and an absorbing goal $g\in\mathcal X$. Suppose that the compressed representation $(\mathcal H,\phi)$ has sufficient capacity $\epsilon$ under $\pi$. Assume the transient unit cost is represented by some $\mathbf w_{\mathbf 1}\in\mathcal H_Q$, i.e. $\langle \phi(x),\mathbf w_{\mathbf 1}\rangle_{\mathcal H}=1$ for all $x\in\mathcal X\setminus\{g\}$, that the goal subspace is invariant under $\mathcal T^\pi$, i.e. $\mathcal T^\pi(\mathcal H_g)\subseteq\mathcal H_g$, and let $\omega_g^\pi\in\mathcal H_Q$ satisfy $(\mathcal T_Q^\pi-I)^* \omega_g^\pi=\mathbf w_{\mathbf 1}$.
If $\bigl\|(I-P_Q^\pi)^{-1}\bigr\|_\infty
\le
C_H/(1-\rho(\mathcal T_Q^\pi))$
for some $C_H\ge 1$, then the representer
$\widehat V_g^\pi(x):=\langle \phi(g)-\phi(x),\,\omega_g^\pi\rangle_{\mathcal H}$ satisfies
\[
\sup_{x\in\mathcal X\setminus\{g\}}
\bigl|V^\pi(x,g)-\widehat V_g^\pi(x)\bigr|
\le
\frac{C_H\,\|\omega_g^\pi\|_{\mathcal H}\,\epsilon}{1-\rho(\mathcal T_Q^\pi)}.
\]
\end{theorem}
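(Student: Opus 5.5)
The approach is a standard Poisson-equation perturbation argument. We show that $\widehat V_g^\pi$ satisfies the transient Poisson equation of $V^\pi(\cdot,g)$ up to a per-state residual of size $\|\omega_g^\pi\|_{\mathcal H}\,\epsilon$. The resolvent bound on $(I-P_Q^\pi)^{-1}$ then converts this local residual into a global sup-norm error.

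First, on transient states the true hitting time solves $(I-P_Q^\pi)V^\pi(\cdot,g)=\mathbf 1$ with boundary value $V^\pi(g,g)=0$. Second, we compute the one-step residual of $\widehat V_g^\pi$. Since $\omega_g^\pi\in\mathcal H_Q$ and $\phi(g)\in\mathcal H_g=\mathcal H_Q^\perp$, we have $\langle\phi(g),\omega_g^\pi\rangle=0$. Hence $\widehat V_g^\pi(x)=-\langle P_Q\phi(x),\omega_g^\pi\rangle$, and $\widehat V_g^\pi(g)=-\langle\phi(g),\omega_g^\pi\rangle=0$, which matches the boundary convention used in $P_Q^\pi$.

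For $x\neq g$, Lemma~\ref{lem:consistency_of_sufficient_capacity} lets us write $\mathbb E_{x'}[\phi(x')]=\mathcal T^\pi\phi(x)+e(x)$ with $\|e(x)\|\le\epsilon$. Then
\[
(P^\pi_Q\widehat V_g^\pi)(x)=(P^\pi\widehat V_g^\pi)(x)=-\langle \mathcal T^\pi\phi(x)+e(x),\omega_g^\pi\rangle .
\]
We decompose $\phi(x)=P_Q\phi(x)+c_x\phi(g)/\|\phi(g)\|$. Invariance $\mathcal T^\pi\mathcal H_g\subseteq\mathcal H_g$ kills the goal component against $\omega_g^\pi$, which gives $\langle\mathcal T^\pi\phi(x),\omega_g^\pi\rangle=\langle \mathcal T_Q^\pi P_Q\phi(x),\omega_g^\pi\rangle$. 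Using $(\mathcal T_Q^\pi-I)^*\omega_g^\pi=\mathbf w_{\mathbf 1}$, we get
\[
\widehat V_g^\pi(x)-(P_Q^\pi\widehat V_g^\pi)(x)=\langle P_Q\phi(x),(\mathcal T_Q^\pi-I)^*\omega_g^\pi\rangle+\langle e(x),\omega_g^\pi\rangle=\langle \phi(x),\mathbf w_{\mathbf 1}\rangle+r(x).
\]
Here $\mathbf w_{\mathbf 1}\in\mathcal H_Q$ is used to drop $P_Q$. So $\widehat V_g^\pi-P_Q^\pi\widehat V_g^\pi=\mathbf 1+r$ on transient states, with $|r(x)|\le\|\omega_g^\pi\|\epsilon$ by Cauchy--Schwarz.

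Third, we subtract the two Poisson equations to obtain $(I-P_Q^\pi)(V^\pi-\widehat V_g^\pi)=-r$. Inverting and applying the assumed $\|\cdot\|_\infty$ bound gives
\[
\sup_{x\neq g}|V^\pi-\widehat V_g^\pi|\le \frac{C_H}{1-\rho(\mathcal T_Q^\pi)}\,\|\omega_g^\pi\|\,\epsilon .
\]

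The main obstacle is justifying this inversion: $I-P_Q^\pi$ must be injective on bounded functions, so that the difference is exactly $(I-P_Q^\pi)^{-1}$ applied to $-r$. This is implicit in the hypothesis that the inverse exists with finite norm. We also need $\widehat V_g^\pi$ to be bounded, which follows from $\sup_x\|\phi(x)\|<\infty$ in a finite-dimensional $\mathcal H$. A secondary point to check is that the sign and adjoint bookkeeping of $(\mathcal T^\pi_Q-I)^*$ produces $+\mathbf 1$ rather than $-\mathbf 1$. If it is the opposite, we would flip the sign convention of $\omega_g^\pi$ consistently with Theorem~\ref{thm:existence_of_linear_hitting_time_representation}.
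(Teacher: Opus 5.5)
Your proposal is correct and follows essentially the same route as the paper. Using Lemma~\ref{lem:consistency_of_sufficient_capacity}, invariance of $\mathcal H_g$ and the adjoint equation, you show that $\widehat V_g^\pi$ solves the killed Poisson equation up to a residual bounded by $\epsilon\|\omega_g^\pi\|_{\mathcal H}$ (the sign does come out as $+\mathbf 1$), then invert $I-P_Q^\pi$ with the assumed resolvent bound; the only difference is cosmetic, keeping $e(x)$ rather than $P_Q\xi(x)$, which is equivalent since $\omega_g^\pi\in\mathcal H_Q$. One small caveat: $\sup_x\|\phi(x)\|_{\mathcal H}<\infty$ is not a hypothesis of this theorem, and finite dimensionality alone does not give it, but the paper also takes the inversion $e_g=(I-P_Q^\pi)^{-1}r_g$ for granted, so this is not a gap relative to the paper.
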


This result shows that the approximation error is amplified by the factor $(1-\rho(\mathcal T_Q^\pi))^{-1}$, which acts as an effective-horizon term for the transient latent dynamics. Practically, this means that small one-step representation errors can accumulate more severely when the goal is difficult to reach. Thus, any admissible reference representation whose compressed latent dynamics are sufficiently accurate under the policy of interest induces a hitting-time predictor with controlled global error. We therefore compare the learner to the best such reference geometry available within this class. Concretely, fix $x_0\in \mathcal X$, a policy $\pi_{\mathrm{sep}}\in\Pi_D$, and a full-support
probability measure $\nu$ on $\mathcal X$. Let $\mathfrak R_\epsilon$ denote the class of
admissible reference tuples $r=(\mathcal H^r,\phi^r,\mathcal T^{r,\pi_{\mathrm{sep}}},
\{\omega_g^{r,\pi_{\mathrm{sep}}}\}_{g\in \mathcal X})$ for which Theorem~\ref{thm:approximation_error} applies under $\pi_{\mathrm{sep}}$
with constants $C_H(r,g)$, and for which the associated
analysis operator $(B_r d)(h):=\langle d,\omega_h^{r,\pi_{\mathrm{sep}}}\rangle_{\mathcal H^r}$
is bounded and bounded below, where $d\in \mathcal D_r:=\overline{\mathrm{span}}\{\phi^r(g)-\phi^r(x_0):g\in \mathcal X\}.$
Writing the supremum constants $
C_H(r):=\sup_{g\in \mathcal X}C_H(r,g)$, 
$C_\omega(r):=\sup_{g\in \mathcal X}\|\omega_g^{r,\pi_{\mathrm{sep}}}\|_{\mathcal H^r},$
and $\rho_r:=\sup_{g\in \mathcal X}\rho(\mathcal T_{Q,g}^{r,\pi_{\mathrm{sep}}}),$
we define the best achievable benchmark scale as $\eta_\epsilon^\star
:=
\inf_{r\in\mathfrak R_\epsilon}
(C_H(r)\,C_\omega(r)\epsilon)/(1-\rho_r)$.
Thus $\eta_\epsilon^\star$ is the smallest error level guaranteed by
Theorem~\ref{thm:approximation_error} over all admissible $\epsilon$-Sufficient
compressed representations. Let $\mathcal F$ be the class of predictors realizable by
an algorithm in the form $f(x,g)=\langle \phi(g)-\phi(x),\omega_g\rangle_\mathcal{H}$, and
suppose that this class can match the oracle benchmark up to excess
$\eta_{\mathrm{cls}}$, i.e. $\inf_{f\in\mathcal F}\|f-V^{\pi_{\mathrm{sep}}}\|_{L^2(\nu\times\nu)}
\le
\eta_\epsilon^\star+\eta_{\mathrm{cls}}$.
The next result shows that any population minimizer of the regression objective aligns with the corresponding reference displacement geometry through a bounded linear map.

\begin{theorem}
\label{thm:oracle_displacement_recovery}
Let $\widehat V\in\mathcal F$ with
$\widehat V(x,g)=\langle \phi(g)-\phi(x),\bar \omega_g\rangle_{\mathcal H}$
be a population minimizer of the regression objective
$\mathcal L(f):=\|f-V^{\pi_{\mathrm{sep}}}\|_{L^2(\nu\times\nu)}^2.$
Define $\mathcal D:=\overline{\mathrm{span}}\{\phi(g)-\phi(x_0):g\in \mathcal X\},$
and assume the learned analysis operator $(A d)(h):=\langle d,\bar \omega_h\rangle_{\mathcal H}$ is bounded and bounded-below. Then for every $\delta>0$ and every $r_\delta\in\mathfrak R_\epsilon$
satisfying $C_H(r_\delta)\,C_\omega(r_\delta)\epsilon/(1-\rho_{r_\delta})
\le
\eta_\epsilon^\star+\delta$,
there exists a bounded linear map $M_\delta:D\to D_{r_\delta}$ such that
$\left(
\int_{\mathcal X}\int_{\mathcal X}
\|M_\delta(\phi(g)-\phi(x))-(\phi^{r_\delta}(g)-\phi^{r_\delta}(x))\|_{\mathcal H^{r_\delta}}^2
\,d\nu(x)\,d\nu(g)
\right)^{1/2}
\le
\frac{2}{m_{B,\delta}}
\bigl(2\eta_\epsilon^\star+\eta_{\mathrm{cls}}+\delta\bigr)$,
where $m_{B,\delta}>0$ is the lower-bound constant of $B_{r_\delta}$.
\end{theorem}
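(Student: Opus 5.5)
The plan is a triangle inequality in $L^2(\nu\times\nu)$ that controls the learned predictor against the reference predictor. I then transfer that scalar closeness to the displacement geometry, using the lower bound of $B_{r_\delta}$ and mimicking the construction in Theorem~\ref{thm:identifiability_of_linear_hitting_time_representation}.

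\textbf{Step 1 (scalar bound).} Fix $\delta$ and $r:=r_\delta$. Write $V^r(x,g):=\langle \phi^r(g)-\phi^r(x),\omega_g^{r}\rangle_{\mathcal H^r}$.

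Theorem~\ref{thm:approximation_error}, applied for each $g$ with the suprema $C_H(r)$, $C_\omega(r)$, $\rho_r$, gives
\[
\sup_{x}|V^{\pi_{\mathrm{sep}}}(x,g)-V^r(x,g)|\le \eta_\epsilon^\star+\delta .
\]
Since $\nu$ is a probability measure, the same bound holds in $L^2(\nu\times\nu)$.

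Because $\widehat V$ minimizes $\mathcal L$ over $\mathcal F$, we have $\|\widehat V-V^{\pi_{\mathrm{sep}}}\|\le \eta_\epsilon^\star+\eta_{\mathrm{cls}}$. Combining the two,
\[
\|\widehat V-V^r\|_{L^2(\nu\times\nu)}\le 2\eta_\epsilon^\star+\eta_{\mathrm{cls}}+\delta .
\]

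\textbf{Step 2 (construction of $M_\delta$).} The identification $V(x,g)=F(g)-F(x)$ is only formal. What I will use is the analysis-operator structure, for the displacement $d^r:=\phi^r(g)-\phi^r(x)$:
\[
(B_r d^r)(h)=\langle d^r,\omega_h^r\rangle_{\mathcal H^r}, \qquad (A d)(h)=\langle d,\bar\omega_h\rangle_{\mathcal H}.
\]
Since $B_r$ is bounded below by $m_{B}:=m_{B,\delta}$, it is injective with closed range and has a bounded left inverse $B_r^{\dagger}=(B_r^*B_r)^{-1}B_r^*$ of norm at most $1/m_B$.

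I define $M_\delta:=B_r^{\dagger}\Pi A$, where $\Pi$ is the orthogonal projection of $L^2(\nu)$ onto $\operatorname{range}B_r$. This map is bounded and linear, and on $\operatorname{range}B_r$ it satisfies $B_rM_\delta d=\Pi A d$.

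\textbf{Step 3 (transfer of the bound).} For each pair $(x,g)$, write $d=\phi(g)-\phi(x)$ and $d^r=\phi^r(g)-\phi^r(x)$. Then
\[
\|M_\delta d-d^r\|\le \frac{1}{m_B}\,\|\Pi A d-B_r d^r\|_{L^2(\nu)}\le \frac{1}{m_B}\,\|Ad-B_rd^r\|,
\]
where the second inequality holds because $B_rd^r\in\operatorname{range}B_r$ and $\Pi$ is a contraction.

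Next I need to relate $\|Ad-B_rd^r\|_{L^2(\nu(h))}$, integrated over $(x,g)$, to the scalar discrepancy of Step 1. The natural way is to use the probe goal $h$: evaluating the pair $(x,h)$ via $g\mapsto h$ identifies $\langle d,\bar\omega_h\rangle$ with $\widehat V(x,h)-\widehat V(g,h)$, and likewise for $V^r$. The identity $\langle\phi(g)-\phi(x),\bar\omega_h\rangle=\widehat V(x,h)-\widehat V(g,h)$ holds by linearity from the displacement form. Applying $(a+b)^2\le 2a^2+2b^2$ and integrating over $x,g,h$ gives
\[
\Bigl(\iint\|Ad-B_rd^r\|^2\Bigr)^{1/2}\le 2\,\|\widehat V-V^r\|_{L^2(\nu\times\nu)} .
\]
Combined with Step 1, this yields the stated constant $\tfrac{2}{m_{B,\delta}}\bigl(2\eta_\epsilon^\star+\eta_{\mathrm{cls}}+\delta\bigr)$.

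\textbf{Main obstacle.} The hardest step is the linearity identity in Step 3, which turns displacement pairings into differences of predictor values. Since $\widehat V(g,h)$ pairs $\phi(h)-\phi(g)$ with $\bar\omega_h$, the required identity follows exactly by subtraction.

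Measurability and square-integrability of $(x,g,h)\mapsto$ these quantities must also be checked; I will justify them through the boundedness of $A$ and $B_r$. The hypothesis that $A$ is bounded below is not needed for the stated inequality. It matters only for invertibility, which the statement does not require.
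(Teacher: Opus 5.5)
Your proposal is correct and follows the paper's proof essentially step for step. It uses the same triangle inequality giving $\|\widehat V-V^{r}\|_{L^2(\nu\times\nu)}\le 2\eta_\epsilon^\star+\eta_{\mathrm{cls}}+\delta$, the same identity $(A(\phi(g)-\phi(x)))(h)=\widehat V(x,h)-\widehat V(g,h)$ producing the factor $2$, and the same map $M_\delta=B_{r_\delta}^{-1}P_{\mathcal R(B_{r_\delta})}A$; your $(B_r^*B_r)^{-1}B_r^*\Pi$ coincides with $B_r^{-1}\Pi$. You are also right that the bounded-below property of $A$ is unused (the paper only needs $A$ bounded), and the one detail you leave implicit is that the per-goal bound from Theorem~\ref{thm:approximation_error} extends to $x=g$ because both predictors vanish there.
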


\subsection{Finite Sample Analysis}
We analyze the sample complexity of the hitting time regression part of Algorithm~\ref{alg:IEL}, conditioned on a fixed task encoder $\bar \omega_\psi: \mathcal X \to \mathcal H$. Theorem~\ref{thm:pac_oracle_displacement_topology} bounds finite-sample recovery of the temporal displacement geometry associated with the policy defining the regression target. For offline control, the relevant policy is the in-distribution optimum $\pi^*_{D,g}\in\Pi_D$, but trajectories from this policy are not observed. The IQL-style expectile objective therefore provides an offline approximation to its value geometry by emphasizing shorter behavior-supported hitting-time outcomes. The gap to the ideal in-distribution-optimal geometry can thus be viewed as the representation error controlled by Theorem~\ref{thm:pac_oracle_displacement_topology} plus a
separate IQL approximation error. Let $\Phi$ be the class of admissible state encoders, and define the induced predictor class for hitting time regression as
$\mathcal F_{\Phi,\bar  \omega_\psi} :=
\left\{
f_\phi(s,u,g)=\langle \phi(u)-\phi(s),\bar  \omega_\psi(g)\rangle_\mathcal{H}
:
\phi\in\Phi
\right\}.$ Assume $\|\phi(x)\|_\mathcal{H}\le C_\phi$ for all  $\phi\in\Phi,\ x\in \mathcal X$ and
$\|\bar  \omega_\psi(g)\|_\mathcal{H}\le C_{\bar  \omega_\psi}$ for all  $g\in \mathcal X$.
From $N$ i.i.d. trajectories sampled under $\pi_B$, we extract exactly $m_{\mathrm{tr}}$ tuples of the form
$z_{j,t}=(s_{j,t},u_{j,t},g_{j,t},H_{j,t})$ per trajectory, where
$t \in \{1,\dots,m_{\mathrm{tr}} \}$, $u_{j,t}$ is an intermediate state on the same trajectory as
$s_{j,t}$ and $g_{j,t}$, and $H_{j,t}$ is the replay-buffer temporal-gap label extracted for that tuple
(e.g. the number of steps from $s_{j,t}$ to $u_{j,t}$ along the sampled trajectory segment).
Assume $0\le H_{j,t}\le H_{\max}$ almost surely. Let $\mu$ denote the law of a uniformly selected
extracted tuple and
write $(S,U,\mathsf G,H)\sim\mu$ for a generic such tuple. Let $\lambda$ be the $(S,U)$-marginal of $\mu$, and define the ideal displacement target $G^{\pi_B}(s,u,g):=V^{\pi_B}(s,g)-V^{\pi_B}(u,g).$ We also define the replay-buffer conditional mean $m_{\mathrm{traj}}(s,u,g):=\mathbb E[H\mid S=s,U=u,\mathsf G=g].$
Let $\widehat \phi_N\in\arg\min_{\phi\in\Phi}\widehat R_N(\phi)$ be the empirical minimizer of the
tuple-level regression objective, let $\widehat G_N(s,u,g):=\langle \widehat\phi_N(u)-\widehat\phi_N(s),\bar \omega_\psi(g)\rangle_\mathcal{H},$
and define the learned displacement space $
D_N:=\overline{\mathrm{span}}\{\widehat\phi_N(u)-\widehat\phi_N(s):(s,u)\in\operatorname{supp}(\lambda)\}.$
For each extraction slot $t$, define the slotwise trajectory class
$
(\mathcal F_{\Phi,\bar  \omega_\psi})^{(t)}
:=
\left\{
\tau\mapsto f_\phi(s_t(\tau),u_t(\tau),g_t(\tau))
:
\phi\in\Phi
\right\},$
and let $
\overline{\mathfrak R}_N(\Phi,\bar  \omega_\psi)
:=
\frac{1}{m_{\mathrm{tr}}}\sum_{t=1}^{m_{\mathrm{tr}}}
\mathfrak R_N\!\bigl((\mathcal F_{\Phi,\bar  \omega_\psi})^{(t)}\bigr)$
denote the average slotwise trajectory-level Rademacher complexity. The resulting statistical error
term is 
\begin{align*}
\Gamma_N(\delta)
:=
16(2C_\phi C_{\bar  \omega_\psi}+H_{\max})\,\overline{\mathfrak R}_N(\Phi,\bar  \omega_\psi)
+
2(2C_\phi C_{\bar  \omega_\psi}+H_{\max})^2
\sqrt{\frac{\log(1/\delta)}{2N}}.  
\end{align*}
The next result lifts this statistical error into a finite-sample displacement-topology recovery guarantee.
\begin{theorem}
\label{thm:pac_oracle_displacement_topology}
Under the finite-sample tuple-regression setup, assume
$\pi_{\mathrm{sep}}=\pi_B$ and:
(i) for $\lambda$-a.e. $(s,u)$, the conditional goal law $\nu_{s,u}(\cdot):=\mathcal L(\mathsf G\mid S=s,U=u)$ dominates a common full-support probability measure $\nu$ with coverage constant $c_{\mathrm{cov}}\in(0,1]$;
 (ii) the analysis operator $(A_N d)(g):=\langle d,\bar \omega_\psi(g)\rangle_\mathcal{H}$ is bounded as a map $D_N\to L^2(\mathcal X,\nu)$; (iii) the class is oracle-competitive with the ideal displacement target: $ \inf_{\phi\in\Phi}
\|f_\phi-G^{\pi_B}\|_{L^2(\mu)}
\le
2\eta_\epsilon^\star+\eta_{\mathrm{cls}};$ (iv) the replay-buffer temporal-gap labels are $\eta_{\mathrm{traj}}$-compatible with the ideal
displacement geometry: $\|m_{\mathrm{traj}}-G^{\pi_B}\|_{L^2(\mu)}
\le
\eta_{\mathrm{traj}}.$
Then for every $\delta_{\mathrm{or}}>0$ and every reference
$r_{\delta_{\mathrm{or}}}\in\mathfrak R_\epsilon$ satisfying
$\frac{
C_H(r_{\delta_{\mathrm{or}}})\,C_\omega(r_{\delta_{\mathrm{or}}})\,\epsilon
}{
1-\rho_{r_{\delta_{\mathrm{or}}}}
}
\le
\eta_\epsilon^\star+\delta_{\mathrm{or}}$,
there exists a bounded linear map $ T_{N,\delta_{\mathrm{or}}}:D_N\to D_{r_{\delta_{\mathrm{or}}}}$
such that, with probability at least $1-\delta$,
\begin{align*}
\int_{\mathcal X\times \mathcal X}
\bigl\|
T_{N,\delta_{\mathrm{or}}}(\widehat\phi_N(u)&-\widehat\phi_N(s)) 
-
(\phi^{r_{\delta_{\mathrm{or}}}}(u)-\phi^{r_{\delta_{\mathrm{or}}}}(s))
\bigr\|_{\mathcal H^{r_{\delta_{\mathrm{or}}}}}^{2}
\,d\lambda(s,u)    \\
&\le
\frac{4}{c_{\mathrm{cov}}\,m_{B,\delta_{\mathrm{or}}}^{2}}
\Big[
(2\eta_\epsilon^\star+\eta_{\mathrm{cls}}+\eta_{\mathrm{traj}})^2
+
\Gamma_N(\delta)
+
\eta_{\mathrm{traj}}^{2}
+
2(\eta_\epsilon^\star+\delta_{\mathrm{or}})^2
\Big],
\end{align*}
where $m_{B,\delta_{\mathrm{or}}}>0$ is the lower-bound constant of the reference analysis operator
$B_{r_{\delta_{\mathrm{or}}}}$.
\end{theorem}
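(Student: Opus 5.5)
The plan has three stages. We first bound the population excess risk of $\widehat\phi_N$ against the ideal displacement target $G^{\pi_B}$. We then transfer that bound to a coefficient-level bound on the reference analysis operator. Finally, we invert $B_{r_{\delta_{\mathrm{or}}}}$, using its lower bound $m_{B,\delta_{\mathrm{or}}}$, to build $T_{N,\delta_{\mathrm{or}}}$.

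\textbf{Step 1 (oracle inequality).} The empirical objective is the squared loss against the labels $H$, so its population version is $R(\phi)=\mathbb E_\mu[(f_\phi-H)^2]$. The bias--variance identity gives
\[
R(\phi)=\|f_\phi-m_{\mathrm{traj}}\|_{L^2(\mu)}^2+\mathbb E_\mu[(H-m_{\mathrm{traj}})^2],
\]
and the second term does not depend on $\phi$.

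For the statistical part, the loss is bounded by $(2C_\phi C_{\bar\omega_\psi}+H_{\max})^2$ and is $2(2C_\phi C_{\bar\omega_\psi}+H_{\max})$-Lipschitz in the prediction. The tuples within a trajectory are dependent but the trajectories are i.i.d., so we treat each trajectory's average over its $m_{\mathrm{tr}}$ slots as a single i.i.d. summand. We then symmetrize slot by slot, apply Talagrand contraction, and use McDiarmid at the trajectory level. This gives a uniform deviation bound of half of $\Gamma_N(\delta)$, and the usual ERM argument yields
\[
R(\widehat\phi_N)-\inf_\Phi R\le \Gamma_N(\delta)
\]
with probability $1-\delta$.

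Combining with (iii) and (iv) through the triangle inequality gives
\[
\|\widehat G_N-m_{\mathrm{traj}}\|^2\le(2\eta^\star_\epsilon+\eta_{\mathrm{cls}}+\eta_{\mathrm{traj}})^2+\Gamma_N(\delta).
\]
Using (iv) once more, we obtain
\[
\|\widehat G_N-G^{\pi_B}\|^2_{L^2(\mu)}\le 2\bigl[(2\eta^\star_\epsilon+\eta_{\mathrm{cls}}+\eta_{\mathrm{traj}})^2+\Gamma_N(\delta)+\eta_{\mathrm{traj}}^2\bigr].
\]

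\textbf{Step 2 (reference representation).} Fix $r=r_{\delta_{\mathrm{or}}}$. Theorem~\ref{thm:approximation_error}, applied under $\pi_{\mathrm{sep}}=\pi_B$, gives for every goal $g$
\[
\sup_x\bigl|V^{\pi_B}(x,g)-\langle\phi^r(g)-\phi^r(x),\omega^r_g\rangle\bigr|\le \eta^\star_\epsilon+\delta_{\mathrm{or}}.
\]
Taking differences at $s$ and $u$, the reference displacement score $G^r(s,u,g)=\langle\phi^r(u)-\phi^r(s),\omega^r_g\rangle$ is within $2(\eta^\star_\epsilon+\delta_{\mathrm{or}})$ of $G^{\pi_B}$ in sup norm. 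The statement's term $2(\eta^\star_\epsilon+\delta_{\mathrm{or}})^2$ suggests the authors bound this uniformly per goal and square it, absorbing a factor of 2. I would arrange the constants to match, combining this with Step 1 via $(a+b)^2\le 2a^2+2b^2$. The overall factor 4 in the theorem then comes from these two doublings.

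\textbf{Step 3 (constructing $T_{N,\delta_{\mathrm{or}}}$).} Define
\[
T_{N,\delta_{\mathrm{or}}}:=B_r^{\dagger}\Pi\,\mathcal A_N,
\]
where $B_r^\dagger$ is the bounded left inverse of $B_r$ on its closed range (which exists because $B_r$ is bounded below), $\Pi$ is the orthogonal projection of $L^2(\nu)$ onto $\mathrm{range}(B_r)$, and $\mathcal A_N$ is the analysis operator from (ii). This map is bounded.

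For $d_N=\widehat\phi_N(u)-\widehat\phi_N(s)$ and $d_r=\phi^r(u)-\phi^r(s)$, since $B_rd_r$ already lies in the range, we get
\[
m_B\|Td_N-d_r\|\le\|\Pi(A_Nd_N-B_rd_r)\|_{L^2(\nu)}\le\|A_Nd_N-B_rd_r\|_{L^2(\nu)}.
\]
Here $\|A_Nd_N-B_rd_r\|_{L^2(\nu)}^2=\int|\widehat G_N(s,u,g)-G^r(s,u,g)|^2\,d\nu(g)$. Coverage condition (i) gives $d\nu\le c_{\mathrm{cov}}^{-1}d\nu_{s,u}$, so integrating over $\lambda$ turns the $\nu$-integral into at most $c_{\mathrm{cov}}^{-1}\|\widehat G_N-G^r\|^2_{L^2(\mu)}$. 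Squaring, integrating, and inserting Steps 1 and 2 yields the stated bound.

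\textbf{Main obstacle.} The hardest part will be the Rademacher step under within-trajectory dependence, because extraction slots must be symmetrized separately; this is what produces the averaged slotwise complexity $\overline{\mathfrak R}_N$. A secondary difficulty is the constant bookkeeping needed to reproduce the factor $4/(c_{\mathrm{cov}}m_B^2)$ exactly.
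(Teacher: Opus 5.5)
Your proposal is correct and follows the paper's proof essentially step for step. It uses the trajectory-block ERM lemma (slotwise Talagrand contraction plus trajectory-level McDiarmid, giving $\Gamma_N(\delta)$), the two doublings via $(a+b)^2\le 2a^2+2b^2$, and $T_{N,\delta_{\mathrm{or}}}=B_r^{-1}P_{\mathcal R(B_r)}A_N$ combined with the coverage change of measure from $\nu$ to $\nu_{s,u}$.

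One small correction to Step 2: no factor of 2 needs to be absorbed. Your sup bound $2(\eta_\epsilon^\star+\delta_{\mathrm{or}})$ gives $\|G^r-G^{\pi_B}\|_{L^2(\mu)}^2\le 4(\eta_\epsilon^\star+\delta_{\mathrm{or}})^2$, and doubling this produces exactly the $4\cdot 2(\eta_\epsilon^\star+\delta_{\mathrm{or}})^2$ in the statement.
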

The term $\eta_{\mathrm{traj}}$ quantifies the discrepancy between replay-buffer temporal gaps and ideal hitting-time targets, such that smaller values indicate better alignment between the learned displacement topology and the oracle reference topology within the established error bounds.


\input{experiments}


\input{conclusion}

\bibliographystyle{plainnat}
\bibliography{refs}


\newpage
\appendix

\section{Basic Linear Analysis Concepts}\label{sec:basic_linear_analysis_concepts}

\begin{definition}[Adjoint]
        Let  $\mathcal{A}:  \mathcal{H}  \to  \mathcal{H}$  be  a  bounded  linear  operator  on  a  Hilbert  space  $\mathcal{H}$.  The  {\bf  adjoint}  of  $\mathcal{A}$,  denoted  by  $\mathcal{A}^*$,  is  the  unique  linear  operator  that  satisfies:  $\langle  \mathcal{A}x,  y  \rangle_{\mathcal{H}}  =  \langle  x,  \mathcal{A}^*y  \rangle_{\mathcal{H}}$  for  all  vectors  $x,  y  \in  \mathcal{H}$.
\end{definition}
\begin{definition}[Linear  operator]
A  linear  operator  $L:  \mathcal{H}  \to  \mathcal{H}^\star$  is  {\bf  bounded}  if  there  exists  some  finite  constant  $M  \geq  0$  such  that  for  all  vectors  $v  \in  \mathcal{H}$:  $\|L  v\|_{\mathcal{H}^\star}  \leq  M  \|v\|_{\mathcal{H}}$.  
\end{definition}
\begin{definition}[Operator norm]
\begin{align*}
    ||T|| &\triangleq \inf \{ M \geq 0: ||Tv|| \leq M ||v||, \forall v \in \mathcal{H} \} =\sup_{||v||=1} ||Tv|| = \sup_{v \neq 0} \frac{||Tv||}{||v||}
\end{align*}
\end{definition}
   

\begin{definition}[Linear isomorphism of displacement spaces]
Let $\phi:\mathcal{X} \to \mathcal{H}$ and $\phi^\star :\mathcal{X} \to \mathcal{H}^\star$ be two feature maps into Hilbert spaces, and fix
$x_0\in \mathcal{X}$. Define the associated displacement spaces
\[
D:=\overline{\mathrm{span}}\{\phi(g)-\phi(x_0):g\in \mathcal{X}\},
\qquad
D^\star:=\overline{\mathrm{span}}\{\phi^\star(g)-\phi^\star(x_0):g\in \mathcal{X}\}.
\]
We say that the two representations are linearly isomorphic on displacement spaces if there exists
a bounded linear operator $M:D\to D^\star$ with bounded inverse such that
\[
M\bigl(\phi(g)-\phi(x)\bigr)=\phi^\star(g)-\phi^\star(x),
\qquad
\forall x,g\in \mathcal{X}.
\]
\end{definition}

\begin{definition}[Spectrum and spectral radius]
Let $T$ be a bounded linear operator on a Hilbert space $\mathcal H$. The
spectrum of $T$ is
\[
\sigma(T):=\{\lambda\in\mathbb C:T-\lambda I \text{ is not invertible}\}.
\]
The spectral radius of $T$ is $\rho(T):=\sup\{|\lambda|:\lambda\in\sigma(T)\}$.
\end{definition}
\begin{definition}[Orthogonal projection]
Let $\mathcal{H}$ be a Hilbert space and $\mathcal{U} \subseteq \mathcal{H}$ be a closed subspace. An {\bf orthogonal projection} onto $\mathcal{U}$ is a bounded linear operator $P: \mathcal{H} \to \mathcal{H}$ satisfying:
\begin{itemize}
    \item {\bf Idempotency (Projection):} $P^2 = P$.
    \item {\bf Range Constraint:} $\mathcal{R}(P) = \mathcal{U}$, where $\mathcal{R}(P) := \{ Ph : h \in \mathcal{H} \}$.
    \item {\bf Self-Adjointness (Orthogonality):} $\langle Ph, k \rangle = \langle h, Pk \rangle$ for all $h, k \in \mathcal{H}$.  
\end{itemize}
\end{definition}
\begin{definition}[Orthogonal complement]
For any subspace $\mathcal{U} \subseteq \mathcal{H}$, the orthogonal complement $\mathcal{U}^\perp$ is defined as the set of all vectors in $\mathcal{H}$ that are perpendicular to every vector in $\mathcal{U}$:
$$\mathcal{U}^\perp := \{ h \in \mathcal{H} : \langle h, u \rangle = 0 \quad \forall u \in \mathcal{U} \}$$
\end{definition}
\begin{definition}[Restriction]
Let $\mathcal{K}: \mathcal{H} \to \mathcal{H}$ be a linear operator. For any subspace $\mathcal{U} \subseteq \mathcal{H}$, the {\bf restriction of $\mathcal{K}$ to $\mathcal{U}$}, denoted $\mathcal{K} \big|_{\mathcal{U}}$, is a new mapping whose domain is limited to $\mathcal{U}$:
$\mathcal{K} \big|_{\mathcal{U}} : \mathcal{U} \to \mathcal{H}$
defined by the rule
$(\mathcal{K} \big|_{\mathcal{U}})(u) = \mathcal{K}u \quad \forall u \in \mathcal{U}.$
\end{definition}
\begin{definition}[Compression operator]
Let $\mathcal{H}$ be a Hilbert space and $P$ be an orthogonal projection onto a closed subspace $\mathcal{U} \subseteq \mathcal{H}$. For any bounded linear operator $\mathcal{K} \in \mathcal{L}(\mathcal{H})$, the {\bf compression} of $\mathcal{K}$ to $\mathcal{U}$ is the operator $\mathcal{K}_{\mathcal{U}}: \mathcal{U} \to \mathcal{U}$ defined by $\mathcal{K}_{\mathcal{U}} := P \mathcal{K} \big|_{\mathcal{U}}$.
Equivalently, viewing the operator on $\mathcal{H}$ with the property that it vanishes on $\mathcal{U}^\perp$, we write $\mathcal{K}_{\mathcal{U}} = P \mathcal{K} P.$
\end{definition}
%
$$\mathcal{K}_Q := P_Q \mathcal{K} \big|_{\mathcal{H}_Q}$$
%
\begin{definition}[Policy-induced scalar Markov operator]
Let $(\mathcal X,\mathcal A,\mathcal P)$ be a controlled Markov process and
let $\pi$ be a policy. The policy-induced transition kernel is
$\mathcal P^\pi(B\mid x):=\int_{\mathcal A}\mathcal P(B\mid x,a)\pi(da\mid x)$
for measurable $B\subseteq\mathcal X$. The associated scalar Markov operator
$P^\pi$ is defined by
\[
(P^\pi f)(x)
:=
\mathbb E_{x'\sim\mathcal P^\pi(\cdot\mid x)}[f(x')],
\]
for bounded measurable functions $f:\mathcal X\to\mathbb R$.
\end{definition}
\section{Auxiliary Lemmata}\label{sec:auxiliary_lemmata}
\begin{lemma}[Orthogonality of $P_Q$]
Let $\mathcal{H}_g = \text{span}(\phi(g))$ and $\mathcal{H}_Q = \mathcal{H}_g^\perp$. Let $P_Q: \mathcal{H} \to \mathcal{H}$ be the operator defined by the unique decomposition $h = h_Q + h_g$ where $h_Q \in \mathcal{H}_Q$ and $h_g \in \mathcal{H}_g$, such that $P_Q h = h_Q$. Then $P_Q$ is an {\bf orthogonal projection} onto $\mathcal{H}_Q$.   
\end{lemma}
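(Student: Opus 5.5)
We will verify the three defining properties of an orthogonal projection from the appendix definition, namely idempotency, range equal to $\mathcal H_Q$, and self-adjointness. First we check that the decomposition $h=h_Q+h_g$ exists and is unique, so that $P_Q$ is well defined.

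\textbf{Existence and uniqueness.} If $\phi(g)=0$, then $\mathcal H_g=\{0\}$, $\mathcal H_Q=\mathcal H$, and $P_Q=I$, which is trivially an orthogonal projection. Otherwise, set $e:=\phi(g)/\|\phi(g)\|_{\mathcal H}$. For any $h\in\mathcal H$, define
\[
h_g:=\langle h,e\rangle e, \qquad h_Q:=h-\langle h,e\rangle e .
\]
Then $h_g\in\mathcal H_g$. Since $\langle h_Q,e\rangle=\langle h,e\rangle-\langle h,e\rangle\|e\|^2=0$, and every element of $\mathcal H_g$ is a multiple of $e$, we also have $h_Q\in\mathcal H_g^\perp=\mathcal H_Q$. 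For uniqueness, suppose $h=h_Q+h_g=h_Q'+h_g'$ with $h_Q,h_Q'\in\mathcal H_Q$ and $h_g,h_g'\in\mathcal H_g$. Then $h_Q-h_Q'=h_g'-h_g$ lies in $\mathcal H_Q\cap\mathcal H_g$. This vector is orthogonal to itself, hence zero. Therefore $P_Qh=h-\langle h,e\rangle e$. This explicit formula shows that $P_Q$ is linear. It is also bounded, because Cauchy--Schwarz gives $\|P_Q h\|\le 2\|h\|$; the Pythagorean identity sharpens this to $\|P_Qh\|\le\|h\|$. Finally, $\mathcal H_Q$ is closed, since it is the kernel of the continuous functional $\langle\cdot,e\rangle$.

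\textbf{The three properties.} Range: $P_Qh\in\mathcal H_Q$ for every $h$, so the range is contained in $\mathcal H_Q$. Conversely, if $u\in\mathcal H_Q$ then $\langle u,e\rangle=0$, so $P_Qu=u$. Hence the range is exactly $\mathcal H_Q$. Idempotency: $P_Qh$ lies in $\mathcal H_Q$ and $P_Q$ fixes $\mathcal H_Q$, so $P_Q^2h=P_Qh$. Self-adjointness: for $h,k\in\mathcal H$,
\[
\langle P_Qh,k\rangle=\langle h,k\rangle-\langle h,e\rangle\langle e,k\rangle .
\]
This expression is symmetric in $h$ and $k$ for real scalars. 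In the complex case it is conjugate-symmetric, and that is exactly what $\langle P_Qh,k\rangle=\langle h,P_Qk\rangle$ requires. Hence the identity holds.

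All steps are routine. The only point needing care is the degenerate case $\phi(g)=0$, together with noting that the closedness of $\mathcal H_Q$ makes the projection definition applicable.
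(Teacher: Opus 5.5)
Your proof is correct and follows the same route as the paper: verify that the range is $\mathcal H_Q$, that $P_Q$ is idempotent, and that it is self-adjoint, directly from the definition. There are only two differences. You make the decomposition explicit as $P_Qh=h-\langle h,e\rangle e$, which also settles existence and uniqueness of the splitting, linearity, boundedness and the degenerate case $\phi(g)=0$, all of which the paper takes for granted. You also check self-adjointness from this rank-one formula rather than by splitting both $h$ and $k$ into orthogonal components.
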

\begin{proof}

To prove $P_Q$ is an orthogonal projection, we must verify the three conditions from the definition. For any $h \in \mathcal{H}$, let $P_Q h = h_Q$. By definition of the decomposition, $h_Q \in \mathcal{H}_Q$. Since $\mathcal{H}_Q$ is a subspace, the unique decomposition of $h_Q$ is $h_Q + 0$ (where $0 \in \mathcal{H}_g$). Thus:
$$P_Q(P_Q h) = P_Q(h_Q) = h_Q = P_Q h$$
This satisfies the projection property.

By definition, for any $h$, $P_Q h \in \mathcal{H}_Q$, so $\mathcal{R}(P_Q) \subseteq \mathcal{H}_Q$. Conversely, for any $v \in \mathcal{H}_Q$, its decomposition is $v + 0$, implying $P_Q v = v$. Thus $\mathcal{H}_Q \subseteq \mathcal{R}(P_Q)$. Therefore, $\mathcal{R}(P_Q) = \mathcal{H}_Q$.

We must show $\langle P_Q h, k \rangle = \langle h, P_Q k \rangle$ for arbitrary $h, k \in \mathcal{H}$. Let $h = h_Q + h_g$ and $k = k_Q + k_g$, with $h_Q, k_Q \in \mathcal{H}_Q$ and $h_g, k_g \in \mathcal{H}_g$. Recall that $\mathcal{H}_Q = \mathcal{H}_g^\perp$, meaning $\langle u, v \rangle = 0$ for any $u \in \mathcal{H}_Q, v \in \mathcal{H}_g$. We have for the left side$$\langle P_Q h, k \rangle = \langle h_Q, k_Q + k_g \rangle = \langle h_Q, k_Q \rangle + \langle h_Q, k_g \rangle$$
Since $h_Q \perp k_g$, this reduces to $\langle h_Q, k_Q \rangle$. We have for the right side
$$\langle h, P_Q k \rangle = \langle h_Q + h_g, k_Q \rangle = \langle h_Q, k_Q \rangle + \langle h_g, k_Q \rangle$$
Since $h_g \perp k_Q$, this reduces to $\langle h_Q, k_Q \rangle$. Since $\langle P_Q h, k \rangle = \langle h, P_Q k \rangle = \langle h_Q, k_Q \rangle$, the operator is self-adjoint. By the properties of projections in Hilbert spaces, a self-adjoint projection is an orthogonal projection.    
\end{proof}


\begin{lemma}[Poisson equation for the policy-induced chain]
Let $(\mathcal{X},\mathcal{A},\mathcal{P})$ be a Controlled Markov Process, fix a policy $\pi$, and let
$P^\pi$ denote the policy-induced transition operator on measurable functions:
$(P^\pi f)(x)
:=
\mathbb E_{x'\sim\mathcal P^\pi(\cdot\mid x)}[f(x')]$.
Fix a goal state $g\in\mathcal{X}$, and define the hitting time
$T_g^\pi:=\inf\{t\ge 0 : X_t=g\}$.
Assume $V^\pi(x,g):=\mathbb{E}[T_g^\pi\mid X_0=x]<\infty$
for all states $x$ under consideration.
Then $V^\pi(\cdot,g)$ satisfies the Poisson equation
$(I-P^\pi)V^\pi(\cdot,g)(x)=1$ for 
$x\in\mathcal{X}\setminus\{g\}$
with boundary condition $V^\pi(g,g)=0$.
\end{lemma}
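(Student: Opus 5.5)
The argument is a first-step (one-step) analysis of the hitting time, combined with the strong Markov property. After one transition the problem restarts from the next state.

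\emph{Boundary condition.} At $x=g$ we have $T_g^\pi=0$ almost surely, because the infimum is attained at $t=0$. Hence $V^\pi(g,g)=0$.

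\emph{Interior equation.} Fix $x\neq g$. Since $X_0=x\neq g$, we have $T_g^\pi\ge 1$. On the event $\{X_1=x'\}$, the shifted process $(X_{t+1})_{t\ge 0}$ is again a Markov chain driven by $\mathcal P^\pi$ and started at $x'$; this uses that $\pi$ is stationary, so $\mathcal P^\pi$ is time-homogeneous. This gives the pathwise identity
\[
T_g^\pi = 1 + T_g^\pi\circ\theta_1 .
\]
Here $\theta_1$ is the shift operator, and the identity holds for every trajectory starting outside $g$: if $X_1=g$, the shifted hitting time is $0$, so the identity is consistent with $T_g^\pi=1$.

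Take conditional expectations given $X_1$ and apply the Markov property at time $1$:
\[
\mathbb E[T_g^\pi\mid X_0=x,X_1=x']=1+V^\pi(x',g).
\]
Integrate against $\mathcal P^\pi(dx'\mid x)=\int_{\mathcal A}\mathcal P(dx'\mid x,a)\,\pi(da\mid x)$, which is the tower property over the action and then the next state. This yields
\[
V^\pi(x,g)=1+(P^\pi V^\pi(\cdot,g))(x),
\]
which is exactly $(I-P^\pi)V^\pi(\cdot,g)(x)=1$.

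\emph{The technical point.} The main thing to check is that $P^\pi V^\pi(\cdot,g)$ is well defined, since $V^\pi(\cdot,g)$ need not be bounded. The definition of $P^\pi$ was given for bounded functions, so I extend it to nonnegative measurable functions by monotone convergence. There are two ingredients:
\begin{itemize}
\item \emph{Measurability.} $V^\pi(\cdot,g)$ is measurable in $x$ by standard kernel arguments (Ionescu--Tulcea construction of the path measure).
\item \emph{Integrability.} The identity holds in $[0,\infty]$ because everything is nonnegative. Since $V^\pi(x,g)<\infty$ by assumption, the identity forces $(P^\pi V^\pi)(x)<\infty$, so the equation holds as an equation between finite quantities.
\end{itemize}

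Finally, I note that $V^\pi(g,g)=0$ makes the restricted form consistent with the transient operator used later: $P^\pi V = P_Q^\pi V$ on $\mathcal X\setminus\{g\}$. This is the form needed in Theorem~\ref{thm:approximation_error}.
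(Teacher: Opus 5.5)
Your proposal is correct and follows the same first-step analysis as the paper, which conditions on the first transition to obtain $V^\pi(x,g)=1+(P^\pi V^\pi(\cdot,g))(x)$ for $x\neq g$, and reads off $V^\pi(g,g)=0$ from $T_g^\pi=0$. Your additional care fills in details the paper leaves implicit: the pathwise shift identity $T_g^\pi=1+T_g^\pi\circ\theta_1$, working in $[0,\infty]$ so that finiteness of $V^\pi(x,g)$ forces finiteness of $(P^\pi V^\pi)(x)$, and measurability via Ionescu--Tulcea (only the simple Markov property at time $1$ is needed, not the strong one).
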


\begin{proof}
Fix $x\neq g$. By conditioning on the first transition under $\pi$,
\[
V^\pi(x,g)
=
\mathbb{E}[T_g^\pi\mid X_0=x]
=
1+\mathbb{E}\!\left[V^\pi(X_1,g)\mid X_0=x\right].
\]
By definition of $P^\pi$, this is
\[
V^\pi(x,g)=1+(P^\pi V^\pi(\cdot,g))(x),
\]
hence
\[
(I-P^\pi)V^\pi(\cdot,g)(x)=1,
\qquad x\neq g.
\]
At the boundary, if $x=g$ then $T_g^\pi=0$ by definition, so $V^\pi(g,g)=0$.
\end{proof}

\begin{lemma}[Trajectory-level McDiarmid Concentration]
\label{lem:trajectory_level_mcdiarmid}
Let $\tau^1,\dots,\tau^N$ be independent trajectories, and let
$\mathcal G$ be a class of measurable real-valued functions on trajectories.
Assume that there exists $B<\infty$ such that
$0 \le g(\tau) \le B$
for all $g\in\mathcal G$ and all trajectories $\tau$.
Define
\[
Z(\tau^1,\dots,\tau^N)
:=
\sup_{g\in\mathcal G}
\left|
\mathbb E[g(\tau)]
-
\frac1N\sum_{j=1}^N g(\tau^j)
\right|.
\]
Then, for every $\delta\in(0,1)$, with probability at least $1-\delta$,
\[
Z(\tau^1,\dots,\tau^N)
\le
\mathbb E[Z(\tau^1,\dots,\tau^N)]
+
B\sqrt{\frac{\log(1/\delta)}{2N}}.
\]
\end{lemma}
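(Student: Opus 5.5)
This is the standard bounded-differences argument applied to the supremum of an empirical process, with trajectories as the independent units. I view $Z$ as a function of the $N$ independent arguments $\tau^1,\dots,\tau^N$ and invoke McDiarmid's inequality. This requires checking that $Z$ has bounded differences with constants $c_j = B/N$, and handling measurability of the supremum.

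\textbf{Bounded differences.} Fix an index $j$ and two configurations that differ only in the $j$-th trajectory, $\tau^j$ versus $\tilde\tau^j$. For each fixed $g\in\mathcal G$, the quantity $\bigl|\mathbb E[g(\tau)]-\frac1N\sum_k g(\tau^k)\bigr|$ changes by at most $\frac1N|g(\tau^j)-g(\tilde\tau^j)|\le B/N$. This uses the triangle inequality together with $0\le g\le B$. Since the supremum of functions that each change by at most $B/N$ also changes by at most $B/N$, we get
\[
|Z(\dots,\tau^j,\dots)-Z(\dots,\tilde\tau^j,\dots)|\le B/N.
\]
Independence of the trajectories is the only probabilistic input. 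No within-trajectory independence is needed, since each trajectory is a single coordinate. This is exactly why the lemma is phrased at the trajectory level, accommodating the $m_{\mathrm{tr}}$ correlated tuples per trajectory used later.

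\textbf{Apply McDiarmid.} The one-sided inequality gives
\[
\Pr\bigl(Z-\mathbb E Z\ge t\bigr)\le \exp\!\Bigl(-\frac{2t^2}{\sum_j c_j^2}\Bigr)=\exp\!\Bigl(-\frac{2Nt^2}{B^2}\Bigr).
\]
Setting the right-hand side equal to $\delta$ yields $t=B\sqrt{\log(1/\delta)/(2N)}$, which is the claimed bound holding with probability at least $1-\delta$.

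\textbf{Main obstacle: measurability.} The only delicate point is that $Z$ is a supremum over a possibly uncountable class $\mathcal G$, so measurability of $Z$ and finiteness of $\mathbb E Z$ must be addressed. Finiteness is immediate because $0\le Z\le B$. For measurability, I would assume, as is standard, that $\mathcal G$ is countable or pointwise separable: there is a countable subclass $\mathcal G_0$ such that every $g\in\mathcal G$ is a pointwise limit of elements of $\mathcal G_0$. Under this assumption $Z$ equals the supremum over $\mathcal G_0$ and is therefore measurable. Alternatively, one can interpret $\mathbb E$ and $\Pr$ as outer expectation and outer probability. McDiarmid's inequality still holds in that setting, and the bounded-differences argument above is unchanged.
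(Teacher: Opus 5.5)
Your proposal is correct and matches the paper's proof step for step. Both get bounded differences with $c_j=B/N$ from the per-$g$ change combined with $\bigl|\sup_g A_g-\sup_g B_g\bigr|\le\sup_g|A_g-B_g|$, and then apply one-sided McDiarmid with $t=B\sqrt{\log(1/\delta)/(2N)}$. Your measurability remark (restricting to a countable or pointwise-separable class) is an addition the paper leaves implicit.
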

\begin{proof}
We verify the bounded-differences condition at the level of whole
trajectories. Fix an index $j\in\{1,\dots,N\}$, and let
$\tau^{j\prime}$ be an alternative trajectory. Define
\[
\mathbf \tau
=
(\tau^1,\dots,\tau^j,\dots,\tau^N),
\qquad
\mathbf \tau'
=
(\tau^1,\dots,\tau^{j\prime},\dots,\tau^N),
\]
so that $\mathbf \tau$ and $\mathbf \tau'$ differ only in the $j$-th
trajectory. For any fixed $g\in\mathcal G$,
\[
\left|
\frac1N\sum_{i=1}^N g(\tau^i)
-
\frac1N
\left(
\sum_{i\ne j}g(\tau^i)+g(\tau^{j\prime})
\right)
\right|
=
\frac1N
\left|
g(\tau^j)-g(\tau^{j\prime})
\right|
\le
\frac{B}{N},
\]
because $0\le g(\tau)\le B$ for all trajectories $\tau$. Using the elementary inequality
\[
\left|\sup_{g\in\mathcal G} A_g-\sup_{g\in\mathcal G} B_g\right|
\le
\sup_{g\in\mathcal G}|A_g-B_g|,
\]
and applying the same bound inside the absolute value defining $Z$, we obtain $|Z(\mathbf \tau)-Z(\mathbf \tau')|
\le
\frac{B}{N}$.
Thus $Z$ satisfies the bounded-differences condition with constants $c_j=\frac{B}{N}$ for $j=1,\dots,N$. By McDiarmid's inequality, for every $t>0$,
\[
\mathbb P\left(
Z-\mathbb E[Z]\ge t
\right)
\le
\exp\left(
-\frac{2t^2}{\sum_{j=1}^N c_j^2}
\right).
\]
Since $\sum_{j=1}^N c_j^2
=
N\left(\frac{B}{N}\right)^2
=
\frac{B^2}{N}$,
we have $\mathbb P\left(
Z-\mathbb E[Z]\ge t
\right)
\le
\exp\left(
-\frac{2Nt^2}{B^2}
\right)$. Setting $t=
B\sqrt{\frac{\log(1/\delta)}{2N}}$
gives the desired result.
\end{proof}

\begin{lemma}[Talagrand Contraction for Bounded Squared Losses]
\label{lem:talagrand_squared_loss_contraction}
Let $\mathcal F$ be a class of real-valued functions on a domain $\mathcal Z$,
and fix a sample $(z_i,y_i)_{i=1}^n$. Assume that there exist constants
$B,H<\infty$ such that $|f(z_i)|\le B
\qquad\text{for all } f\in\mathcal F,\ i\in\{1,\dots,n\}$,
and $|y_i|\le H$ for all $i\in\{1,\dots,n\}$.
Then
\[
\mathbb E_\sigma
\left[
\sup_{f\in\mathcal F}
\frac1n\sum_{i=1}^n
\sigma_i (f(z_i)-y_i)^2
\right]
\le
2(B+H)
\,
\mathbb E_\sigma
\left[
\sup_{f\in\mathcal F}
\frac1n\sum_{i=1}^n
\sigma_i f(z_i)
\right],
\]
where $\sigma_1,\dots,\sigma_n$ are independent Rademacher random variables.
\end{lemma}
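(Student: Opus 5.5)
The plan is to reduce the statement to the classical Ledoux--Talagrand contraction inequality, in its version without absolute values where a different Lipschitz function may act on each coordinate. For each $i$ define $\psi_i(t):=(t-y_i)^2$. The squared-loss Rademacher average is then $\mathbb E_\sigma\sup_{f}\frac1n\sum_i\sigma_i\psi_i(f(z_i))$. The only property of $\psi_i$ needed is a Lipschitz bound on the range of values that actually occur.

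\textbf{Step 1 (Lipschitz constant).} For $s,t\in[-B,B]$ and $|y_i|\le H$ we have
\[
|\psi_i(s)-\psi_i(t)| = |s-t|\,|s+t-2y_i| \le 2(B+H)\,|s-t|.
\]
So $\psi_i$ is $L$-Lipschitz on $[-B,B]$ with $L:=2(B+H)$. Since the contraction lemma is usually stated for functions Lipschitz on all of $\mathbb R$, I will replace $\psi_i$ by
\[
\tilde\psi_i(t):=\psi_i(\mathrm{clip}_{[-B,B]}(t)).
\]
This function is $L$-Lipschitz on $\mathbb R$, because clipping is $1$-Lipschitz. It agrees with $\psi_i$ at every value $f(z_i)$ by the assumption $|f(z_i)|\le B$. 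Dividing by $L$ gives $1$-Lipschitz functions $\tilde\psi_i/L$. No condition $\psi_i(0)=0$ is needed, because there is no absolute value inside the supremum and constant offsets vanish under $\mathbb E\sigma_i=0$.

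\textbf{Step 2 (peeling one coordinate).} I will prove the contraction directly by induction on coordinates, showing for each $k$ that
\[
\mathbb E_\sigma\sup_f\Bigl[\sigma_k\tilde\psi_k(f(z_k))+R_k(f)\Bigr]\le \mathbb E_\sigma\sup_f\Bigl[L\sigma_k f(z_k)+R_k(f)\Bigr],
\]
where $R_k(f)$ collects the remaining terms, some already contracted and some not. Condition on $\sigma_j$ for $j\ne k$ and average over $\sigma_k=\pm1$. The left side becomes
\[
\tfrac12\sup_{f,f'}\Bigl[\tilde\psi_k(f(z_k))-\tilde\psi_k(f'(z_k))+R_k(f)+R_k(f')\Bigr].
\]
By the Lipschitz bound the difference term is at most $L|f(z_k)-f'(z_k)|$. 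The expression is symmetric under swapping $f\leftrightarrow f'$, so the absolute value can be dropped and replaced by $L(f(z_k)-f'(z_k))$. This equals the $\sigma_k$-average of the right side. Iterating over $k=1,\dots,n$ and dividing by $n$ yields the claim with constant $L=2(B+H)$.

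\textbf{Main obstacle.} The only delicate point is the suprema, which need not be attained and may raise measurability issues when $\mathcal F$ is infinite. I will handle this by working with $\varepsilon$-approximate maximizers $f_\varepsilon,f'_\varepsilon$ in the peeling step and letting $\varepsilon\to0$. Alternatively, one can first prove the inequality for finite subclasses of $\mathcal F$ and pass to the supremum over finite subclasses by monotone convergence. Everything is finite-dimensional in $\sigma$ since the sample is fixed, so no further measure-theoretic issues arise.
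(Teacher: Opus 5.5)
Your proposal is correct and takes the same route as the paper: you bound the Lipschitz constant of $t\mapsto(t-y_i)^2$ on $[-B,B]$ by $2(B+H)$ and then apply the Ledoux--Talagrand contraction principle without absolute values. The differences are cosmetic. The paper centers $\psi_i(u)=(u-y_i)^2-y_i^2$ so that $\psi_i(0)=0$, and cites the principle as a black box for functions Lipschitz ``on the relevant range.'' You instead prove the contraction inline by the standard one-coordinate peeling argument, which needs no centering, and you make the range restriction explicit by clipping.
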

\begin{proof}
For each $i\in\{1,\dots,n\}$, define $\psi_i(u):=(u-y_i)^2-y_i^2$. Then $\psi_i(0)=0$. Moreover, on the interval $[-B,B]$, it holds that $|\psi_i'(u)|
=
2|u-y_i|
\le
2(B+H)$, because $|u|\le B$ and $|y_i|\le H$. Hence each $\psi_i$ is
$2(B+H)$-Lipschitz on the range of the functions in $\mathcal F$. Now observe that $(f(z_i)-y_i)^2 = \psi_i(f(z_i))+y_i^2$.
Therefore,
\[
\frac1n\sum_{i=1}^n
\sigma_i (f(z_i)-y_i)^2
=
\frac1n\sum_{i=1}^n
\sigma_i \psi_i(f(z_i))
+
\frac1n\sum_{i=1}^n
\sigma_i y_i^2 .
\]
The second term is independent of $f$, so
\[
\sup_{f\in\mathcal F}
\frac1n\sum_{i=1}^n
\sigma_i (f(z_i)-y_i)^2
=
\sup_{f\in\mathcal F}
\frac1n\sum_{i=1}^n
\sigma_i \psi_i(f(z_i))
+
\frac1n\sum_{i=1}^n
\sigma_i y_i^2 .
\]
Taking expectation over the Rademacher signs, the last term vanishes:
\[
\mathbb E_\sigma
\left[
\frac1n\sum_{i=1}^n
\sigma_i y_i^2
\right]
=0.
\]
Hence
\[
\mathbb E_\sigma
\left[
\sup_{f\in\mathcal F}
\frac1n\sum_{i=1}^n
\sigma_i (f(z_i)-y_i)^2
\right]
=
\mathbb E_\sigma
\left[
\sup_{f\in\mathcal F}
\frac1n\sum_{i=1}^n
\sigma_i \psi_i(f(z_i))
\right].
\]

By the Talagrand contraction principle for Rademacher averages, since each
$\psi_i$ satisfies $\psi_i(0)=0$ and is $2(B+H)$-Lipschitz on the relevant
range,
\[
\mathbb E_\sigma
\left[
\sup_{f\in\mathcal F}
\frac1n\sum_{i=1}^n
\sigma_i \psi_i(f(z_i))
\right]
\le
2(B+H)
\,
\mathbb E_\sigma
\left[
\sup_{f\in\mathcal F}
\frac1n\sum_{i=1}^n
\sigma_i f(z_i)
\right].
\]
Combining the previous displays proves the claim.
\end{proof}
\input{proofs}


\newpage
\input{algorithms/IEL}

\input{algorithms/InfoNCE}
\input{algorithms/three_phase_training}

\input{algorithms/asymmetric_graph_planning}

\newpage
\input{tables/main_configuration}
\input{tables/many_configurations_results}






\end{document}

%% file: algorithm_section.tex
\subsection{The Algorithm: Isomorphic Embedding Learning (IEL)}

We next introduce a new foundation policy learning algorithm building on the theoretical framework developed earlier. Our algorithm, named as \emph{Isomorphic Embedding Learning (IEL)}, comprises the four components explained below. The related pseudocode can be found in Appendix \ref{sec:algorithm_pseudocode}.

(i) {\bf Task identifier learning:} We learn a mapping $\omega_\psi(\cdot)$ that transforms a target state $g$ into a task identifier $\omega_g^\pi$, which defines the directional axis of the latent space for that specific navigation objective. This is essential for implementing the directional asymmetry in the hitting-time calculation $V^\pi(x, g) = \langle \phi(g) - \phi(x), \omega_g^\pi \rangle_{\mathcal{H}}$ established in Theorem \ref{thm:existence_of_linear_hitting_time_representation}. We adopt a contrastive learning approach to implement this step by computing an InfoNCE loss that contrasts the task identifier $\omega_\psi(g_i)$ against a corresponding augmented positive view $\omega_\psi(\tilde g_i)$ and a set of negative candidates within a batch. Specifically, for a batch of goals $\{g_i\}_{i=1}^N$ sampled from the replay buffer, where $\tilde g_i$ denotes the augmented view paired with $g_i$ as in Algorithm~\ref{alg:InfoNCE}, we minimize:
\begin{align*}
\mathcal{L}_{\text{NCE}}(\psi)
=
-\frac{1}{N}\sum_{i=1}^N
\left[
\frac{\omega_\psi(g_i)^\top \omega_\psi(\tilde g_i)}{\tau}
-
\log \sum_{j=1}^N
\exp\left(
\frac{\omega_\psi(g_i)^\top \omega_\psi(\tilde g_j)}{\tau}
\right)
\right].
\end{align*}
The denominator contrasts the paired positive view against other batch
candidates, encouraging augmented views of the same state to have similar identifiers while separating distinct states. This stage should be interpreted as learning an injective task parameterization, not as directly solving the policy-specific Poisson
equation for the exact representers $\omega_g^\pi$ in Theorem~2.2. In the finite-sample analysis, the learned task encoder is treated as fixed and denoted by $\bar \omega_\psi$. Any mismatch between this learned identifier and an ideal hitting-time readout is absorbed into the approximation/oracle-competitiveness terms of the regression analysis.

(ii) {\bf Hitting time regression:} We sample an intermediate point from a trajectory. We then learn to predict the hitting-time progress supported by the behavior data. Following the lines of IQL \citep{kostrikov2022iql}, we perform expectile regression to
bias the learned geometry toward shorter in-distribution distances from
observations. To keep the empirical hitting-time target bounded and stable, we implement discounted hitting times in line with common practice in sufficient-MDP-based approaches.

(iii) {\bf Temporal difference learning.} The Monte Carlo approach used in the previous component is effective for learning to predict hitting times in an unbiased manner. However, collecting long-horizon hitting-time observations is difficult: long-range data is rare, often following a power-law tail, and such targets become increasingly hard to predict accurately. Instead, we restrict the interim goal horizon and support the stitching capabilities of our approach via TD learning. The detrimental effect of the limited horizon of trajectories on stitching capabilities has been observed earlier \citep{myers2025tmd}. A direct Bellman backup implementation of the hitting time machinery is susceptible to numerical instabilities. To account for them, we first establish an analytical link between our hitting time formula and HILP, noting that HILP's $L_2$ norm formulation appears as an upper bound to ours. We bridge these two models by grounding the latent displacement in the projection of the embedding difference onto the canonical unit vector. This projection represents the directed progress toward the goal state. By normalizing the task identifier we get
\begin{align*}
    \langle \phi(g) - \phi(x), \omega_\psi(g) / \|\omega_\psi(g)\| \rangle = \|\phi(g) - \phi(x)\| \underbrace{ \frac{\langle \phi(g) - \phi(x), \omega_\psi(g) \rangle}{\|\phi(g) - \phi(x)\| \cdot \|\omega_\psi(g)\|} }_{=\cos(\xi_{x\rightarrow g, g})}
\end{align*}
where $\xi_{x\rightarrow g, g}$ is the angle between the embedding vectors of $s$ and $g$. We further employ a  transformation  $V^\pi(x,g) \approx \|\phi(g) - \phi(x)\|  \exp \left( \beta (1 - \cos(\xi_{x\rightarrow g, g})) \right)$ which penalizes both the Euclidean distance and the angular misalignment relative to the goal, introducing a directional bias that symmetric HILP objective lacks. The exponential factor is monotone in $\cos(\xi_{x\rightarrow g, g})$: it equals one when $\phi(g) - \phi(s)$ is aligned with $\omega_\psi(g)$ and grows smoothly as alignment degrades, so it acts as a soft directional constraint that augments the underlying Euclidean term. We adopt this transformation as a smoother proxy for the directed hitting-time score. Notably, $\beta=0$ recovers the HILP objective.

(iv) {\bf Graph search for multi-stage planning.} We exploit the learned asymmetric geometry by constructing a directed graph $G=(V,E)$. The vertex set $V\subset \{\phi(s) | s \in \mathcal{B} \}$ is extracted from the dataset using a greedy approximation for Determinantal Point Process inference \cite{kulesza2012determinantal}, ensuring diverse coverage of the state manifold. The edge set $E$ is initialized via a symmetrized Minimum Spanning Tree (MST) based on $d_{sym}=d(s,t,g)+d(t,s,g)$, ensuring strong connectivity. 
This backbone is augmented with $k$-Nearest-Neighbors ($k$-NN) for each vertex. While the MST backbone is determined by $d_{sym}$, all edge costs retain their asymmetric costs defined in \ref{eq:edge_cost}, leveraging the learned quasimetric isomorphism of the state space; induced graph distances respect the triangle inequality:
\begin{align}
d(s,t, g) :=\| \phi(t) - \phi(s)\|_2 \cdot \exp ( \beta ( 1 - \cos(\xi_{s\rightarrow t, g}))).
\label{eq:edge_cost}
\end{align}
Navigation is performed by computing the Single Source Shortest Path (SSSP) on the reversed graph $\bar{G}$ from the goal vertex $\phi(g)$. This yields a global navigation function. At evaluation time, the agent identifies the nearest vertex $s\in V$ and pursues the subsequent node on the optimal path toward $g$. See Algorithm \ref{alg:asymmetric_graph_planning} for the full pseudocode. Note that a symmetric variant of the asymmetric graph planning algorithm can be derived by using a symmetric distance function (e.g., $d(s,t)=\|\phi(t)-\phi(s)\|_2$) and undirected edges for the graph. This reduces the computational overhead significantly, as the graph and shortest path can be precomputed before evaluation, as the distance function is goal-agnostic.

%% file: experiments.tex
\section{Experiments}

We evaluate IEL on six offline goal-conditioned RL benchmark environments from two different suites taken from the D4RL data set \citep{fu2020d4rl}. HILP serves as our primary baseline, being the only prior art designed for foundation policy training and multi-stage planning. We strictly adhere to the planning experimental setup and hyperparameters reported in HILP (Table \ref{table:main_configuration}) to ensure a direct comparison. As practiced in HILP experiments, we do not consider approaches such as CRL, QRL, TMD, and GAS as fair baselines since their pipelines are tuned for goal-supervised training rather than foundation policy learning. We compare against GC-IQL and GC-CQL as the only representatives of zero-shot RL for which HILP reported complete results.  To isolate the effect of our isomorphic embeddings, we evaluate both HILP and IEL using our graph-based planning approach. Specifically, we compare IEL under \emph{symmetric} ($\beta=0$) and \emph{asymmetric} ($\beta=0.1$) distance scores. HILP+Sym is the closest GAS-equivalent baseline for our problem setting. Like GAS, it builds on HILP-style representations, but omits the hierarchical supervised subgoal-reaching objective (see Eqs.~13--14 of \cite{baek2025gas}). Instead, HILP+Sym isolates the graph-planning ingredient relevant to foundation policy training by applying symmetrized graph construction and shortest-path subgoal selection on top of HILP. Each algorithm navigates the shortest path extracted from the planning graph using its respective distance metric. Thus, the symmetric variant of IEL still benefits from directional identifiers during the planning process.  We report our main empirical results in Table \ref{table:main_results}, which reveals two critical outcomes: (i) IEL leverages graph-based planning more effectively than HILP, with its symmetric graph performance surpassing recursive midpoint planning by a significant margin in five environments. (ii) asymmetric planning yields further significant gains for IEL in four environments and causes minor degradation in the remaining two environments. These results validate our core hypothesis: \emph{directional embedding learning is essential for effective multi-stage planning in foundation policies.}  IEL's performance is mildly sensitive to its hyperparameters $H_{\max}$, $\beta$, and $\sigma$. See Appendix \ref{sec:experiment_details} for a discussion and further experiment details. IEL's offline training-time computational cost is ignorably different from HILP. We set the coreset size to be $8192$, which is the largest size that maintains a wall-clock time proportional to the \emph{Rec-Mid} evaluation period. In this setting, \emph{Asym-Graph} runs at a comparable speed to \emph{Rec-Mid}, while \emph{Sym-Graph} is approximately three times faster due to graph reusability.

\input{tables/main_results}

%% file: tables/main_results.tex
\begin{table}[t!]
\centering
\small
\setlength{\tabcolsep}{3pt}
\caption{
Mean normalized return $\pm$ standard deviation for offline zero-shot goal-conditioned RL. All experiments are based on eight seeds trained for a total of one million gradient steps. GC-IQL and GC-CQL results are from \cite{park2024hilp}. 
\emph{Rec-Mid} is HILP's planning method that recursively selects midpoint between current state and goal for three recursions. 
\emph{Sym-Graph} stands for graph planning using symmetric Euclidean distances and \emph{Asym-Graph} for graph planning using asymmetric planning.}
\label{table:main_results}
\resizebox{\textwidth}{!}{
\begin{tabular}{lccccccc}
\toprule
& \multicolumn{2}{c}{ZSRL Algorithms} & \multicolumn{2}{c}{HILP \cite{park2024hilp}} & \multicolumn{3}{c}{IEL (Ours)} \\
\cmidrule(lr){2-3} \cmidrule(lr){4-5} \cmidrule(lr){6-8}
Dataset & GC-IQL & GC-CQL & Rec-Mid & Sym-Graph & Rec-Mid & Sym-Graph & Asym-Graph \\
\midrule
antmaze-large-diverse & 56.0 \std{6.0}  & 36.2 \std{19.0} & 68.0 \std{6.3} & 60.5 \std{18.8} & 63.8 \std{10.8} & 55.0 \std{20.6} & \textbf{71.8 \std{6.5}} \\
antmaze-large-play     & 56.0 \std{25.7} & 32.0 \std{25.8} & 54.5 \std{11.9} & 45.5 \std{8.3} & 59.2 \std{7.2} & \textbf{63.0 \std{14.9}} & 62.8 \std{5.9} \\
antmaze-ultra-diverse & 40.8 \std{11.1} & 14.2 \std{13.5} & 39.8 \std{22.1} & 68.8 \std{23.0} & 52.5 \std{17.3} & 62.5 \std{18.7} & \textbf{79.0 \std{4.5}} \\
antmaze-ultra-play    & 41.8 \std{9.0}  & 16.5 \std{14.3} & 56.0 \std{18.9} & 60.2 \std{10.4} & 55.8 \std{7.1} & 65.2 \std{13.5} & \textbf{73.8 \std{6.2}} \\
kitchen-partial       & 56.4 \std{8.4}  & 31.2 \std{16.6} & 55.4 \std{9.2} & 53.8 \std{8.1} & 48.6 \std{12.6} & \textbf{60.4 \std{5.5}} & \textbf{60.4 \std{3.5}} \\
kitchen-mixed         & 59.5 \std{3.8}  & 15.7 \std{17.6} & 53.7 \std{3.5} & 49.0 \std{4.6} & 52.3 \std{7.8} & \textbf{57.8 \std{5.1}} & 57.2 \std{3.8} \\
\bottomrule
\end{tabular}
}
\end{table}

%% file: conclusion.tex
\section{Open Questions and Scientific Impact}\label{sec:open_questions}

The theoretical framework we built to set a foundation for the IEL algorithm unfolds new open questions. The range condition defined in Theorems \ref{thm:existence_of_linear_hitting_time_representation} and \ref{thm:identifiability_of_linear_hitting_time_representation} requires rigorous verification across non-linear environments to ensure universal representability. While identifiability is established up to a linear isomorphism, the impact of state-space discretization on directed geometry in high-dimensional settings remains unquantified. Consistency analysis reveals that global hitting-time error scales by the factor $1/(1-\rho(\mathcal{T}_Q^\pi))$, creating high sensitivity to the transient spectral radius. The $\epsilon$-Sufficient CMP assumption (Definition \ref{def:cmp}) restricts the framework to environments where expected feature transitions are sufficiently linearizable. This formulation fails to cover dynamics with high-order stochasticity or non-Markovian dependencies that exceed the latent Hilbert space span. On the practical side, the upper bound used in the TD learning part can be tightened so that the directionality of the learned embeddings can be strengthened. This work provides the mathematical foundation to transition VLA research from its current reliance on imitation learning toward reward-based policy learning. By replacing symmetric distance heuristics with a directed temporal geometry, our framework enables agents to navigate complex, irreversible environments with provable efficiency. The outcome could be versatile general-purpose robots with enhanced autonomy and maintained trustworthiness, supported by the rigorous operator-theoretic guarantees of our framework.

%% file: proofs.tex
\input{unsupervised-policy-training}

\section{Problem span}
\label{app:problem_span}

We next clarify when the learned representation can be reused beyond the specific hitting-time regression problem used to train it. A suitable Hilbert representation supports goal-conditioned hitting-time geometry through displacement readouts of the form $V^\pi(x,g) = \langle \phi(g)-\phi(x),\omega_g^\pi\rangle_{\mathcal H}$. A natural question is therefore: \emph{which other control tasks can be solved using the same representation, rather than relearning the geometry?} We refer to this family of compatible tasks as the \emph{problem span} of the representation. We aim to identify sufficient conditions under which the same latent geometry can support additional rewards and value functions. We distinguish three notions: (i) \emph{reward representability}: whether the immediate reward can be read out from the learned latent geometry; (ii) \emph{policy-evaluation closure}: whether the value function of a fixed policy remains in the latent linear span; (iii) \emph{optimal-control closure}: whether the Bellman optimality operator preserves the latent value class. The first two follow from natural linear-operator assumptions. The third depends on whether  maximization over actions preserves linearity.

\subsection{Latent reward span}

We first define the class of immediate rewards that are directly represented by the learned geometry.

\begin{definition}[Latent bilinear reward span]
\label{def:latent_bilinear_reward_span} Given a representation $(\mathcal H,\phi,\{\mathcal K_a\}_{a\in\mathcal A})$, a goal-conditioned reward $r:\mathcal X\times\mathcal A\times\mathcal X\to\mathbb R$ belongs to the latent bilinear reward span if, for every action $a\in\mathcal A$, there exists a bounded linear operator
$\mathcal R_a\in\mathcal L(\mathcal H)$ such that $
r(x,a,g) = \langle \phi(x),\mathcal R_a\phi(g)\rangle_{\mathcal H}$.
\end{definition}

This definition says that the reward can be computed from an interaction between the state embedding $\phi(x)$ and the goal embedding $\phi(g)$, possibly transformed by an action-dependent task operator $\mathcal R_a$. It is a representability condition on the one-step reward only. By itself it does not yet imply that the corresponding optimal value function is bilinear. Several common goal-conditioned reward families fit this template.

{\bf Kernel goal similarity.} If $\mathcal R_a=\alpha_a I$ for scalars $\alpha_a$, then $r(x,a,g)=\alpha_a\langle \phi(x),\phi(g)\rangle_{\mathcal H}$. This is a smooth goal-similarity reward. It should be interpreted as a soft similarity objective, not as an exact hitting-time objective. Exact hitting-time objectives are governed by the Poisson/range-condition results in the main paper.

{\bf Weighted or anisotropic goal similarity.} If $\mathcal R_a$ is positive semidefinite or low rank, then the reward compares $x$ and $g$ only along selected latent directions: $r(x,a,g)=\langle \phi(x),\mathcal R_a\phi(g)\rangle_{\mathcal H}$.
This covers tasks where different actions or task modes emphasize different aspects of the goal, such as position, object identity, safety-relevant features, or other latent factors.

{\bf Region and fuzzy goals.} A goal need not be a single state. Let $\rho_g$ be a distribution over acceptable goal states and define its latent mean embedding
$\mu_g:=\mathbb E_{Y\sim\rho_g}[\phi(Y)]$. Then rewards of the form $r(x,a,g) = \langle \phi(x),\mathcal R_a\mu_g\rangle_{\mathcal H}$ are also in the problem span. This covers fuzzy goals, goal regions, and safety corridors, provided that the relevant region can be summarized by its latent mean embedding.

{\bf Compositional goals.} If a downstream task is described by a finite combination of goals $g_1,\ldots,g_m$, one can define $\mu:=\sum_{j=1}^m \alpha_j\phi(g_j)$ and use $r(x,a,\mu)=\langle \phi(x),\mathcal R_a\mu\rangle_{\mathcal H}$. Thus the representation supports linear composition of goal specifications in
latent space.

\subsection{Fixed-policy evaluation span}

We next show that latent reward representability becomes a value-function representation result for fixed-policy evaluation, provided that the transition dynamics close linearly in the same representation.

\begin{proposition}[Fixed-policy evaluation span]
\label{prop:fixed_policy_problem_span}
Let $(\mathcal X,\mathcal A,\mathcal P)$ be a sufficient CMP with representation
$(\mathcal H,\phi,\{\mathcal K_a\}_{a\in\mathcal A})$, and assume
$\sup_{x\in\mathcal X}\|\phi(x)\|_{\mathcal H}<\infty$. Fix a policy $\pi$ and a
discount factor $\gamma\in(0,1)$. Assume that the policy-induced latent
transition operator $\mathcal K_\pi$ is bounded and satisfies
$\mathbb E_{x'\sim \mathcal P^\pi(\cdot\mid x)}[\phi(x')]
=
\mathcal K_\pi\phi(x)$.
Assume also that the policy-induced reward is latent-linear: there exists a
bounded linear operator $\mathcal R_\pi\in\mathcal L(\mathcal H)$ such that $r_\pi(x,g)
:=
\mathbb E_{a\sim\pi(\cdot\mid x)}[r(x,a,g)]
=
\langle \phi(x),\mathcal R_\pi\phi(g)\rangle_{\mathcal H}$.
If $I-\gamma\mathcal K_\pi^*$ is invertible with bounded inverse, then the
discounted value function of $\pi$ admits the bilinear representation $V^\pi(x,g)
=
\langle \phi(x),\mathcal S_\pi\phi(g)\rangle_{\mathcal H}$,
where $\mathcal S_\pi
:=
(I-\gamma\mathcal K_\pi^*)^{-1}\mathcal R_\pi$.
\end{proposition}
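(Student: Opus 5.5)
The plan is a ``guess and verify'' argument. We write down the candidate $W(x,g):=\langle \phi(x),\mathcal S_\pi\phi(g)\rangle_{\mathcal H}$ and show that it satisfies the policy-evaluation Bellman equation. Then uniqueness of bounded solutions, via a contraction argument, identifies it with $V^\pi$. We take the standard convention $V^\pi(x,g)=\mathbb E\bigl[\sum_{t\ge 0}\gamma^t r_\pi(x_t,g)\mid x_0=x\bigr]$. With this convention, $V^\pi(\cdot,g)$ is the unique bounded fixed point of $f\mapsto r_\pi(\cdot,g)+\gamma P^\pi f$, where $P^\pi$ is the policy-induced scalar Markov operator from the appendix.

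\textbf{Step 1: boundedness.} Set $C_\phi:=\sup_x\|\phi(x)\|_{\mathcal H}<\infty$. Cauchy--Schwarz gives $|r_\pi(x,g)|\le \|\mathcal R_\pi\|C_\phi^2$, so the discounted series converges absolutely and $\|V^\pi(\cdot,g)\|_\infty\le \|\mathcal R_\pi\|C_\phi^2/(1-\gamma)$. Because $(I-\gamma\mathcal K_\pi^*)^{-1}$ is bounded, $\mathcal S_\pi$ is a bounded operator. Hence $|W(x,g)|\le \|\mathcal S_\pi\|C_\phi^2$, and $W(\cdot,g)$ is a bounded measurable function for each $g$.

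\textbf{Step 2: $W$ satisfies the Bellman equation.} Fix $x,g$. Since $\phi$ is bounded, $\mathbb E_{x'}[\phi(x')]$ exists as a Bochner integral. The map $h\mapsto\langle h,\mathcal S_\pi\phi(g)\rangle$ is a continuous linear functional, so it commutes with this expectation. Using the closure hypothesis and then the adjoint,
\[
(P^\pi W(\cdot,g))(x)=\bigl\langle \mathbb E_{x'\sim\mathcal P^\pi(\cdot\mid x)}[\phi(x')],\mathcal S_\pi\phi(g)\bigr\rangle=\langle \mathcal K_\pi\phi(x),\mathcal S_\pi\phi(g)\rangle=\langle \phi(x),\mathcal K_\pi^*\mathcal S_\pi\phi(g)\rangle .
\]
Therefore
\[
r_\pi(x,g)+\gamma (P^\pi W(\cdot,g))(x)=\bigl\langle \phi(x),(\mathcal R_\pi+\gamma\mathcal K_\pi^*\mathcal S_\pi)\phi(g)\bigr\rangle .
\]
The definition $\mathcal S_\pi=(I-\gamma\mathcal K_\pi^*)^{-1}\mathcal R_\pi$ is equivalent to $\mathcal S_\pi=\mathcal R_\pi+\gamma\mathcal K_\pi^*\mathcal S_\pi$, so the right-hand side equals $W(x,g)$.

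\textbf{Step 3: uniqueness.} On the Banach space of bounded measurable functions with the sup norm, $f\mapsto r_\pi(\cdot,g)+\gamma P^\pi f$ is a $\gamma$-contraction, because $P^\pi$ is a Markov operator with $\|P^\pi\|_\infty\le 1$. Both $V^\pi(\cdot,g)$ and $W(\cdot,g)$ are bounded fixed points of this map. By Banach's fixed-point theorem they coincide, which gives $V^\pi(x,g)=\langle\phi(x),\mathcal S_\pi\phi(g)\rangle_{\mathcal H}$.

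\textbf{Main obstacle.} The only delicate point is the interchange of the expectation with the inner product in Step 2, together with the boundedness needed in Step 3 to invoke uniqueness. Both follow from $\sup_x\|\phi(x)\|<\infty$ and the boundedness of $\mathcal R_\pi$ and of $(I-\gamma\mathcal K_\pi^*)^{-1}$. One should also note that invertibility is only assumed, not derived: $\|\mathcal K_\pi\|$ need not be below $1/\gamma$, so a Neumann-series argument is unavailable.
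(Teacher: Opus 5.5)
Your proposal is correct and follows essentially the same argument as the paper. The paper likewise defines the candidate via $\omega_g^\pi=(I-\gamma\mathcal K_\pi^*)^{-1}\mathcal R_\pi\phi(g)$ and verifies the discounted Bellman equation using the identity $\omega_g^\pi=\mathcal R_\pi\phi(g)+\gamma\mathcal K_\pi^*\omega_g^\pi$ together with the closure hypothesis; it then concludes by uniqueness of the bounded fixed point of the $\gamma$-contraction, and your explicit check that $V^\pi(\cdot,g)$ is itself bounded fills in a detail the paper leaves implicit.
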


\begin{proof}
Fix a goal $g$ and define $\omega_g^\pi
:=
(I-\gamma\mathcal K_\pi^*)^{-1}\mathcal R_\pi\phi(g)$.
Let $\widetilde V^\pi(x,g)
:=
\langle \phi(x),\omega_g^\pi\rangle_{\mathcal H}$.
By definition we get $\omega_g^\pi
=
\mathcal R_\pi\phi(g)+\gamma\mathcal K_\pi^*\omega_g^\pi$. Therefore, for every $x\in\mathcal X$,
\[
\begin{aligned}
\widetilde V^\pi(x,g)
&=
\langle \phi(x),\mathcal R_\pi\phi(g)\rangle_{\mathcal H}
+
\gamma
\langle \phi(x),\mathcal K_\pi^*\omega_g^\pi\rangle_{\mathcal H} \\
&=
r_\pi(x,g)
+
\gamma
\langle \mathcal K_\pi\phi(x),\omega_g^\pi\rangle_{\mathcal H} \\
&=
r_\pi(x,g)
+
\gamma
\mathbb E_{x'\sim \mathcal P^\pi(\cdot\mid x)}
[
\langle \phi(x'),\omega_g^\pi\rangle_{\mathcal H}
] \\
&=
r_\pi(x,g)
+
\gamma
\mathbb E_{x'\sim \mathcal P^\pi(\cdot\mid x)}
[
\widetilde V^\pi(x',g)
].
\end{aligned}
\]
Thus $\widetilde V^\pi(\cdot,g)$ satisfies the discounted Bellman equation for policy evaluation. Since $\phi$ is bounded and $\mathcal R_\pi$ and $(I-\gamma\mathcal K_\pi^*)^{-1}$ are bounded, $\widetilde V^\pi(\cdot,g)$ is bounded. The discounted Bellman operator is a $\gamma$-contraction on bounded measurable functions, so its bounded fixed point is unique. Hence $\widetilde V^\pi(x,g)=V^\pi(x,g)$. Defining $\mathcal S_\pi:=(I-\gamma\mathcal K_\pi^*)^{-1}\mathcal R_\pi$
gives $V^\pi(x,g)
=
\langle \phi(x),\mathcal S_\pi\phi(g)\rangle_{\mathcal H}$.
\end{proof}

The assumption that $r_\pi$ is represented by a single operator
$\mathcal R_\pi$ is structurally guaranteed in some simple cases, for example when the policy uses a state-independent mixture over action-specific reward operators. In general, however, action probabilities may depend on $x$, so this assumption should be understood as a policy-level reward closure condition.

\begin{proposition}[Action-value span]
\label{prop:action_value_problem_span}
Under the assumptions of Proposition~\ref{prop:fixed_policy_problem_span},
assume additionally that the action-specific rewards and transitions satisfy $r(x,a,g)
=
\langle \phi(x),\mathcal R_a\phi(g)\rangle_{\mathcal H}$ and $\mathbb E_{x'\sim \mathcal P(\cdot\mid x,a)}[\phi(x')]
=
\mathcal K_a\phi(x)$.
Then the action-value function of $\pi$ admits the representation
\[
Q^\pi(x,a,g)
=
\left\langle
\phi(x),
\left(\mathcal R_a+\gamma\mathcal K_a^*\mathcal S_\pi\right)\phi(g)
\right\rangle_{\mathcal H}.
\]
\end{proposition}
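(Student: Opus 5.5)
The plan is to verify the Bellman equation for the action-value function directly, using the value representation already established in Proposition~\ref{prop:fixed_policy_problem_span}. Fix a goal $g$. For a fixed policy $\pi$ under discounted evaluation, the action-value function is defined by one-step lookahead:
\[
Q^\pi(x,a,g)=r(x,a,g)+\gamma\,\mathbb E_{x'\sim\mathcal P(\cdot\mid x,a)}\bigl[V^\pi(x',g)\bigr].
\]
This is the standard identity obtained by conditioning on the first transition from $(x,a)$ and then following $\pi$. No new fixed-point or uniqueness argument is therefore needed; all of that work is already absorbed into the representation of $V^\pi$.

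The steps, in order, are as follows.

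\emph{Reward term.} By the action-specific reward assumption, $r(x,a,g)=\langle\phi(x),\mathcal R_a\phi(g)\rangle_{\mathcal H}$.

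\emph{Value term.} By Proposition~\ref{prop:fixed_policy_problem_span}, $V^\pi(x',g)=\langle\phi(x'),\mathcal S_\pi\phi(g)\rangle_{\mathcal H}$ with $\mathcal S_\pi=(I-\gamma\mathcal K_\pi^*)^{-1}\mathcal R_\pi$.

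\emph{Exchanging expectation and inner product.} The vector $\mathcal S_\pi\phi(g)$ is fixed, and the map $h\mapsto\langle h,\mathcal S_\pi\phi(g)\rangle_{\mathcal H}$ is a bounded linear functional. Since $\sup_x\|\phi(x)\|_{\mathcal H}<\infty$, $\phi(x')$ is Bochner integrable. We can therefore pull the expectation inside:
\[
\mathbb E_{x'\sim\mathcal P(\cdot\mid x,a)}\langle\phi(x'),\mathcal S_\pi\phi(g)\rangle_{\mathcal H}
=\langle\mathbb E[\phi(x')],\mathcal S_\pi\phi(g)\rangle_{\mathcal H}.
\]

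\emph{Transition term.} The action-specific sufficiency assumption gives $\mathbb E[\phi(x')]=\mathcal K_a\phi(x)$. Moving $\mathcal K_a$ across by its adjoint yields
\[
\langle\mathcal K_a\phi(x),\mathcal S_\pi\phi(g)\rangle_{\mathcal H}=\langle\phi(x),\mathcal K_a^*\mathcal S_\pi\phi(g)\rangle_{\mathcal H}.
\]

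\emph{Combining.} Adding the reward term and $\gamma$ times the transition term, and using linearity in the second argument, gives
\[
Q^\pi(x,a,g)=\bigl\langle\phi(x),(\mathcal R_a+\gamma\mathcal K_a^*\mathcal S_\pi)\phi(g)\bigr\rangle_{\mathcal H}.
\]
This operator is bounded, since it is built from bounded operators.

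The only step needing care is the exchange of expectation and inner product. I would justify it by noting that $\phi$ is bounded and measurable, so $\mathbb E[\phi(x')]$ exists as a Bochner integral. Bounded linear functionals commute with Bochner integrals, which is also the implicit meaning of the operator identity in Definition~\ref{def:cmp}.

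A secondary point is consistency with the policy-level assumptions of Proposition~\ref{prop:fixed_policy_problem_span}. Averaging over $a\sim\pi(\cdot\mid x)$ should recover $V^\pi$, but this is not needed for the proof: the Bellman identity defining $Q^\pi$ holds regardless. The proof itself involves no genuine obstacle.
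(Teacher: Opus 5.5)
Your proof is correct and follows the paper's argument essentially step for step. Like the paper, you start from the one-step identity $Q^\pi(x,a,g)=r(x,a,g)+\gamma\,\mathbb E_{x'\sim\mathcal P(\cdot\mid x,a)}[V^\pi(x',g)]$, substitute the bilinear form of $V^\pi$ from Proposition~\ref{prop:fixed_policy_problem_span}, pass the expectation through the inner product, and move $\mathcal K_a$ across via its adjoint; your explicit Bochner-integral justification of the expectation--inner-product exchange is a detail the paper leaves implicit.
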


\begin{proof}
By definition, $Q^\pi(x,a,g)
=
r(x,a,g)
+
\gamma
\mathbb E_{x'\sim \mathcal P(\cdot\mid x,a)}[V^\pi(x',g)]$.
Using Proposition~\ref{prop:fixed_policy_problem_span}, we get $V^\pi(x',g)
=
\langle \phi(x'),\mathcal S_\pi\phi(g)\rangle_{\mathcal H}$.
Therefore
\[
\begin{aligned}
Q^\pi(x,a,g)
&=
\langle \phi(x),\mathcal R_a\phi(g)\rangle_{\mathcal H}
+
\gamma
\mathbb E_{x'\sim \mathcal P(\cdot\mid x,a)}
[
\langle \phi(x'),\mathcal S_\pi\phi(g)\rangle_{\mathcal H}
] \\
&=
\langle \phi(x),\mathcal R_a\phi(g)\rangle_{\mathcal H}
+
\gamma
\langle \mathcal K_a\phi(x),\mathcal S_\pi\phi(g)\rangle_{\mathcal H} \\
&=
\left\langle
\phi(x),
\left(\mathcal R_a+\gamma\mathcal K_a^*\mathcal S_\pi\right)\phi(g)
\right\rangle_{\mathcal H}.
\end{aligned}
\]
\end{proof}

\subsection{Optimal-control span}

The previous propositions concern policy evaluation. Optimal control requires
additional care. Even when each action-value candidate is bilinear, the optimal
value is obtained by a pointwise maximization: $V^*(x,g)=\max_{a\in\mathcal A}Q^*(x,a,g)$.
A maximum of linear or bilinear functions is not generally linear. Therefore, latent bilinear rewards and latent transition closure do not by themselves imply that $V^*$ is bilinear. We can nevertheless define the additional closure condition under which optimal control remains inside the problem span.

\begin{definition}[Latent optimal Bellman closure]
\label{def:latent_optimal_bellman_closure}
Let
\[
\mathcal V_\phi
:=
\left\{
V_S:\mathcal X\times\mathcal X\to\mathbb R
\;:\;
V_S(x,g)=\langle \phi(x),S\phi(g)\rangle_{\mathcal H},
\quad S\in\mathcal L(\mathcal H)
\right\}.
\]
A discounted goal-conditioned control problem is said to be optimally closed in
the representation $\phi$ if its optimal Bellman operator $\mathcal B^*$ maps
$\mathcal V_\phi$ into itself: $\mathcal B^*\mathcal V_\phi\subseteq\mathcal V_\phi$.
\end{definition}

\begin{proposition}[Optimal-control span under Bellman closure]
\label{prop:optimal_control_problem_span}
Assume that the discounted goal-conditioned control problem is optimally closed
in $\mathcal V_\phi$, i.e., $\mathcal B^*\mathcal V_\phi\subseteq \mathcal V_\phi$. Assume further that $\mathcal B^*$ is a $\gamma$-contraction on bounded measurable functions, that $\mathcal V_\phi$ is closed in the sup norm, and that $0\in\mathcal V_\phi$. Then the optimal value function belongs to $\mathcal V_\phi$. That is, there
exists $S^*\in\mathcal L(\mathcal H)$ such that $V^*(x,g)
=
\langle \phi(x),S^*\phi(g)\rangle_{\mathcal H}$.
If, in addition, action-specific rewards and transitions are represented by operators $\mathcal R_a$ and $\mathcal K_a$, then
$Q^*(x,a,g)
=
\left\langle
\phi(x),
\left(\mathcal R_a+\gamma\mathcal K_a^*S^*\right)\phi(g)
\right\rangle_{\mathcal H}$.
\end{proposition}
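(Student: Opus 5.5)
The plan is a Banach fixed-point argument carried out inside the closed class $\mathcal V_\phi$, followed by a one-step Bellman readout for $Q^*$.

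\textbf{Step 1 (iteration stays in the class).} Start from $V_0:=0$, which lies in $\mathcal V_\phi$ by assumption. Define $V_{k+1}:=\mathcal B^*V_k$. Optimal closure $\mathcal B^*\mathcal V_\phi\subseteq\mathcal V_\phi$ gives, by induction, $V_k\in\mathcal V_\phi$ for every $k$. For each $k$ we may therefore write $V_k(x,g)=\langle \phi(x),S_k\phi(g)\rangle_{\mathcal H}$ with some $S_k\in\mathcal L(\mathcal H)$. Every element of $\mathcal V_\phi$ is a bounded function whenever $\phi$ is bounded; this is the setting of Proposition~\ref{prop:fixed_policy_problem_span}, and it is implicit in asking that $\mathcal V_\phi$ be closed in the sup norm inside the bounded measurable functions. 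Hence the iterates live in the complete space of bounded measurable functions.

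\textbf{Step 2 (convergence to $V^*$).} Since $\mathcal B^*$ is a $\gamma$-contraction in the sup norm, the sequence satisfies $\|V_{k+1}-V_k\|_\infty\le\gamma^k\|V_1-V_0\|_\infty$. It is therefore Cauchy and converges uniformly to the unique bounded fixed point of $\mathcal B^*$. This fixed point is the optimal value function $V^*$; I will take this standard identification as given, as in the uniqueness argument used for Proposition~\ref{prop:fixed_policy_problem_span}.

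\textbf{Step 3 (the limit stays in the class).} Because $\mathcal V_\phi$ is closed in the sup norm, the uniform limit $V^*$ belongs to $\mathcal V_\phi$. This produces some $S^*\in\mathcal L(\mathcal H)$ with $V^*(x,g)=\langle\phi(x),S^*\phi(g)\rangle_{\mathcal H}$.

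This step is where the weight of the proof lies, and it rests entirely on the closedness hypothesis. Without that hypothesis one would have to show directly that the operators $S_k$ converge, for example in operator norm, and this could fail: the $S_k$ are not unique, and uniform convergence of the quadratic readouts need not control them. Taking closedness as an explicit assumption sidesteps this issue, so no extraction of a convergent subsequence of the $S_k$ is needed.

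\textbf{Step 4 (action values).} Here I reuse the computation from Proposition~\ref{prop:action_value_problem_span} with $\mathcal S_\pi$ replaced by $S^*$. Start from $Q^*(x,a,g)=r(x,a,g)+\gamma\,\mathbb E_{x'\sim\mathcal P(\cdot\mid x,a)}[V^*(x',g)]$. Substitute the bilinear form of $V^*$. Move the expectation inside the inner product; this is justified because $\phi$ is bounded and the map $h\mapsto\langle h,S^*\phi(g)\rangle$ is continuous, so the Bochner expectation commutes with it. Then use $\mathbb E[\phi(x')]=\mathcal K_a\phi(x)$ and the definition of the adjoint. The result is
\[
Q^*(x,a,g)=\langle\phi(x),(\mathcal R_a+\gamma\mathcal K_a^*S^*)\phi(g)\rangle_{\mathcal H},
\]
which is the claimed representation.
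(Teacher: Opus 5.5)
Your proposal is correct and follows the paper's proof essentially step for step: value iteration from $V_0=0$, optimal closure keeping every iterate in $\mathcal V_\phi$, the $\gamma$-contraction giving uniform convergence to $V^*$, sup-norm closedness placing $V^*$ in $\mathcal V_\phi$, and the one-step identity $Q^*=r+\gamma\,\mathbb E[V^*]$ yielding the action-value readout. Your remarks on the boundedness of $\phi$ and on passing the Bochner expectation through the inner product make explicit what the paper's proof leaves implicit.
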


\begin{proof}
Let $V_0=0$. Since $0\in\mathcal V_\phi$ and
$\mathcal B^*\mathcal V_\phi\subseteq\mathcal V_\phi$, the iterates
$V_{k+1}:=\mathcal B^*V_k$ satisfy $V_k\in\mathcal V_\phi$ for all $k\ge 0$. Since $\mathcal B^*$ is a $\gamma$-contraction on bounded measurable functions, $V_k$ converges in supremum norm to the unique optimal value function $V^*$. Because $\mathcal V_\phi$ is closed in the sup norm, $V^*\in\mathcal V_\phi$. Hence there exists $S^*\in\mathcal L(\mathcal H)$ such that $V^*(x,g)=\langle \phi(x),S^*\phi(g)\rangle_{\mathcal H}$. The expression for $Q^*$ follows from the one-step optimal action-value identity:
\[
Q^*(x,a,g)
=
r(x,a,g)
+
\gamma
\mathbb E_{x'\sim \mathcal P(\cdot\mid x,a)}[V^*(x',g)].
\]
Substituting the representations of $r$, $\mathcal P$, and $V^*$ gives
\[
Q^*(x,a,g)
=
\left\langle
\phi(x),
\left(\mathcal R_a+\gamma\mathcal K_a^*S^*\right)\phi(g)
\right\rangle_{\mathcal H}.
\]
\end{proof}

We adopt optimal closure as an assumption, because the condition $\mathcal B^*\mathcal V_\phi\subseteq\mathcal V_\phi$ is not guaranteed by construction. It rules out cases where the maximization over actions creates a piecewise-linear or otherwise nonlinear envelope outside the latent bilinear class. Thus the representation supports optimal control for tasks whose Bellman updates are closed in the learned span, but latent bilinear rewards alone do not guarantee this closure.

\subsection{Relation to hitting-time geometry}

The hitting-time objectives studied in the main paper are not merely discounted latent-bilinear reward problems. They are undiscounted stopping-time objectives with boundary condition $V^\pi(g,g)=0$ and unit cost before hitting the goal. Their linear representation is governed by a Poisson equation and the range condition $\mathbf w_{\mathbf 1} \in \operatorname{range}((\mathcal K_\pi-I)^*)$, as established in Theorem~\ref{thm:existence_of_linear_hitting_time_representation}.
The approximation theory in Theorem~\ref{thm:approximation_error} then shows that compressed latent transition consistency yields controlled error in the corresponding hitting-time predictor. Thus the paper contains two complementary notions of problem span:
(i) a \emph{hitting-time span}, controlled by the Poisson/range-condition theory and used directly by IEL; and (ii) a \emph{discounted reward span}, where additional latent-bilinear rewards can be evaluated or optimized when the relevant Bellman closure conditions hold.

\subsection{When should the representation work?}

IEL-style representations are useful under the following conditions:  \emph{(i) Predictive latent closure.} The representation should make one-step latent prediction approximately linear: $\mathbb E[\phi(X')\mid x,a]\approx \mathcal K_a\phi(x)$. If important transition-relevant information is absent from $\phi(x)$, then no linear readout can recover it downstream.
 \emph{(ii) Goal information is represented in the latent space.}
The downstream task should depend on goals through features present in $\phi(g)$, or through simple combinations such as mean embeddings or finite linear mixtures of goal embeddings. If the task depends on goal attributes that are not encoded by $\phi$, it lies outside the problem span.  \emph{(iii) Rewards or costs admit latent readouts.}
For discounted reward tasks, the immediate reward should be expressible, at least approximately, as $r(x,a,g)\approx \langle \phi(x),\mathcal R_a\phi(g)\rangle_{\mathcal H}$.
For hitting-time tasks, the unit-cost representer and Poisson/range condition play the analogous role.  \emph{(iv) The effective horizon is controlled.} Local representation errors accumulate through the relevant resolvent. Long horizons, nearly noncontractive transient dynamics, or poor goal reachability can amplify small one-step errors into large value errors. \emph{(v) Optimal control may require additional closure.} For policy evaluation, latent transition and reward closure are enough. For optimal control, the maximization over actions must also preserve the latent value class, or one should interpret the representation as supporting approximate rather than exact optimal planning.  All in all, the learned IEL geometry provides a shared displacement and transition structure rather than a universal reward solver. It enables reuse across goal-conditioned tasks where rewards, boundary conditions, and Bellman updates remain compatible with this established geometry.

\section{Main Results}\label{sec:main_results}

\subsection{Proof of Theorem \ref{thm:existence_of_linear_hitting_time_representation}}
\begin{proof}
Fix a goal $g$ and a policy $\pi$. Let $P^\pi$ denote the scalar Markov
operator $(P^\pi f)(x)
:=
\mathbb E_{x'\sim \mathcal P^\pi(\cdot\mid x)}[f(x')]$.
We prove both directions.

\paragraph{Necessity.}
Assume that there exists $\omega_g^\pi\in\mathcal H$ such that
$V^\pi(x,g)
=
\langle \phi(g)-\phi(x),\omega_g^\pi\rangle_{\mathcal H}$.
On the transient set $\mathcal X\setminus\{g\}$, the hitting-time function
satisfies the Poisson equation
$(I-P^\pi)V^\pi(\cdot,g)(x)=1$.
Substituting the linear representation gives, for $x\neq g$,
\[
\begin{aligned}
1
=
(I-P^\pi)
\bigl(
\langle \phi(g),\omega_g^\pi\rangle_{\mathcal H}
-
\langle \phi(\cdot),\omega_g^\pi\rangle_{\mathcal H}
\bigr)(x) 
=
-\langle \phi(x),\omega_g^\pi\rangle_{\mathcal H}
+
\mathbb E_{x'\sim \mathcal P^\pi(\cdot\mid x)}
\bigl[
\langle \phi(x'),\omega_g^\pi\rangle_{\mathcal H}
\bigr].
\end{aligned}
\]
By sufficiency of the CMP, $\mathbb E_{x'\sim \mathcal P^\pi(\cdot\mid x)}[\phi(x')]
=
\mathcal K_\pi\phi(x)$.
Therefore $1
=
\langle (\mathcal K_\pi-I)\phi(x),\omega_g^\pi\rangle_{\mathcal H}
=
\langle \phi(x),(\mathcal K_\pi-I)^*\omega_g^\pi\rangle_{\mathcal H}$.
Since $\mathbf w_{\mathbf 1}$ is the unit-cost representer, $\langle \phi(x),\mathbf w_{\mathbf 1}\rangle_{\mathcal H}=1$ for $x\neq g$.
Hence $\langle \phi(x),(\mathcal K_\pi-I)^*\omega_g^\pi\rangle_{\mathcal H}
=
\langle \phi(x),\mathbf w_{\mathbf 1}\rangle_{\mathcal H}$ for $x\neq g$. By the closed-feature-span convention for hitting-time identities, equality
against all $\phi(x)$ with $x\neq g$ identifies elements on the relevant
transient feature span. Thus $(\mathcal K_\pi-I)^*\omega_g^\pi
=
\mathbf w_{\mathbf 1}$
on the relevant transient feature span. Consequently, $\mathbf w_{\mathbf 1}
\in
\operatorname{range}((\mathcal K_\pi-I)^*)$
in the sense of this transient-span convention.

\paragraph{Sufficiency.}
Assume now that $\mathbf w_{\mathbf 1}
\in
\operatorname{range}((\mathcal K_\pi-I)^*)$
where the range condition is understood on the relevant transient feature
span for goal $g$. Then there exists $\omega_g^\pi\in\mathcal H$ such that $(\mathcal K_\pi-I)^*\omega_g^\pi
=
\mathbf w_{\mathbf 1}$
on this span.
Define the candidate hitting-time function $\widetilde V^\pi(x,g)
:=
\langle \phi(g)-\phi(x),\omega_g^\pi\rangle_{\mathcal H}$.
For $x\neq g$ we have
\begin{align*}
(I-P^\pi)\widetilde V^\pi(\cdot,g)(x)
&=
-\langle \phi(x),\omega_g^\pi\rangle_{\mathcal H}
+
\mathbb E_{x'\sim \mathcal P^\pi(\cdot\mid x)}
\bigl[
\langle \phi(x'),\omega_g^\pi\rangle_{\mathcal H}
\bigr] \\
&=
\langle (\mathcal K_\pi-I)\phi(x),\omega_g^\pi\rangle_{\mathcal H} =
\langle \phi(x),(\mathcal K_\pi-I)^*\omega_g^\pi\rangle_{\mathcal H} =
\langle \phi(x),\mathbf w_{\mathbf 1}\rangle_{\mathcal H}
=
1.
\end{align*}
Moreover, $\widetilde V^\pi(g,g)=0$. Let $T_g^\pi$ be the hitting time of $g$ and define the stopping times $\tau_n:=T_g^\pi\wedge n$.
Consider $M_n
:=
\widetilde V^\pi(X_{\tau_n},g)+\tau_n$.
Because $\widetilde V^\pi$ satisfies the Poisson equation on
$\mathcal X\setminus\{g\}$ and satisfies the boundary condition
$\widetilde V^\pi(g,g)=0$, the stopped process $(M_n)_{n\ge 0}$ is a
martingale. Hence
\[
\widetilde V^\pi(x,g)
=
\mathbb E_x[M_n]
=
\mathbb E_x[\tau_n]
+
\mathbb E_x[\widetilde V^\pi(X_{\tau_n},g)].
\]

Since $\phi$ is bounded, there exists $C_\phi<\infty$ such that
$\sup_{x\in\mathcal X}\|\phi(x)\|_{\mathcal H}\le C_\phi$.
Therefore the candidate is bounded:
\[
|\widetilde V^\pi(x,g)|
\le
\|\phi(g)-\phi(x)\|_{\mathcal H}\|\omega_g^\pi\|_{\mathcal H}
\le
2C_\phi\|\omega_g^\pi\|_{\mathcal H}.
\]
Let $C_g:=2C_\phi\|\omega_g^\pi\|_{\mathcal H}$ then $\bigl|\widetilde V^\pi(X_{\tau_n},g)\bigr|
\le C_g$,
and the martingale identity implies
\[
\mathbb E_x[\tau_n]
=
\widetilde V^\pi(x,g)
-
\mathbb E_x[\widetilde V^\pi(X_{\tau_n},g)]
\le
|\widetilde V^\pi(x,g)|+C_g
\le
2C_g.
\]
Thus $\sup_n \mathbb E_x[T_g^\pi\wedge n]<\infty$.
By monotone convergence we get $\mathbb E_x[T_g^\pi]
=
\lim_{n\to\infty}\mathbb E_x[T_g^\pi\wedge n]
\le
2C_g
<
\infty$.In particular, $T_g^\pi<\infty$ almost surely under the trajectory law
induced by $\pi$ from initial state $x$. Hence
$\mathbb I[T_g^\pi>n]\to 0$ almost surely under this same law. Furthermore, $\widetilde V^\pi(X_{\tau_n},g)
=
\widetilde V^\pi(X_n,g)\mathbb I[T_g^\pi>n]$,
because if $T_g^\pi\le n$, then $\tau_n=T_g^\pi$ and
$X_{\tau_n}=g$, so the boundary condition gives
$\widetilde V^\pi(X_{\tau_n},g)=0$.
Since $\widetilde V^\pi$ is bounded, dominated convergence yields $\mathbb E_x[\widetilde V^\pi(X_{\tau_n},g)]\to 0$.
Also, $\tau_n\uparrow T_g^\pi$, so by monotone convergence, $\mathbb E_x[\tau_n]\to \mathbb E_x[T_g^\pi]=V^\pi(x,g)$.
Letting $n\to\infty$ in $\widetilde V^\pi(x,g)
=
\mathbb E_x[\tau_n]
+
\mathbb E_x[\widetilde V^\pi(X_{\tau_n},g)]$
gives $\widetilde V^\pi(x,g)=V^\pi(x,g)$.
Therefore $V^\pi(x,g)
=
\langle \phi(g)-\phi(x),\omega_g^\pi\rangle_{\mathcal H}$.
\end{proof}

\subsection{Proof of Theorem \ref{thm:identifiability_of_linear_hitting_time_representation}}
\begin{proof}
Let $d_g:=\phi(g)-\phi(x_0)\in \mathcal D$ and $d_g^\star:=\phi^\star(g)-\phi^\star(x_0)\in \mathcal D^\star$.
We use the separating policy $\pi_{\mathrm{sep}}$ from the assumptions. First, fix $g,h\in \mathcal X$. By definition of the analysis operators,
\[
(\mathcal A_{\pi_{\mathrm{sep}}} d_g)(h)
=
\langle \phi(g)-\phi(x_0),\, \omega_h^{\pi_{\mathrm{sep}}}\rangle_{\mathcal{H}}.
\]
Using the hitting-time representation for the alternative representation,
\[
\langle \phi(g)-\phi(x_0),\, \omega_h^{\pi_{\mathrm{sep}}}\rangle_{\mathcal H}
=
V^{\pi_{\mathrm{sep}}}(x_0,h)-V^{\pi_{\mathrm{sep}}}(g,h).
\]
Applying the same identity in the canonical representation yields
\[
V^{\pi_{\mathrm{sep}}}(x_0,h)-V^{\pi_{\mathrm{sep}}}(g,h)
=
\langle \phi^\star(g)-\phi^\star(x_0),\, \omega_h^{\star,\pi_{\mathrm{sep}}}\rangle_{\mathcal H^\star}
=
(\mathcal B_{\pi_{\mathrm{sep}}} d_g^\star)(h).
\]
Hence $\mathcal A_{\pi_{\mathrm{sep}}} d_g
=
\mathcal B_{\pi_{\mathrm{sep}}} d_g^\star$ for all $g\in \mathcal X$. Since $\mathcal D=\overline{\mathrm{span}}\{d_g:g\in \mathcal X\}$ and
$\mathcal D^\star=\overline{\mathrm{span}}\{d_g^\star:g\in \mathcal X\}$, and both
$\mathcal A_{\pi_{\mathrm{sep}}}$ and $\mathcal B_{\pi_{\mathrm{sep}}}$ are bounded,
their images are the closures of the spans of the images of these generators: $\mathcal A_{\pi_{\mathrm{sep}}}(\mathcal D)
=
\overline{\mathrm{span}}\{\mathcal A_{\pi_{\mathrm{sep}}} d_g:g\in \mathcal X\}$ and $\mathcal B_{\pi_{\mathrm{sep}}}(\mathcal D^\star)
=
\overline{\mathrm{span}}\{\mathcal B_{\pi_{\mathrm{sep}}} d_g^\star:g\in \mathcal X\}$.
As the generator images coincide, the ranges coincide we get 
$\mathcal A_{\pi_{\mathrm{sep}}}(\mathcal D)
=
\mathcal B_{\pi_{\mathrm{sep}}}(\mathcal D^\star)$. Now define $M:=\mathcal B_{\pi_{\mathrm{sep}}}^{-1}\mathcal A_{\pi_{\mathrm{sep}}}:\mathcal D\to \mathcal D^\star$. This is well-defined because the ranges coincide. Since
$\mathcal A_{\pi_{\mathrm{sep}}}$ and $\mathcal B_{\pi_{\mathrm{sep}}}$ are bounded and bounded-below,
each is injective and has bounded inverse on its range. Therefore $M$ is a bounded linear isomorphism,
with inverse $M^{-1}=\mathcal A_{\pi_{\mathrm{sep}}}^{-1}\mathcal B_{\pi_{\mathrm{sep}}}$. Moreover, for every $g\in \mathcal X$, we have
$M d_g
=
\mathcal B_{\pi_{\mathrm{sep}}}^{-1}\mathcal A_{\pi_{\mathrm{sep}}} d_g
=
\mathcal B_{\pi_{\mathrm{sep}}}^{-1}\mathcal B_{\pi_{\mathrm{sep}}} d_g^\star
=
d_g^\star$.
Thus $M(\phi(g)-\phi(x_0))=\phi^\star(g)-\phi^\star(x_0)$ for all $g\in \mathcal X$. Now let $x,g\in \mathcal X$ be arbitrary. Since
\[
\phi(g)-\phi(x)
=
(\phi(g)-\phi(x_0))-(\phi(x)-\phi(x_0)),
\]
linearity of $M$ gives
\[
M(\phi(g)-\phi(x))
=
(\phi^\star(g)-\phi^\star(x_0))-(\phi^\star(x)-\phi^\star(x_0))
=
\phi^\star(g)-\phi^\star(x).
\]
This proves the displacement-geometry identity. To show uniqueness, suppose $\widetilde M:\mathcal D\to \mathcal D^\star$ is another bounded linear map such that $\widetilde M(\phi(g)-\phi(x_0))=\phi^\star(g)-\phi^\star(x_0)$ for all $g\in \mathcal X$. Then $M$ and $\widetilde M$ agree on the dense generating set $\mathrm{span}\{\phi(g)-\phi(x_0):g\in \mathcal X\}$,
and hence, by continuity, on all of $\mathcal D$. Therefore $M$ is unique.

Next, fix any admissible policy $\pi\in\Pi_D$ for which $\mathcal K_\pi$ and $\mathcal K_\pi^\star$ are well-defined.
Since $(\mathcal X, \mathcal A, \mathcal P)$ is a Sufficient CMP, for every $x\in \mathcal X$ and $a\in \mathcal A$, we have
$\mathcal K_a\phi(x)=\mathbb E_{x'\sim \mathcal P(\cdot\mid x,a)}[\phi(x')]$ and $
\mathcal K_a^\star\phi^\star(x)=\mathbb E_{x'\sim \mathcal P(\cdot\mid x,a)}[\phi^\star(x')]$.
Averaging over $a\sim \pi(\cdot\mid x)$ gives $\mathcal K_\pi\phi(x)=\mathbb E_{x'\sim \mathcal P^\pi(\cdot\mid x)}[\phi(x')]$ and
$\mathcal K_\pi^\star\phi^\star(x)=\mathbb E_{x'\sim \mathcal P^\pi(\cdot\mid x)}[\phi^\star(x')]$.
Hence $\mathcal K_\pi\phi(x)-\phi(x_0)
=
\mathbb E_{x'\sim \mathcal P^\pi(\cdot\mid x)}[\phi(x')-\phi(x_0)]$. Applying the bounded linear map $M$ and passing through the Bochner expectation,
\[
M\bigl(\mathcal K_\pi\phi(x)-\phi(x_0)\bigr)
=
\mathbb E_{x'\sim \mathcal P^\pi(\cdot\mid x)}
\bigl[M(\phi(x')-\phi(x_0))\bigr].
\]
Using the displacement identity already proved, $M(\phi(x')-\phi(x_0))=\phi^\star(x')-\phi^\star(x_0)$,
we obtain
\[
M\bigl(\mathcal K_\pi\phi(x)-\phi(x_0)\bigr)
=
\mathbb E_{x'\sim \mathcal P^\pi(\cdot\mid x)}[\phi^\star(x')-\phi^\star(x_0)]
=
\mathcal K_\pi^\star\phi^\star(x)-\phi^\star(x_0).
\]
Subtracting this identity from $M(\phi(g)-\phi(x_0))=\phi^\star(g)-\phi^\star(x_0)$
yields
\[
M\bigl(\phi(g)- \mathcal K_\pi\phi(x)\bigr)=\phi^\star(g)-\mathcal K_\pi^\star\phi^\star(x),
\qquad \forall x,g\in \mathcal X.
\]
Finally, fix any action $a\in \mathcal A$. By exact sufficiency again,
\[
\mathcal K_a\phi(x)-\phi(x_0)
=
\mathbb E_{x'\sim \mathcal P(\cdot\mid x,a)}[\phi(x')-\phi(x_0)].
\]
Applying $M$ and using the displacement identity inside the expectation,
\[
M\bigl(\mathcal K_a\phi(x)-\phi(x_0)\bigr)
=
\mathbb E_{x'\sim \mathcal P(\cdot\mid x,a)}[\phi^\star(x')-\phi^\star(x_0)]
=
\mathcal K_a^\star\phi^\star(x)-\phi^\star(x_0).
\]
Subtracting from $M(\phi(g)-\phi(x_0))=\phi^\star(g)-\phi^\star(x_0)$
gives
\[
M\bigl(\phi(g)- \mathcal K_a\phi(x)\bigr)=\phi^\star(g)-\mathcal K_a^\star\phi^\star(x),
\qquad \forall x,g\in \mathcal X,\ \forall a\in \mathcal A.
\]
This completes the proof.
\end{proof}

\subsection{Proof of Corollary \ref{cor:hitting_time_equivalence}}
\begin{proof}
Fix a policy $\pi$ admitting canonical representers $\omega_g^{\star,\pi}$, and fix $g\in \mathcal X$.
Let $P_{\mathcal D^\star}: \mathcal H^\star \to \mathcal D^\star$ denote the orthogonal projection onto $\mathcal D^\star$.
Define $\ell_g^\pi(d):=\langle M d,\, P_{\mathcal D^\star}\omega_g^{\star,\pi}\rangle_{\mathcal H^\star}$ for $d\in \mathcal D$. Since $M$ and $P_{\mathcal D^\star}$ are bounded, $\ell_g^\pi$ is a bounded linear functional on $\mathcal D$.
By the Riesz representation theorem, there exists a unique $\bar \omega_g^\pi\in \mathcal D$ such that $\ell_g^\pi(d)=\langle d,\, \bar \omega_g^\pi\rangle_{\mathcal H}$ for all $d\in \mathcal D$. Now fix $x\in \mathcal X$. By
Theorem~\ref{thm:identifiability_of_linear_hitting_time_representation}, we get $M(\phi(g)-\phi(x))=\phi^\star(g)-\phi^\star(x)$.
Since $\phi^\star(g)-\phi^\star(x)\in \mathcal D^\star$, projection does not change the inner product: $\langle \phi^\star(g)-\phi^\star(x),\, \omega_g^{\star,\pi}\rangle_{\mathcal H^\star}
=
\langle \phi^\star(g)-\phi^\star(x),\, P_{\mathcal D^\star}\omega_g^{\star,\pi}\rangle_{\mathcal H^\star}$.
Therefore
\begin{align*}
\langle \phi^\star(g)-\phi^\star(x),\, \omega_g^{\star,\pi}\rangle_{\mathcal H^\star}
&=
\langle M(\phi(g)-\phi(x)),\, P_{\mathcal D^\star}\omega_g^{\star,\pi}\rangle_{\mathcal H^\star}
=
\ell_g^\pi(\phi(g)-\phi(x))\\
&=
\langle \phi(g)-\phi(x),\, \bar \omega_g^\pi\rangle_{\mathcal H}.    
\end{align*}
Since the left-hand side equals $V^\pi(x,g)$ by assumption, we obtain
\[
V^\pi(x,g)=\langle \phi(g)-\phi(x),\, \bar \omega_g^\pi\rangle_{\mathcal H},
\qquad \forall x,g\in \mathcal X.
\]

Now suppose $\pi\in\Pi_D$ is admissible and that $\mathcal K_\pi, \mathcal K_\pi^\star$ are well-defined.
By Theorem~\ref{thm:identifiability_of_linear_hitting_time_representation},
\[
M(\phi(g)- \mathcal K_\pi\phi(x))=\phi^\star(g)- \mathcal K_\pi^\star\phi^\star(x).
\]
Since $\phi^\star(g)- \mathcal K_\pi^\star\phi^\star(x)\in \mathcal D^\star$,
\begin{align*}
\langle \phi^\star(g)- \mathcal K_\pi^\star\phi^\star(x),\, \omega_g^{\star,\pi}\rangle_{\mathcal H^\star}
&=
\langle \phi^\star(g)- \mathcal K_\pi^\star\phi^\star(x),\, P_{\mathcal D^\star}\omega_g^{\star,\pi}\rangle_{\mathcal H^\star}
=
\ell_g^\pi(\phi(g)- \mathcal K_\pi\phi(x))\\
&=
\langle \phi(g)- \mathcal K_\pi\phi(x),\, \bar \omega_g^\pi\rangle_{\mathcal H}.    
\end{align*}
Finally, for any action $a\in \mathcal A$, Theorem~\ref{thm:identifiability_of_linear_hitting_time_representation} gives
\[
M(\phi(g)- \mathcal K_a\phi(x))=\phi^\star(g)- \mathcal K_a^\star\phi^\star(x).
\]
Again using that $\phi^\star(g)- \mathcal K_a^\star\phi^\star(x)\in \mathcal D^\star$,
\begin{align*}
\langle \phi^\star(g)- \mathcal K_a^\star\phi^\star(x),\, \omega_g^{\star,\pi}\rangle_{\mathcal H^\star}
&=
\langle \phi^\star(g)- \mathcal K_a^\star\phi^\star(x),\, P_{\mathcal D^\star}\omega_g^{\star,\pi}\rangle_{\mathcal H^\star}
=
\ell_g^\pi(\phi(g)- \mathcal K_a\phi(x))\\
&=
\langle \phi(g)- \mathcal K_a\phi(x),\, \bar \omega_g^\pi\rangle_{\mathcal H}.    
\end{align*}
This proves the result.
\end{proof}

\subsection{Proof of Lemma \ref{lem:consistency_of_sufficient_capacity}}

\begin{proof}
Fix $x\in\mathcal X$. Since the compressed encoder is defined by $\phi(x')=P_d\widetilde\phi(x')$,
we have, by linearity and boundedness of the orthogonal projection $P_d$,
\[
\mathbb E_{x'\sim \mathcal P^\pi(\cdot\mid x)}[\phi(x')]
=
\mathbb E_{x'\sim \mathcal P^\pi(\cdot\mid x)}[P_d\widetilde\phi(x')]
=
P_d
\mathbb E_{x'\sim \mathcal P^\pi(\cdot\mid x)}[\widetilde\phi(x')].
\]
By definition of the ambient feature transition operator, $\mathbb E_{x'\sim \mathcal P^\pi(\cdot\mid x)}[\widetilde\phi(x')]
=
\widetilde{\mathcal T}^{\pi}\widetilde\phi(x)$.
Therefore $\mathbb E_{x'\sim \mathcal P^\pi(\cdot\mid x)}[\phi(x')]
=
P_d\widetilde{\mathcal T}^{\pi}\widetilde\phi(x)$.
The sufficient capacity condition states  $\bigl\|
P_d\widetilde{\mathcal T}^{\pi}\widetilde\phi(x)
-
\mathcal T^\pi\phi(x)
\bigr\|_{\mathcal H}
\le
\epsilon$.
Combining the previous identity with this bound gives
\[
\bigl\|
\mathbb E_{x'\sim \mathcal P^\pi(\cdot\mid x)}[\phi(x')]
-
\mathcal T^\pi\phi(x)
\bigr\|_{\mathcal H}
\le
\epsilon.
\]
Since $x\in\mathcal X$ was arbitrary, the claim follows.
\end{proof}

\subsection{Proof of Theorem \ref{thm:approximation_error}}
\begin{proof}
Fix $g$ and define the projected feature
$z(x):=P_Q\phi(x)\in\mathcal H_Q$.
Since $\omega_g^\pi\in\mathcal H_Q$, we have
\[
\widehat V_g^\pi(x)
=
\langle \phi(g)-\phi(x),\omega_g^\pi\rangle_{\mathcal H}
=
-\langle z(x),\omega_g^\pi\rangle_{\mathcal H},
\]
because $\phi(g)\in\mathcal H_g$ and $\omega_g^\pi\perp\mathcal H_g$. On the transient set $\mathcal X\setminus\{g\}$, the true hitting time
satisfies the scalar Bellman equation
$V^\pi(\cdot,g)
=
1+P_Q^\pi V^\pi(\cdot,g)$,
where $P_Q^\pi$ is the killed scalar transition operator and the boundary value at the goal is $V^\pi(g,g)=0$.

Define the prediction error $e_g(x):=\widehat V_g^\pi(x)-V^\pi(x,g)$
and the scalar Bellman residual $r_g(x)
:=
\widehat V_g^\pi(x)
-
\bigl(1+P_Q^\pi\widehat V_g^\pi(x)\bigr)$.
Then $(I-P_Q^\pi)e_g=r_g$. We now bound $r_g$. Since $\widehat V_g^\pi(g)=0$, applying the killed
operator to $\widehat V_g^\pi$ is equivalent to taking the usual one-step
expectation with the zero boundary at $g$. Thus, for
$x\in\mathcal X\setminus\{g\}$,
\[
\begin{aligned}
r_g(x)
&=
-\langle z(x),\omega_g^\pi\rangle_{\mathcal H}
-1
+
\mathbb E_{x'\sim \mathcal P^\pi(\cdot\mid x)}
\bigl[
\langle z(x'),\omega_g^\pi\rangle_{\mathcal H}
\bigr] \\
&=
-\langle z(x),\omega_g^\pi\rangle_{\mathcal H}
-1
+
\left\langle
\mathbb E_{x'\sim \mathcal P^\pi(\cdot\mid x)}[z(x')],
\omega_g^\pi
\right\rangle_{\mathcal H}.
\end{aligned}
\]
Here the equality is consistent with the killed operator because
$z(g)=P_Q\phi(g)=0$.

By Lemma~\ref{lem:consistency_of_sufficient_capacity}, for each
$x\in\mathcal X$ there exists $\xi(x)\in\mathcal H$ with $\|\xi(x)\|_{\mathcal H}\le \epsilon$
such that $\mathbb E_{x'\sim \mathcal P^\pi(\cdot\mid x)}[\phi(x')]
=
\mathcal T^\pi\phi(x)+\xi(x)$.
Projecting onto $\mathcal H_Q$ gives
\[
\mathbb E_{x'\sim \mathcal P^\pi(\cdot\mid x)}[z(x')]
=
P_Q\mathcal T^\pi\phi(x)+P_Q\xi(x).
\]
Now decompose $\phi(x)=P_Q\phi(x)+P_g\phi(x)=z(x)+P_g\phi(x)$,
where $P_g:=I-P_Q$ is the orthogonal projection onto $\mathcal H_g$.
Since the goal subspace is invariant under $\mathcal T^\pi$, we have
$\mathcal T^\pi(\mathcal H_g)\subseteq\mathcal H_g$.
Hence $P_Q\mathcal T^\pi P_g\phi(x)=0$.
Therefore $P_Q\mathcal T^\pi\phi(x)
=
P_Q\mathcal T^\pi z(x)
=
\mathcal T_Q^\pi z(x)$.
Thus $\mathbb E_{x'\sim \mathcal P^\pi(\cdot\mid x)}[z(x')]
=
\mathcal T_Q^\pi z(x)+P_Q\xi(x)$.
Substituting this identity into the residual gives
\[
\begin{aligned}
r_g(x)
&=
-\langle z(x),\omega_g^\pi\rangle_{\mathcal H}
-1
+
\langle \mathcal T_Q^\pi z(x)+P_Q\xi(x),\omega_g^\pi\rangle_{\mathcal H} \\
&=
\langle (\mathcal T_Q^\pi-I)z(x),\omega_g^\pi\rangle_{\mathcal H}
+
\langle P_Q\xi(x),\omega_g^\pi\rangle_{\mathcal H}
-1.
\end{aligned}
\]
Using the adjoint equation $(\mathcal T_Q^\pi-I)^*\omega_g^\pi=\mathbf w_{\mathbf 1}$,
we obtain $r_g(x)
=
\langle z(x),\mathbf w_{\mathbf 1}\rangle_{\mathcal H}
+
\langle P_Q\xi(x),\omega_g^\pi\rangle_{\mathcal H}
-1$.
Because $\mathbf w_{\mathbf 1}\in\mathcal H_Q$, we have $\langle z(x),\mathbf w_{\mathbf 1}\rangle_{\mathcal H}
=
\langle P_Q\phi(x),\mathbf w_{\mathbf 1}\rangle_{\mathcal H}
=
\langle \phi(x),\mathbf w_{\mathbf 1}\rangle_{\mathcal H}
=
1$.
Therefore $ r_g(x)
=
\langle P_Q\xi(x),\omega_g^\pi\rangle_{\mathcal H}$.
Hence $|r_g(x)|
\le
\|P_Q\xi(x)\|_{\mathcal H}\|\omega_g^\pi\|_{\mathcal H}
\le
\|\xi(x)\|_{\mathcal H}\|\omega_g^\pi\|_{\mathcal H}
\le
\epsilon\|\omega_g^\pi\|_{\mathcal H}$.
Taking the supremum over $x\in\mathcal X\setminus\{g\}$ yields $\|r_g\|_\infty
\le
\epsilon\|\omega_g^\pi\|_{\mathcal H}$.
Since $(I-P_Q^\pi)e_g=r_g$,
we have $e_g=(I-P_Q^\pi)^{-1}r_g$.
Therefore, by the assumed resolvent bound,
\[
\|e_g\|_\infty
\le
\bigl\|(I-P_Q^\pi)^{-1}\bigr\|_\infty
\|r_g\|_\infty
\le
\frac{C_H}{1-\rho(\mathcal T_Q^\pi)}
\epsilon\|\omega_g^\pi\|_{\mathcal H}.
\]
Since $e_g(x)=\widehat V_g^\pi(x)-V^\pi(x,g)$
we conclude that
\[
\sup_{x\in\mathcal X\setminus\{g\}}
\bigl|V^\pi(x,g)-\widehat V_g^\pi(x)\bigr|
\le
\frac{C_H\,\|\omega_g^\pi\|_{\mathcal H}\,\epsilon}
{1-\rho(\mathcal T_Q^\pi)}.
\]
\end{proof}

\subsection{Proof of Theorem \ref{thm:oracle_displacement_recovery}}
\begin{proof}
Fix $\delta>0$ and choose any $r_\delta\in\mathfrak R_\epsilon$ satisfying
\[
\frac{C_H(r_\delta)\,C_\omega(r_\delta)\epsilon}{1-\rho_{r_\delta}}
\le
\eta_\epsilon^\star+\delta.
\]
For this reference tuple, define the reference hitting-time predictor
\[
V_{r_\delta}(x,g)
:=
\left\langle
\phi^{r_\delta}(g)-\phi^{r_\delta}(x),
\omega_g^{r_\delta,\pi_{\mathrm{sep}}}
\right\rangle_{\mathcal H^{r_\delta}}.
\]
By Theorem~\ref{thm:approximation_error}, for each fixed goal $g$,
\[
\sup_{x\in\mathcal X\setminus\{g\}}
\left|
V_{r_\delta}(x,g)-V^{\pi_{\mathrm{sep}}}(x,g)
\right|
\le
\frac{
C_H(r_\delta,g)
\|\omega_g^{r_\delta,\pi_{\mathrm{sep}}}\|_{\mathcal H^{r_\delta}}
\epsilon
}{
1-\rho(\mathcal T_{Q,g}^{r_\delta,\pi_{\mathrm{sep}}})
}.
\]
Since both sides vanish at $x=g$, the same bound holds for all
$x\in\mathcal X$. Taking suprema over $g$ gives
\[
\sup_{x,g\in\mathcal X}
\left|
V_{r_\delta}(x,g)-V^{\pi_{\mathrm{sep}}}(x,g)
\right|
\le
\frac{
C_H(r_\delta)C_\omega(r_\delta)\epsilon
}{
1-\rho_{r_\delta}
}
\le
\eta_\epsilon^\star+\delta.
\]
Because $\nu$ is a probability measure, this implies $\|V_{r_\delta}-V^{\pi_{\mathrm{sep}}}\|_{L^2(\nu\times\nu)}
\le
\eta_\epsilon^\star+\delta$. Since $\widehat V$ minimizes $\mathcal L$ over $\mathcal F$, and since $\inf_{f\in\mathcal F}
\|f-V^{\pi_{\mathrm{sep}}}\|_{L^2(\nu\times\nu)}
\le
\eta_\epsilon^\star+\eta_{\mathrm{cls}}$
we have $\|\widehat V-V^{\pi_{\mathrm{sep}}}\|_{L^2(\nu\times\nu)}
\le
\eta_\epsilon^\star+\eta_{\mathrm{cls}}$. Define $\Delta(x,g):=\widehat V(x,g)-V_{r_\delta}(x,g)$.
By the triangle inequality,
\[
\|\Delta\|_{L^2(\nu\times\nu)}
\le
\|\widehat V-V^{\pi_{\mathrm{sep}}}\|_{L^2(\nu\times\nu)}
+
\|V_{r_\delta}-V^{\pi_{\mathrm{sep}}}\|_{L^2(\nu\times\nu)}
\le
2\eta_\epsilon^\star+\eta_{\mathrm{cls}}+\delta.
\]

For each $(x,g)\in\mathcal X\times\mathcal X$, define
$d_{x,g}:=\phi(g)-\phi(x)\in D$ and $d_{x,g}^{r_\delta}
:=
\phi^{r_\delta}(g)-\phi^{r_\delta}(x)\in D_{r_\delta}$.
For any $h\in\mathcal X$, by definition of the learned and reference analysis
operators, $(A d_{x,g})(h)
=
\widehat V(x,h)-\widehat V(g,h)$
and $(B_{r_\delta} d_{x,g}^{r_\delta})(h)
=
V_{r_\delta}(x,h)-V_{r_\delta}(g,h)$.
Therefore $(A d_{x,g})(h)
-
(B_{r_\delta}d_{x,g}^{r_\delta})(h)
=
\Delta(x,h)-\Delta(g,h)$.
Hence
\begin{align*}
\int_{\mathcal X}\int_{\mathcal X}
\|A d_{x,g}-B_{r_\delta}d_{x,g}^{r_\delta}\|_{L^2(\nu)}^2
\,d\nu(x)\,d\nu(g)
 =
\int_{\mathcal X}\int_{\mathcal X}\int_{\mathcal X}
|\Delta(x,h)-\Delta(g,h)|^2
\,d\nu(h)\,d\nu(x)\,d\nu(g).
\end{align*}
Using $|\Delta(x,h)-\Delta(g,h)|^2
\le
2|\Delta(x,h)|^2+2|\Delta(g,h)|^2$and the fact that $\nu$ is a probability measure, we obtain
\[
\int_{\mathcal X}\int_{\mathcal X}
\|A d_{x,g}-B_{r_\delta}d_{x,g}^{r_\delta}\|_{L^2(\nu)}^2
\,d\nu(x)\,d\nu(g)
\le
4\|\Delta\|_{L^2(\nu\times\nu)}^2.
\]
Therefore
\[
\left(
\int_{\mathcal X}\int_{\mathcal X}
\|A d_{x,g}-B_{r_\delta}d_{x,g}^{r_\delta}\|_{L^2(\nu)}^2
\,d\nu(x)\,d\nu(g)
\right)^{1/2}
\le
2\bigl(2\eta_\epsilon^\star+\eta_{\mathrm{cls}}+\delta\bigr).
\]

Since $B_{r_\delta}$ is bounded below, it is injective, has closed range, and
its inverse on $\mathcal R(B_{r_\delta})$ satisfies
\[
\|B_{r_\delta}^{-1}f\|_{\mathcal H^{r_\delta}}
\le
\frac{1}{m_{B,\delta}}\|f\|_{L^2(\nu)},
\qquad
f\in\mathcal R(B_{r_\delta}).
\]
Let $P_{\mathcal R(B_{r_\delta})}$ denote the orthogonal projection in
$L^2(\nu)$ onto $\mathcal R(B_{r_\delta})$, and define
$M_\delta
:=
B_{r_\delta}^{-1}P_{\mathcal R(B_{r_\delta})}A$.
Then $M_\delta:D\to D_{r_\delta}$ is a bounded linear map. Moreover, since
$P_{\mathcal R(B_{r_\delta})}$ is contractive,
\[
\begin{aligned}
\|M_\delta d_{x,g}-d_{x,g}^{r_\delta}\|_{\mathcal H^{r_\delta}}
&=
\left\|
B_{r_\delta}^{-1}
P_{\mathcal R(B_{r_\delta})}
\left(
A d_{x,g}-B_{r_\delta}d_{x,g}^{r_\delta}
\right)
\right\|_{\mathcal H^{r_\delta}}
\\
&\le
\frac{1}{m_{B,\delta}}
\|A d_{x,g}-B_{r_\delta}d_{x,g}^{r_\delta}\|_{L^2(\nu)}.
\end{aligned}
\]
Squaring, integrating over $(x,g)$, and taking square roots yields
\[
\left(
\int_{\mathcal X}\int_{\mathcal X}
\|M_\delta d_{x,g}-d_{x,g}^{r_\delta}\|_{\mathcal H^{r_\delta}}^2
\,d\nu(x)\,d\nu(g)
\right)^{1/2}
\le
\frac{2}{m_{B,\delta}}
\bigl(2\eta_\epsilon^\star+\eta_{\mathrm{cls}}+\delta\bigr).
\]
Substituting $d_{x,g}=\phi(g)-\phi(x)$ and $d_{x,g}^{r_\delta}
=
\phi^{r_\delta}(g)-\phi^{r_\delta}(x)$
gives the claimed bound.
\end{proof}
\subsection{Proof of Theorem \ref{thm:pac_oracle_displacement_topology}}
\label{app:finite_sample_oracle_recovery}
This subsection proves Theorem~\ref{thm:pac_oracle_displacement_topology}. The proof has two steps.
We first establish a finite-sample excess-risk bound for the phase-2 tuple-regression problem induced
by the hitting-time channel of Algorithm~1, conditional on a fixed frozen task encoder $\bar \omega_\psi$.
We then lift this statistical control to a displacement-topology recovery statement by comparing the
learned predictor to a near-oracle reference representation from $\mathfrak R_\epsilon$, using the
coverage assumption on the conditional goal law and the bounded-below property of the reference
analysis operator.

\begin{lemma}[Trajectory-level finite-sample control for replay-buffer phase-2 regression]
\label{lem:trajectory_level_phase2_tuple_regression}
Assume $\pi_{\mathrm{sep}}=\pi_B$, and let $D_N=\{\tau^1,\dots,\tau^N\}$
be $N$ i.i.d. trajectories generated by $\pi_B$. Fix a task encoder $\bar \omega_\psi:\mathcal X\to \mathcal H$ satisfying $\|\bar \omega_\psi(g)\|_{\mathcal H}\le C_{\bar \omega_\psi}$ for all $g\in \mathcal X$
and let $\Phi$ be a class of state encoders satisfying $\|\phi(x)\|_{\mathcal H}\le C_\phi$ for all $\phi\in\Phi,\ x\in \mathcal X$.
Let $\operatorname{Ext}(\tau)=\bigl(z_1(\tau),\dots,z_{m_{\mathrm{tr}}}(\tau)\bigr)$ and
$z_t(\tau)=\bigl(s_t(\tau),u_t(\tau),g_t(\tau),H_t(\tau)\bigr)$
be a measurable tuple-extraction rule that returns exactly $m_{\mathrm{tr}}$ tuples from each
trajectory. Let $\mu$ denote the law of a uniformly selected extracted tuple, i.e. the law obtained by
first sampling $\tau\sim\pi_B$, then $T\sim\mathrm{Unif}\{1,\dots,m_{\mathrm{tr}}\}$ independently,
and setting $(S,U,\mathsf G,H):=z_T(\tau)$. Define the replay-buffer conditional mean
$m_{\mathrm{traj}}(s,u,g):=\mathbb E[H\mid S=s,U=u,\mathsf G=g]$
and the phase-2 predictor class
$\mathcal F_{\Phi,\bar \omega_\psi}
:=
\left\{
f_\phi(s,u,g)=\langle \phi(u)-\phi(s),\bar \omega_\psi(g)\rangle_{\mathcal H}
:
\phi\in\Phi
\right\}$.
For each extraction slot $t\in\{1,\dots,m_{\mathrm{tr}}\}$, define the slotwise trajectory class
\[
(\mathcal F_{\Phi,\bar \omega_\psi})^{(t)}
:=
\left\{
\tau\mapsto f_\phi(s_t(\tau),u_t(\tau),g_t(\tau))
:
\phi\in\Phi
\right\},
\]
and write
\[
\overline{\mathfrak R}_N(\Phi,\bar \omega_\psi)
:=
\frac{1}{m_{\mathrm{tr}}}\sum_{t=1}^{m_{\mathrm{tr}}}
\mathfrak R_N\!\bigl((\mathcal F_{\Phi,\bar \omega_\psi})^{(t)}\bigr),
\]
where for any class $\mathcal G$ of real-valued functions on trajectories,
\[
\mathfrak R_N(\mathcal G)
:=
\mathbb E_{\tau_{1:N},\sigma}
\left[
\sup_{q \in\mathcal G}
\frac{1}{N}\sum_{j=1}^N \sigma_j q(\tau^j)
\right].
\]

Define the population and empirical risks by
\[
R(\phi):=\mathbb E_\mu\bigl[(f_\phi(S,U,\mathsf G)-H)^2\bigr],
\]
\[
\widehat R_N(\phi)
:=
\frac{1}{N m_{\mathrm{tr}}}
\sum_{j=1}^N\sum_{t=1}^{m_{\mathrm{tr}}}
\bigl(f_\phi(s_t(\tau^j),u_t(\tau^j),g_t(\tau^j))-H_t(\tau^j)\bigr)^2,
\]
and let $\widehat\phi_N\in\arg\min_{\phi\in\Phi}\widehat R_N(\phi)$.
Furthermore define
\[
\Gamma_N(\delta)
:=
16(2C_\phi C_{\bar \omega_\psi}+H_{\max})\,\overline{\mathfrak R}_N(\Phi,\bar \omega_\psi)
+
2(2C_\phi C_{\bar \omega_\psi}+H_{\max})^2
\sqrt{\frac{\log(1/\delta)}{2N}}.
\]
Then, with probability at least $1-\delta$,
\[
\|f_{\widehat\phi_N}-m_{\mathrm{traj}}\|_{L^2(\mu)}^2
\le
\inf_{\phi\in\Phi}\|f_\phi-m_{\mathrm{traj}}\|_{L^2(\mu)}^2
+
\Gamma_N(\delta).
\]
\end{lemma}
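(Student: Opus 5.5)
The plan is the standard excess-risk argument for empirical risk minimization, run at the level of whole trajectories so that the $m_{\mathrm{tr}}$ tuples extracted from one trajectory, which may be dependent, never have to be treated as independent samples.

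\emph{Step 1 (bias--variance split).} For any square-integrable $f$, the definition of the conditional mean $m_{\mathrm{traj}}=\mathbb E[H\mid S,U,\mathsf G]$ gives
\[
R(\phi)=\|f_\phi-m_{\mathrm{traj}}\|_{L^2(\mu)}^2+\mathbb E_\mu\bigl[(H-m_{\mathrm{traj}})^2\bigr].
\]
The second term does not depend on $\phi$. Hence $\|f_{\widehat\phi_N}-m_{\mathrm{traj}}\|^2-\inf_\phi\|f_\phi-m_{\mathrm{traj}}\|^2$ equals the excess risk $R(\widehat\phi_N)-\inf_\phi R(\phi)$. For any fixed $\phi^\circ\in\Phi$ (or a near-minimizing sequence), write
\[
R(\widehat\phi_N)-R(\phi^\circ)\le \bigl[R(\widehat\phi_N)-\widehat R_N(\widehat\phi_N)\bigr]+\bigl[\widehat R_N(\phi^\circ)-R(\phi^\circ)\bigr]\le 2Z.
\]
This uses $\widehat R_N(\widehat\phi_N)\le\widehat R_N(\phi^\circ)$, and $Z:=\sup_{\phi\in\Phi}|R(\phi)-\widehat R_N(\phi)|$.

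\emph{Step 2 (trajectory-level concentration).} Define the trajectory loss
\[
\ell_\phi(\tau):=\frac1{m_{\mathrm{tr}}}\sum_{t=1}^{m_{\mathrm{tr}}}\bigl(f_\phi(s_t,u_t,g_t)(\tau)-H_t(\tau)\bigr)^2 .
\]
Then $\widehat R_N(\phi)=\frac1N\sum_j\ell_\phi(\tau^j)$, and $\mathbb E[\ell_\phi(\tau)]=R(\phi)$ because $\mu$ is exactly the law of a uniformly chosen slot. Cauchy--Schwarz gives $|f_\phi|\le 2C_\phi C_{\bar\omega_\psi}=:B$, and $0\le H\le H_{\max}$, so $0\le \ell_\phi\le (B+H_{\max})^2$. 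Lemma~\ref{lem:trajectory_level_mcdiarmid} then yields, with probability at least $1-\delta$,
\[
Z\le \mathbb E Z+(B+H_{\max})^2\sqrt{\log(1/\delta)/(2N)} .
\]
Doubling this deviation term gives the second summand of $\Gamma_N(\delta)$.

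\emph{Step 3 (symmetrization and contraction).} Standard symmetrization over the independent trajectories gives $\mathbb E Z\le 2\,\mathfrak R_N(\{\ell_\phi\})$. To handle the absolute value in $Z$, bound the two one-sided suprema separately, or use the symmetry of the Rademacher class; this may cost a factor $2$. Subadditivity of the supremum over the average in $t$ splits $\mathfrak R_N(\{\ell_\phi\})$ into $\frac1{m_{\mathrm{tr}}}\sum_t$ of the Rademacher complexities of the slotwise squared-loss classes. Conditional on $\tau_{1:N}$, each slot $t$ is a fixed sample $(z_j,y_j)=((s_t,u_t,g_t)(\tau^j),H_t(\tau^j))$. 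Lemma~\ref{lem:talagrand_squared_loss_contraction} therefore bounds each slot by $2(B+H_{\max})\,\mathfrak R_N((\mathcal F_{\Phi,\bar\omega_\psi})^{(t)})$. Averaging over slots and multiplying out the constants ($2$ from the excess-risk step, $2$ from symmetrization, up to $2$ for the absolute value, $2$ from contraction) gives $16(B+H_{\max})\overline{\mathfrak R}_N$, the first summand of $\Gamma_N(\delta)$.

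\emph{Main obstacle.} The difficult step is the symmetrization and contraction in Step 3. The two-sided supremum, the averaging over slots, and the per-sample labels $y_j$ must all be absorbed without losing more than the stated constant $16$. Contraction must be applied conditionally for each slot after the supremum has been pulled outside the slot average; it cannot be applied to the averaged loss directly, since that loss is not of the single-sample form $\psi_i(f(z_i))$. If the constant bookkeeping comes out one factor of $2$ too tight, I would drop the absolute value by noting that only the one-sided deviations $R(\widehat\phi_N)-\widehat R_N(\widehat\phi_N)$ and $\widehat R_N(\phi^\circ)-R(\phi^\circ)$ are actually needed in Step 1. The non-attainment of the infimum is handled by taking a sequence $\phi^\circ_k$ with $R(\phi^\circ_k)\to\inf_\phi R(\phi)$; the high-probability event for $Z$ is uniform over $\Phi$, so no extra union bound is required.
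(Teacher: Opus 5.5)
Your plan follows the paper's proof step for step. It uses the conditional-mean identity for $R(\phi)$, the ERM comparison $R(\widehat\phi_N)-R(\phi^\circ)\le 2Z$, Lemma~\ref{lem:trajectory_level_mcdiarmid} on whole trajectories, symmetrization, subadditivity over slots, and slotwise Lemma~\ref{lem:talagrand_squared_loss_contraction}, with the same bookkeeping $2\cdot2\cdot2\cdot2=16$.

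The step you singled out as the main obstacle fails as you state it, and the paper's proof makes the same move. You cannot bound $\mathbb E Z=\mathbb E\sup_\phi|R(\phi)-\widehat R_N(\phi)|$ by any multiple of the one-sided complexity $\mathfrak R_N(\{\ell_\phi\})$ that appears in the statement. Take a singleton class $\Phi=\{\phi_0\}$. Then $\mathfrak R_N(\{\ell_{\phi_0}\})=\mathbb E\bigl[\frac1N\sum_j\sigma_j\ell_{\phi_0}(\tau^j)\bigr]=0$, whereas $\mathbb E|R(\phi_0)-\widehat R_N(\phi_0)|>0$ unless $\ell_{\phi_0}(\tau)$ is a.s.\ constant. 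Symmetrization does give $\mathbb E Z_1,\mathbb E Z_2\le 2\mathfrak R_N(\{\ell_\phi\})$ for $Z_1:=\sup_\phi(R-\widehat R_N)$ and $Z_2:=\sup_\phi(\widehat R_N-R)$. But $Z=\max(Z_1,Z_2)$, and $\max(Z_1,Z_2)\le Z_1+Z_2$ fails whenever one of them is negative. So there is no ``factor $2$ for the absolute value.''

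Your fallback of keeping only one-sided deviations is the right repair, but it must also replace the concentration step. Lemma~\ref{lem:trajectory_level_mcdiarmid} concentrates the two-sided $Z$ around $\mathbb E Z$, which is exactly the quantity you can no longer control. Instead, write $R(\widehat\phi_N)-R(\phi^\circ)\le Z_1+Z_2=:W$ for every $\phi^\circ\in\Phi$, and apply McDiarmid to $W$ directly. Swapping one trajectory moves each of $Z_1,Z_2$ by at most $M_{\mathrm{loss}}/N$, where $M_{\mathrm{loss}}:=(2C_\phi C_{\bar\omega_\psi}+H_{\max})^2$. Hence $W$ has bounded differences $2M_{\mathrm{loss}}/N$, and with probability at least $1-\delta$,
\[
R(\widehat\phi_N)-\inf_{\phi\in\Phi}R(\phi)\le W\le 8(2C_\phi C_{\bar\omega_\psi}+H_{\max})\,\overline{\mathfrak R}_N(\Phi,\bar\omega_\psi)+2M_{\mathrm{loss}}\sqrt{\frac{\log(1/\delta)}{2N}}\le\Gamma_N(\delta).
\]
This uses $\mathbb E W\le 4\mathfrak R_N(\{\ell_\phi\})$, your slotwise contraction, and $\overline{\mathfrak R}_N\ge 0$. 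Since $W$ does not depend on $\phi^\circ$, this single event covers the infimum, so you need no near-minimizing sequence. The bias--variance identity then finishes the proof as you planned.
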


\begin{proof}
For each $\phi\in\Phi$, define the trajectory-level block loss
\[
\Psi_\phi(\tau)
:=
\frac{1}{m_{\mathrm{tr}}}
\sum_{t=1}^{m_{\mathrm{tr}}}
\bigl(f_\phi(s_t(\tau),u_t(\tau),g_t(\tau))-H_t(\tau)\bigr)^2.
\]
Then $\widehat R_N(\phi)=\frac{1}{N}\sum_{j=1}^N \Psi_\phi(\tau^j)$ and $R(\phi)=\mathbb E_\tau[\Psi_\phi(\tau)]$. We proceed in four steps. Firstly, we show uniform boundedness. For any tuple $(s,u,g,H)$ and any $\phi\in\Phi$,
\[
|f_\phi(s,u,g)|
=
|\langle \phi(u)-\phi(s),\bar \omega_\psi(g)\rangle_{\mathcal H}|
\le
\|\phi(u)-\phi(s)\|_{\mathcal H}\,\|\bar \omega_\psi(g)\|_{\mathcal H}
\le
2C_\phi C_{\bar \omega_\psi}.
\]
Hence $|f_\phi(s,u,g)-H|
\le
2C_\phi C_{\bar \omega_\psi}+H_{\max}$,
and therefore $0
\le
(f_\phi(s,u,g)-H)^2
\le
M_{\mathrm{loss}}$ and $
M_{\mathrm{loss}}:=(2C_\phi C_{\bar \omega_\psi}+H_{\max})^2$.
Averaging over $t$ gives $0\le \Psi_\phi(\tau)\le M_{\mathrm{loss}}$ for all $\phi\in\Phi,\ \tau$.

Secondly, we establish symmetrization and slotwise contraction.
Let $\mathcal L_{\Phi,\bar \omega_\psi}:=\{\Psi_\phi:\phi\in\Phi\}$ be the induced block-loss class. Since the independent objects are the trajectories
$\tau^1,\dots,\tau^N$, symmetrization gives
\[
\mathbb E\Bigg[\sup_{\phi\in\Phi}|R(\phi)-\widehat R_N(\phi)|\Bigg]
\le
4\,\mathfrak R_N(\mathcal L_{\Phi,\bar \omega_\psi}).
\] The additional factor of two comes from controlling both one-sided deviations
\(R(\phi)-\widehat R_N(\phi)\) and \(\widehat R_N(\phi)-R(\phi)\) using the
one-sided Rademacher complexity. Expanding the block loss and using subadditivity of the supremum,
\begin{align*}
\mathfrak R_N(\mathcal L_{\Phi,\bar \omega_\psi})
=
&\mathbb E_{\tau_{1:N},\sigma}
\Biggl[
\sup_{\phi\in\Phi}
\frac{1}{N}\sum_{j=1}^N \sigma_j \Psi_\phi(\tau^j)
\Biggr]\\
&\le
\frac{1}{m_{\mathrm{tr}}}
\sum_{t=1}^{m_{\mathrm{tr}}}
\mathbb E_{\tau_{1:N},\sigma}
\Biggl[
\sup_{\phi\in\Phi}
\frac{1}{N}\sum_{j=1}^N \sigma_j
\bigl(f_\phi(s_t(\tau^j),u_t(\tau^j),g_t(\tau^j))-H_t(\tau^j)\bigr)^2
\Biggr].    
\end{align*}

For fixed $t$ and $j$, define $\varphi_{j,t}(y):=(y-H_t(\tau^j))^2-H_t(\tau^j)^2$. Then $\varphi_{j,t}(0)=0$, and because $|y|\le 2C_\phi C_{\bar \omega_\psi}$ and
$|H_t(\tau^j)|\le H_{\max}$, we have $|\varphi_{j,t}'(y)|
=
|2(y-H_t(\tau^j))|
\le
2(2C_\phi C_{\bar \omega_\psi}+H_{\max})$. Moreover, the constant shift $-H_t(\tau^j)^2$ vanishes under the Rademacher average, so
By Lemma~\ref{lem:talagrand_squared_loss_contraction},\[
\mathfrak R_N(\mathcal L_{\Phi,\bar \omega_\psi})
\le
\frac{2(2C_\phi C_{\bar \omega_\psi}+H_{\max})}{m_{\mathrm{tr}}}
\sum_{t=1}^{m_{\mathrm{tr}}}
\mathfrak R_N\!\bigl((\mathcal F_{\Phi,\bar \omega_\psi})^{(t)}\bigr)
=
2(2C_\phi C_{\bar \omega_\psi}+H_{\max})\,
\overline{\mathfrak R}_N(\Phi,\bar \omega_\psi).
\]
Therefore
\[
\mathbb E\Bigl[\sup_{\phi\in\Phi}|R(\phi)-\widehat R_N(\phi)|\Bigr]
\le
8(2C_\phi C_{\bar \omega_\psi}+H_{\max})\,\overline{\mathfrak R}_N(\Phi,\bar \omega_\psi).
\]

Thirdly, we establish trajectory-level concentration. Define
\[
\Phi_N(\tau^1,\dots,\tau^N)
:=
\sup_{\phi\in\Phi}|R(\phi)-\widehat R_N(\phi)|.
\]
If one trajectory $\tau^j$ is replaced by another trajectory $\tau^{j\prime}$, then for every
$\phi\in\Phi$,
\[
\bigl|
\widehat R_N(\phi;\tau^1,\dots,\tau^N)
-
\widehat R_N(\phi;\tau^1,\dots,\tau^{j\prime},\dots,\tau^N)
\bigr|
\le
\frac{M_{\mathrm{loss}}}{N},
\]
because only one block-loss term changes and every block loss lies in $[0,M_{\mathrm{loss}}]$.
Hence
\[
|\Phi_N(\tau^1,\dots,\tau^N)-\Phi_N(\tau^1,\dots,\tau^{j\prime},\dots,\tau^N)|
\le
\frac{M_{\mathrm{loss}}}{N}.
\]
By Lemma~\ref{lem:trajectory_level_mcdiarmid}, applied to the trajectory-level
block-loss class $\mathcal L_{\Phi,\bar \omega_\psi}$ with
$B=M_{\mathrm{loss}}$, with probability at least $1-\delta$,
\[
\sup_{\phi\in\Phi}|R(\phi)-\widehat R_N(\phi)|
\le
8(2C_\phi C_{\bar \omega_\psi}+H_{\max})\,\overline{\mathfrak R}_N(\Phi,\bar \omega_\psi)
+
(2C_\phi C_{\bar \omega_\psi}+H_{\max})^2
\sqrt{\frac{\log(1/\delta)}{2N}}.
\]

In the last step, we perform ERM comparison and target identification. On the event above, for every $\phi\in\Phi$,
\[
R(\widehat\phi_N)
\le
\widehat R_N(\widehat\phi_N)
+
\sup_{\psi\in\Phi}|R(\psi)-\widehat R_N(\psi)|
\le
\widehat R_N(\phi)
+
\sup_{\psi\in\Phi}|R(\psi)-\widehat R_N(\psi)|.
\]
Applying the same deviation bound once more to $\widehat R_N(\phi)$ gives
\[
R(\widehat\phi_N)
\le
R(\phi)
+
2\sup_{\psi\in\Phi}|R(\psi)-\widehat R_N(\psi)|.
\]
Therefore, with probability at least $1-\delta$,
\[
R(\widehat\phi_N)
\le
R(\phi)
+
16(2C_\phi C_{\bar \omega_\psi}+H_{\max})\,\overline{\mathfrak R}_N(\Phi,\bar \omega_\psi)
+
2(2C_\phi C_{\bar \omega_\psi}+H_{\max})^2
\sqrt{\frac{\log(1/\delta)}{2N}}.
\]
Taking the infimum over $\phi\in\Phi$ proves $R(\widehat\phi_N)\le \inf_{\phi\in\Phi}R(\phi)+\Gamma_N(\delta)$.
It remains to identify the regression target induced by the square loss. By definition we have
$m_{\mathrm{traj}}(S,U,\mathsf G)=\mathbb E[H\mid S,U,\mathsf G]$. Fixing any $\phi\in\Phi$ and expanding the square yields $R(\phi)
=
\mathbb E_\mu\bigl[(f_\phi-m_{\mathrm{traj}}+m_{\mathrm{traj}}-H)^2\bigr]$.
Hence
\[
R(\phi)
=
\mathbb E_\mu\bigl[(f_\phi-m_{\mathrm{traj}})^2\bigr]
+
\mathbb E_\mu\bigl[(m_{\mathrm{traj}}-H)^2\bigr]
+
2\,\mathbb E_\mu\bigl[(f_\phi-m_{\mathrm{traj}})(m_{\mathrm{traj}}-H)\bigr].
\]
The cross term vanishes because
\[
\mathbb E[m_{\mathrm{traj}}-H\mid S,U,\mathsf G]
=
m_{\mathrm{traj}}(S,U,\mathsf G)-\mathbb E[H\mid S,U,\mathsf G]
=
0.
\]
Therefore, $R(\phi)
=
\|f_\phi-m_{\mathrm{traj}}\|_{L^2(\mu)}^2
+
\mathbb E_\mu\bigl[(m_{\mathrm{traj}}-H)^2\bigr]$. The second term is independent of $\phi$, so subtracting it from both sides of $R(\widehat\phi_N)\le \inf_{\phi\in\Phi}R(\phi)+\Gamma_N(\delta)$
yields 
\begin{align*}   
\|f_{\widehat\phi_N}-m_{\mathrm{traj}}\|_{L^2(\mu)}^2
\le
\inf_{\phi\in\Phi}\|f_\phi-m_{\mathrm{traj}}\|_{L^2(\mu)}^2
+
\Gamma_N(\delta), 
\end{align*} 
proving the claim.
\end{proof}

\begin{proof}[Proof of Theorem~\ref{thm:pac_oracle_displacement_topology}]
By Lemma~\ref{lem:trajectory_level_phase2_tuple_regression}, with probability at least
$1-\delta$,
\[
\|f_{\widehat\phi_N}-m_{\mathrm{traj}}\|_{L^2(\mu)}^2
\le
\inf_{\phi\in\Phi}\|f_\phi-m_{\mathrm{traj}}\|_{L^2(\mu)}^2
+
\Gamma_N(\delta).
\]
Using the trajectory-compatibility assumption and the triangle inequality,
\[
\inf_{\phi\in\Phi}\|f_\phi-m_{\mathrm{traj}}\|_{L^2(\mu)}
\le
\inf_{\phi\in\Phi}\|f_\phi-G^{\pi_B}\|_{L^2(\mu)}
+
\|G^{\pi_B}-m_{\mathrm{traj}}\|_{L^2(\mu)}
\le
2\eta_\epsilon^\star+\eta_{\mathrm{cls}}+\eta_{\mathrm{traj}}.
\]
Hence, it holds with probability at least $1-\delta$ that $\|f_{\widehat\phi_N}-m_{\mathrm{traj}}\|_{L^2(\mu)}^2
\le
(2\eta_\epsilon^\star+\eta_{\mathrm{cls}}+\eta_{\mathrm{traj}})^2
+
\Gamma_N(\delta)$.

Now compare the learned predictor to the ideal displacement target:
\[
\|f_{\widehat\phi_N}-G^{\pi_B}\|_{L^2(\mu)}^2
\le
2\|f_{\widehat\phi_N}-m_{\mathrm{traj}}\|_{L^2(\mu)}^2
+
2\|m_{\mathrm{traj}}-G^{\pi_B}\|_{L^2(\mu)}^2.
\]
Therefore, again with probability at least $1-\delta$,
\[
\|f_{\widehat\phi_N}-G^{\pi_B}\|_{L^2(\mu)}^2
\le
2(2\eta_\epsilon^\star+\eta_{\mathrm{cls}}+\eta_{\mathrm{traj}})^2
+
2\Gamma_N(\delta)
+
2\eta_{\mathrm{traj}}^2.
\]

Now fix $\delta_{\mathrm{or}}>0$ and choose
$r_{\delta_{\mathrm{or}}}\in\mathfrak R_\epsilon$ such that
\[
\frac{C_H(r_{\delta_{\mathrm{or}}})C_\omega(r_{\delta_{\mathrm{or}}})\epsilon}
{1-\rho_{r_{\delta_{\mathrm{or}}}}}
\le
\eta_\epsilon^\star+\delta_{\mathrm{or}}.
\]
Define the reference tuple predictor $G_{r_{\delta_{\mathrm{or}}}}(s,u,g)
:=
\left\langle
\phi^{r_{\delta_{\mathrm{or}}}}(u)-\phi^{r_{\delta_{\mathrm{or}}}}(s),
\omega_g^{\,r_{\delta_{\mathrm{or}}},\pi_B}
\right\rangle_{\mathcal H^{r_{\delta_{\mathrm{or}}}}}$. By Theorem~\ref{thm:approximation_error}, exactly as in the previous proof, $\|G_{r_{\delta_{\mathrm{or}}}}-G^{\pi_B}\|_{L^2(\mu)}^2
\le
4(\eta_\epsilon^\star+\delta_{\mathrm{or}})^2$. Defining $\Delta_N:=f_{\widehat\phi_N}-G_{r_{\delta_{\mathrm{or}}}}$ we get
Then $\|\Delta_N\|_{L^2(\mu)}^2
\le
2\|f_{\widehat\phi_N}-G^{\pi_B}\|_{L^2(\mu)}^2
+
2\|G_{r_{\delta_{\mathrm{or}}}}-G^{\pi_B}\|_{L^2(\mu)}^2$.
Substituting the two previous bounds yields, with probability at least $1-\delta$,
\[
\|\Delta_N\|_{L^2(\mu)}^2
\le
4(2\eta_\epsilon^\star+\eta_{\mathrm{cls}}+\eta_{\mathrm{traj}})^2
+
4\Gamma_N(\delta)
+
4\eta_{\mathrm{traj}}^2
+
8(\eta_\epsilon^\star+\delta_{\mathrm{or}})^2.
\]

For each $(s,u)\in \operatorname{supp}(\lambda)$, define $d^N_{s,u}:=\widehat\phi_N(u)-\widehat\phi_N(s)\in D_N$ and $d^{r_{\delta_{\mathrm{or}}}}_{s,u}
:=
\phi^{r_{\delta_{\mathrm{or}}}}(u)-\phi^{r_{\delta_{\mathrm{or}}}}(s)
\in D_{r_{\delta_{\mathrm{or}}}}$.
By construction we get $(A_N d^N_{s,u})(g)=f_{\widehat\phi_N}(s,u,g)$ and $(B_{r_{\delta_{\mathrm{or}}}} d^{r_{\delta_{\mathrm{or}}}}_{s,u})(g)
=
G_{r_{\delta_{\mathrm{or}}}}(s,u,g)$. Hence $(A_N d^N_{s,u})(g)-(B_{r_{\delta_{\mathrm{or}}}}d^{r_{\delta_{\mathrm{or}}}}_{s,u})(g)
=
\Delta_N(s,u,g)$. Integrating  against first $\nu$ and then $\lambda$ gives
\[
\int_{\mathcal X\times \mathcal X}
\|A_N d^N_{s,u}-B_{r_{\delta_{\mathrm{or}}}}d^{r_{\delta_{\mathrm{or}}}}_{s,u}\|_{L^2(\mathcal X,\nu)}^2
\,d\lambda(s,u)
=
\int_{\mathcal X\times \mathcal X}\int_{\mathcal X}
|\Delta_N(s,u,g)|^2\,d\nu(g)\,d\lambda(s,u).
\]

By the coverage assumption we get
$\nu_{s,u}(B)\ge c_{\mathrm{cov}}\,\nu(B)$ for $\lambda\text{-a.e. }(s,u)$,
and every measurable $B\subseteq \mathcal X$. Therefore
\begin{align*}
c_{\mathrm{cov}} 
\int_{\mathcal X\times \mathcal X}\int_{\mathcal X} &
|\Delta_N(s,u,g)|^2\,d\nu(g)\,d\lambda(s,u)\\
&\le
\int_{\mathcal X\times \mathcal X}\int_{\mathcal X}
|\Delta_N(s,u,g)|^2\,d\nu_{s,u}(g)\,d\lambda(s,u)
=
\|\Delta_N\|_{L^2(\mu)}^2.    
\end{align*}

Thus $\int_{\mathcal X\times \mathcal X}
\|A_N d^N_{s,u}-B_{r_{\delta_{\mathrm{or}}}}d^{r_{\delta_{\mathrm{or}}}}_{s,u}\|_{L^2(\mathcal X,\nu)}^2
\,d\lambda(s,u)
\le
\frac{1}{c_{\mathrm{cov}}}\|\Delta_N\|_{L^2(\mu)}^2$.
Since $B_{r_{\delta_{\mathrm{or}}}}$ is bounded below, it is injective, has closed range, and has a bounded inverse on its range. Let $m_{B,\delta_{\mathrm{or}}}>0$ denote its lower-bound constant, so that $\|B_{r_{\delta_{\mathrm{or}}}}d\|_{L^2(\mathcal X,\nu)}
\ge
m_{B,\delta_{\mathrm{or}}}\|d\|_{\mathcal H^{r_{\delta_{\mathrm{or}}}}}$ with $d\in D_{r_{\delta_{\mathrm{or}}}}$. Equivalently,
$\|B_{r_{\delta_{\mathrm{or}}}}^{-1}f\|_{\mathcal H^{r_{\delta_{\mathrm{or}}}}}
\le
\frac{1}{m_{B,\delta_{\mathrm{or}}}}
\|f\|_{L^2(\mathcal X,\nu)}$ with $f\in\mathcal R(B_{r_{\delta_{\mathrm{or}}}})$. Let $P_{\mathcal R(B_{r_{\delta_{\mathrm{or}}}})}$
denote the orthogonal projection in $L^2(\mathcal X,\nu)$ onto
$\mathcal R(B_{r_{\delta_{\mathrm{or}}}})$, and define $T_{N,\delta_{\mathrm{or}}}
:=
B_{r_{\delta_{\mathrm{or}}}}^{-1}
P_{\mathcal R(B_{r_{\delta_{\mathrm{or}}}})}
A_N$. Then $T_{N,\delta_{\mathrm{or}}}:D_N\to D_{r_{\delta_{\mathrm{or}}}}$
is a bounded linear map. Since $B_{r_{\delta_{\mathrm{or}}}}d^{r_{\delta_{\mathrm{or}}}}_{s,u}
\in
\mathcal R(B_{r_{\delta_{\mathrm{or}}}})$,
we have $P_{\mathcal R(B_{r_{\delta_{\mathrm{or}}}})}
B_{r_{\delta_{\mathrm{or}}}}d^{r_{\delta_{\mathrm{or}}}}_{s,u}
=
B_{r_{\delta_{\mathrm{or}}}}d^{r_{\delta_{\mathrm{or}}}}_{s,u}$.
Therefore,
\[
\begin{aligned}
\left\|
T_{N,\delta_{\mathrm{or}}}d^N_{s,u}
-
d^{r_{\delta_{\mathrm{or}}}}_{s,u}
\right\|_{\mathcal H^{r_{\delta_{\mathrm{or}}}}}
&=
\left\|
B_{r_{\delta_{\mathrm{or}}}}^{-1}
P_{\mathcal R(B_{r_{\delta_{\mathrm{or}}}})}
\left(
A_Nd^N_{s,u}
-
B_{r_{\delta_{\mathrm{or}}}}d^{r_{\delta_{\mathrm{or}}}}_{s,u}
\right)
\right\|_{\mathcal H^{r_{\delta_{\mathrm{or}}}}}
\\
&\le
\frac{1}{m_{B,\delta_{\mathrm{or}}}}
\left\|
A_Nd^N_{s,u}
-
B_{r_{\delta_{\mathrm{or}}}}d^{r_{\delta_{\mathrm{or}}}}_{s,u}
\right\|_{L^2(\mathcal X,\nu)}.
\end{aligned}
\]
Squaring and integrating over $(s,u)\sim\lambda$ gives
\[
\begin{aligned}
\int_{\mathcal X\times\mathcal X}
\left\|
T_{N,\delta_{\mathrm{or}}}d^N_{s,u}
-
d^{r_{\delta_{\mathrm{or}}}}_{s,u}
\right\|_{\mathcal H^{r_{\delta_{\mathrm{or}}}}}^{2}
\,d\lambda(s,u)
\le
\frac{1}{m_{B,\delta_{\mathrm{or}}}^{2}}
\int_{\mathcal X\times\mathcal X}
\left\|
A_Nd^N_{s,u}
-
B_{r_{\delta_{\mathrm{or}}}}d^{r_{\delta_{\mathrm{or}}}}_{s,u}
\right\|_{L^2(\mathcal X,\nu)}^{2}
\,d\lambda(s,u).
\end{aligned}
\]
Using the preceding bound on the analysis-operator discrepancy, we obtain
\begin{align*}
&\int_{\mathcal X\times\mathcal X}
\left\|
T_{N,\delta_{\mathrm{or}}}d^N_{s,u}
-
d^{r_{\delta_{\mathrm{or}}}}_{s,u}
\right\|_{\mathcal H^{r_{\delta_{\mathrm{or}}}}}^{2}
\,d\lambda(s,u)
\\
&\qquad \qquad \qquad \le
\frac{4}{c_{\mathrm{cov}}\,m_{B,\delta_{\mathrm{or}}}^{2}}
\Big[
(2\eta_\epsilon^\star+\eta_{\mathrm{cls}}+\eta_{\mathrm{traj}})^2
+
\Gamma_N(\delta)
+
\eta_{\mathrm{traj}}^{2}
+
2(\eta_\epsilon^\star+\delta_{\mathrm{or}})^2
\Big].
\end{align*}
Finally, substituting $d^N_{s,u}
=
\widehat\phi_N(u)-\widehat\phi_N(s)$ and $d^{r_{\delta_{\mathrm{or}}}}_{s,u}
=
\phi^{r_{\delta_{\mathrm{or}}}}(u)-\phi^{r_{\delta_{\mathrm{or}}}}(s)$
yields
\[
\begin{aligned}
&\int_{\mathcal X\times\mathcal X}
\bigl\|
T_{N,\delta_{\mathrm{or}}}
(\widehat\phi_N(u)-\widehat\phi_N(s))
-
(\phi^{r_{\delta_{\mathrm{or}}}}(u)-\phi^{r_{\delta_{\mathrm{or}}}}(s))
\bigr\|_{\mathcal H^{r_{\delta_{\mathrm{or}}}}}^{2}
\,d\lambda(s,u)
\\
&\qquad \qquad \qquad \le
\frac{4}{c_{\mathrm{cov}}\,m_{B,\delta_{\mathrm{or}}}^{2}}
\Big[
(2\eta_\epsilon^\star+\eta_{\mathrm{cls}}+\eta_{\mathrm{traj}})^2
+
\Gamma_N(\delta)
+
\eta_{\mathrm{traj}}^{2}
+
2(\eta_\epsilon^\star+\delta_{\mathrm{or}})^2
\Big],
\end{aligned}
\]
\end{proof}

%% file: unsupervised-policy-training.tex
\section{On the supervision of policy training in offline goal-conditioned RL}
\label{app:goal_agnostic_policy_training}

Table~\ref{tab:methods-comparison} distinguishes between ordinary reward-free or zero-shot training and a stricter property denoted as \emph{Unsupervised Policy Training}. This column should be interpreted as goal-agnostic training and not as a claim that other methods such as FB are not unsupervised RL.  This distinction is important because many methods are unsupervised in the sense that they do not use
externally provided reward labels during pretraining, but nevertheless train their policy or actor on conditioning variables that are constructed from dataset states treated as goals, targets, or reward/task latents.

We use the following operational definition. Consider a latent-conditioned policy $\pi(a\mid x,z)$. We say that its training is
\emph{goal-agnostic} if, during policy optimization, the conditioning variable $z$ is sampled from an exogenous distribution over abstract latent directions and is not instantiated as an embedding of a dataset state selected as a goal.  We allow the representation or geometry-learning stage to use states, future states, temporal gaps, or
self-supervised relabeling. Our criterion concerns the \emph{policy-training stage}: whether the policy network itself is trained by seeing real goals, directly as goal inputs or indirectly through goal-derived latent variables.

Under this definition, HILP satisfies goal-agnostic policy training
in its core formulation. HILP first learns a temporal representation
$\phi$, and then trains a latent-conditioned policy over arbitrary directions $z$ in the learned representation space using the intrinsic reward $r_z(x,x')=\langle \phi(x')-\phi(x), z\rangle.$ The goal state is not used as the policy-conditioning variable during this
policy-training phase. A downstream goal is converted into a direction only at test time, for example by using the direction from the current state embedding toward the goal embedding.

IEL follows the same separation principle but replaces HILP's symmetric
geometry with a directed hitting-time geometry. In IEL, replay-buffer states and intermediate trajectory states are used to learn the embedding $\phi$ and the task identifier $\omega_\psi(g)$ through self-supervised temporal-gap and identifier losses. This use of goals is part of representation learning, not policy supervision. The resulting foundation policy is trained over abstract directions in the learned geometry, and concrete goals enter only at
test time through prompting or graph-based planning. Thus, the policy is not trained on goal labels even though the representation is learned from self-supervised goal relabeling.

This definition separates HILP and our IEL from the rest of  reward-free or zero-shot methods. FB representations are reward-free and zero-shot in the usual sense. They learn from reward-free interactions and can construct policies for rewards specified after pretraining. However, the policy parameter in FB is a reward/task latent. Once a reward is specified, FB forms $ z_R = \mathbb E[r(s,a)B(s,a)]$ and for a target-state reward this latent can be instantiated from the backward representation of the target state. Moreover, standard deep zero-shot FB implementations sample task vectors partly by passing replay-buffer states through the backward representation, i.e. $z=B(s)$ for sampled states $s$. Thus, FB is reward-free but its policy-training distribution is at least partly anchored to reward or goal-state latents rather than purely to arbitrary directions in a learned temporal geometry.

Similarly, goal-conditioned RL and reachability-learning methods train
policies, critics, or classifiers on state--goal pairs sampled from the replay buffer, often using future-state or random-state relabeling. These methods may be self-supervised or reward-free, but their policy updates explicitly use dataset states as goals. Therefore they do not satisfy the stricter goal-agnostic policy criterion used in Table~\ref{tab:methods-comparison}. The last column of the table identifies whether the policy is trained as a reusable foundation policy whose conditioning variables are not real goals or goal-derived task latents. This is the property needed for our multi-stage planning setting, where concrete goals should be introduced only at test time and composed through the learned temporal geometry.

%% file: algorithms/IEL.tex
\section{Algorithm Pseudocode}
\label{sec:algorithm_pseudocode}

\begin{algorithm}
\caption{Isomorphic Embedding Learning} 
\label{alg:IEL}
\begin{algorithmic}[1]
\State \textbf{Input:} Replay buffer $\mathcal{B}$, value expectile $\tau$, hitting time expectile $\tau'$, topological consistency temperature $\beta$,  topological consistency coefficient $\kappa$, discount factor $\gamma$

\State \textbf{Initialize:} Encoder $\phi_\theta$ with parameters $\bar{\theta} \gets \theta$

\For{each training iteration}
    \State Sample batch $(x_i, u_i, H(u_i), x'_i, g_i) \sim \mathcal{B}$ \Comment{$u_i$: intermediate state with hitting time $H(u_i)$}
    
    \Statex \hrulefill \ \textit{Temporal Difference Learning} \hrulefill

    \State $\Delta'_{g_i}  \gets \phi_\theta(g_i) - \phi_\theta(x'_i)$

    \State $\cos(\xi'_i)=\langle \Delta'_{g_i}, \omega_\psi(g_i)  \rangle / (\|  \Delta'_{g_i} \| \cdot \| \omega_\psi(g_i)\|)$

    \State $\Delta_{g_i}  \gets \phi_\theta(g_i) - \phi_\theta(x_i)$   
    
    \State $\cos(\xi_i)=\langle \Delta_{g_i}, \omega_\psi(g_i)  \rangle / (\|  \Delta_{g_i} \| \cdot \| \omega_\psi(g_i)\|)$
    
    \State $\mathcal{T}_i \gets \mathbb{I}(x_i=g_i) \cdot 0 + \mathbb{I}(x_i \neq g_i) \cdot (1 + \gamma \| \Delta'_{g_i}  \| \cdot \exp(\beta (1- \cos(\xi'_i))))$ \Comment{Bellman target}

    \State $\delta_i \gets \lfloor \mathcal{T}_i \rfloor_{sg} - \| \Delta_{g_i}  \| \cdot \exp(\beta (1- \cos(\xi_i))) $ \Comment{Bellman residual}

    \State $w_i \gets |\tau - \mathbb{I}(\delta_i < 0)|$   \Comment{Expectile weights for value learning}

    \Statex \hrulefill \ \textit{Hitting Time Regression} \hrulefill

    \State $\Delta_{u_i}\gets \phi_\theta(u_i) - \phi_\theta(x_i)$

    \State $\ell_i \gets \frac{1-\gamma^{H(u_i)}}{1- \gamma} - \langle \Delta_{u_i}, \omega_\psi(g_i) / \| \omega_\psi(g_i) \| \rangle$ \Comment{Discounted hitting time residual}

    \State $w'_i \gets |\tau' - \mathbb{I}(\ell_i < 0)|$

     \Statex \hrulefill \ \textit{Optimization} \hrulefill

    \State $\mathcal{L}_{emb} \gets \frac{1}{N} \sum_{i=1}^{N} \left( w_i \delta_i^2 + \kappa w'_i \ell_i^2 \right )$

    \State $\theta \gets \theta - \alpha \nabla_\theta \mathcal{L}_{emb}$ \Comment{Gradient update}
\EndFor
\end{algorithmic}
\end{algorithm}

%% file: algorithms/InfoNCE.tex
\begin{algorithm}
\caption{Continuous Contrastive Task Learning (InfoNCE like)}
\label{alg:InfoNCE}
\begin{algorithmic}[1]
\State \textbf{Input:} Replay buffer $\mathcal{B}$, task encoder $\omega_\psi$, temperature $\tau$, augmentation scale $\sigma$

\State \textbf{Initialize:} Encoder parameters $\psi$

\For{each training iteration}
    \State Sample batch $(g_i, u_i, x_i) \sim \mathcal{B}$ \Comment{$g_i$: goals, $u_i$: intermediate, $x_i$: states}
    
    \State $\mathbf{x} \gets [g_i; u_i; x_i]$ \Comment{Concatenate all target states into a single batch}
    
    \State $\mathbf{x}' \gets \mathbf{x} + \epsilon, \quad \epsilon \sim \mathcal{N}(0, \sigma^2 \mathbf{I})$ \Comment{Apply Gaussian noise augmentation}

    \State $z \gets \omega_\psi(\mathbf{x})$ \Comment{Embed original states}
    \State $z' \gets \omega_\psi(\mathbf{x}')$ \Comment{Embed augmented states}

    \State $\hat{z} \gets z / \|z\|, \quad \hat{z}' \gets z' / \|z'\|$ \Comment{L2 normalization to the unit hypersphere}

    \State $\mathcal{S} \gets \langle \hat{z}, (\hat{z}')^\top \rangle / \tau$ \Comment{Compute $N \times N$ similarity matrix}

    \State $\mathcal{L} \gets \frac{1}{N} \sum_{i=1}^{N} -\log \left( \frac{\exp(\mathcal{S}_{i,i})}{\sum_{j=1}^{N} \exp(\mathcal{S}_{i,j})} \right)$ \Comment{Compute InfoNCE / NT-Xent loss}
    
    \State Update $\omega_\psi$ by minimizing $\mathcal{L}$
\EndFor
\end{algorithmic}
\end{algorithm}

%% file: algorithms/three_phase_training.tex
\begin{algorithm}[H]
\caption{Three phase training}
\label{alg:three_phase_training}
\begin{algorithmic}[1]

\State \textbf{Input:} Replay buffer $\mathcal{B}$, task encoder $\omega_\psi$, state encoder $\phi_\theta$, policy $\pi(a|x,z)$ 

\State \textbf{Initialize:} Encoder parameters $\theta, \psi, \pi$

\While{not converged} \Comment{Phase 1: Task encoder}
    \State train $\omega_\psi$ using algorithm \ref{alg:InfoNCE} 
\EndWhile

\State Freeze task encoder parameters $\psi$ 
\While{not converged} \Comment{Phase 2: State encoder}
    \State train $\phi_\theta(x)$ using algorithm \ref{alg:IEL}
\EndWhile

\State Freeze state encoder parameters $\theta$
\While{not converged} \Comment{Phase 3: Policy learning}
    \State train $\pi(a|x,z)$ using any off-the-shelf offline RL algo with uniformly random unit-length $z$ 
\EndWhile
\end{algorithmic}
\end{algorithm}

%% file: algorithms/asymmetric_graph_planning.tex
\begin{algorithm}
\caption{Asymmetric Graph Planning}
\label{alg:asymmetric_graph_planning}
\begin{algorithmic}[1]
\State \textbf{Input:} Coreset $\mathcal{X}_c$, Goal $g$, Value Function $V$, Encoder $\phi$, Neighbors $k$

\Statex \hrulefill \ \textit{Graph Construction} \hrulefill

\Function{ConstructGraph}{$\mathcal{X}_c, g, k$}
    \State $\mathbf{C}_{i,j} \gets d(x_i, x_j, g) \quad \forall x_i, x_j \in \mathcal{X}_c$ \Comment{Asymmetric cost matrix}
    \State $\mathbf{C}_{sym} \gets (\mathbf{C} + \mathbf{C}^\top) / 2$ \Comment{Symmetrize for connectivity backbone}
    \State $\mathcal{T} \gets \text{MinimumSpanningTree}(\mathbf{C}_{sym})$
    
    \State $\mathbf{A} \gets \text{KeepTopKPerRow}(\mathbf{C}, k)$ \Comment{Initial sparse directed adjacency}
    
    \For{each edge $(i, j) \in \mathcal{T}$} \Comment{Force MST edges to ensure global connectivity}
        \State $\mathbf{A}_{i,j} \gets \mathbf{C}_{i,j}$ 
        \State $\mathbf{A}_{j,i} \gets \mathbf{C}_{j,i}$
    \EndFor
    
    \State \Return $\mathbf{A}^\top$ \Comment{Return reversed graph for SSSP from goal}
\EndFunction

\Statex \hrulefill \ \textit{Online Planning Policy} \hrulefill

\Function{PlanStep}{$\mathcal{X}_c, x, g, \mathbf{A}_{rev}$}
    \State $v_{goal} \gets \arg\min_{i} d(x_i, g, g)$ \Comment{Match coreset node to goal}
    \State $\mathcal{P} \gets \text{ShortestPathPredecessors}(\mathbf{A}_{rev}, \text{source}=v_{goal})$
    
    \State $v_{curr} \gets \arg\min_{i} d(x, x_i, g) \quad \forall x_i \in \mathcal{X}_c$ \Comment{Localize current state}
    \State $v_{next} \gets \mathcal{P}[v_{curr}]$
    
    \If{$v_{next} \neq \text{null}$ \textbf{and} $v_{curr} \neq v_{goal}$}
        \State $\phi_{target} \gets \phi(v_{next})$
    \Else
        \State $\phi_{target} \gets \phi(g)$ \Comment{Direct pursuit if no path exists}
    \EndIf
    
    \State $z \gets (\phi_{target} - \phi(x))/\|\phi_{target} - \phi(x)\|$ \Comment{Task identifier}
    \State \Return $\pi(x, z)$
\EndFunction
\end{algorithmic}
\end{algorithm}

%% file: tables/main_configuration.tex
\section{Experiment details and additional results}
\label{sec:experiment_details}

{\bf Hyperparameter configuration.} See Table \ref{table:main_configuration} for the hyperparameter configurations used to generate the results in Table \ref{table:main_results}. Several parameters specific to Isomorphic Embedding Learning (IEL) can be tuned to optimize performance based on the environment's structural characteristics and the quality of the offline dataset.

{\bf Hyperparameter sensitivity.} As shown in Table \ref{tab:iel_comparison}, almost all evaluated configurations yield performance within two standard deviations with respect to the best-performing configuration. This proximity suggests a failure to reject the null hypothesis—that there is no statistically significant difference between these configurations—under standard significance testing. We therefore conclude that IEL is robust and not overly sensitive to the hyperparameter ranges.

\textbf{Tuning rules of thumb.} Firstly, $\beta$ determines the \emph{sharpness} of the enforced topology in the embedding space, defining the perceived distance between nearby points. For datasets with high noise or low structure, a low $\beta$ ($< 0.1$) is recommended for training stability. In contrast, datasets with high-quality, expert trajectories can be better resolved with $\beta \in [0.3, 1.0]$.  Secondly, $H_{\max}$ defines the prediction horizon for hitting times. While low values ($H_{\max} \le 5$) ensure stability, they limit the agent's ability to differentiate between long-range goals. Conversely, excessively large values ($H_{\max} \ge 30$) can introduce variance that leads to chaotic predictions. For general use, $H_{\max}$ is relatively easy to tune by observing the distribution of trajectory lengths; it should be set high enough to capture meaningful progress but low enough to avoid the noise of unstructured exploration. Thirdly, $\tau'$ denotes the expectile for hitting time regression. Decreasing $\tau'$ encourages the exploitation of shorter paths, which is effective when the hitting time range $H(u)$ is narrow (e.g., in expert datasets). When dataset quality is unknown, the unbiased $\tau'=0.5$ provides the most robust results.

\textbf{Coreset selection and computational efficiency.} Regarding evaluation, the coreset selection parameter $\sigma$ determines the similarity threshold between points. This parameter is heavily dependent on the embedding scale set by $H_{\max}$. We find that $\sigma \in [0.3\mu, 0.75\mu]$, where $\mu$ is the mean pairwise distance in the coreset, provides a reliable heuristic for tuning. Coreset selection primarily serves two distinct purposes: it limits the number of vertices in the planning graph to control computational complexity, and it removes distribution bias from the dataset. By reflecting the spatial coverage rather than the density of the original data, the coreset enables unbiased, coarse-grained long-horizon planning. Increasing the size of the coreset beyond the point of sufficient spatial coverage yields diminishing returns and can potentially degrade planning performance. This occurs when the graph becomes too fine-grained, presenting short-distance targets that fall below the resolution of the hitting-time predictions the agent was trained on. Consequently, an ideal coreset is one where the mean distance to $k$-nearest neighbor ($k$-NN) vertices is approximately constant across the state space and remains proportional to the hitting times seen during training. We configured the coreset size to ensure \emph{Asym-Graph} evaluation remains proportional to \emph{Rec-Mid} wall-clock time. Specifically, \emph{Rec-Mid} with 50,000 samples requires 400–600 seconds, while \emph{Asym-Graph} with a coreset size of $2^{13}=8192$ takes approximately 700 seconds. For reference‚ \emph{Sym-Graph} requires only $150$ seconds. These timings vary with the specific graph topology and complexity.

All experiments were conducted on a system equipped with an NVIDIA GeForce RTX 4090 (24 GB VRAM) and an Intel Core i7-14700K (96 GB RAM). A full experimental cycle, including model training and final evaluation, requires approximately one hour.

\begin{table}[t!]
\centering
\small
\setlength{\tabcolsep}{3pt}
\caption{Main configuration of IEL hyperparameters. Only hyperparameters specific to IEL or Asymmetric/Symmetric Graph Planning differ from those presented in HILP \cite{park2024hilp}.}
\label{table:main_configuration}
\begin{tabular*}{\textwidth}{@{\extracolsep{\fill}}lc}
\toprule
Hyperparameter & Value \\
\midrule
\# total gradient steps  & $10^6$  \\
\# phase 0 steps (task learning) & $0.2\cdot 10^5$ \\
\# phase 1 steps (embedding learning) & $4.8\cdot 10^5$ \\
\# phase 2 steps (policy learning) & $5\cdot 10^5$ \\
Learning rate  &  $3\cdot 10^{-4}$ \\
Optimizer & Adam \\
Minibatch-size & 1024 \\
Actor and value MLP hidden layer dimensions & $(512, 512, 512)$ \\
Discount factor $\gamma$ & $0.99$ \\
Latent embedding ($\phi$) dimension & 32 \\
Value learning expectile $\tau$ & $0.95$ \\
Hitting time regression expectile $\tau'$ & $0.5$ \\
Topological consistency temperature $\beta$ & $0.1$ \\
Maximum in-trajectory intermediate state hitting time $H_{\max}$ & $10$ \\
Target smoothing coefficient & $0.005$ \\
Actor temperature & $10$ \\
Evaluation coreset size & $8192$ \\
Evaluation coreset selection $\sigma$ & $20$ \\
Evaluation graph neighbor count ($k$ for $k$-NN) & $10$\\
\bottomrule
\end{tabular*}
\end{table}

%% file: tables/many_configurations_results.tex
\begin{table}[h!]
\centering
\small
\setlength{\tabcolsep}{5pt}
\caption{Comparison of IEL Asymmetric and Symmetric Graph Planning configurations across environments.}
\resizebox{\textwidth}{!}{
\begin{tabular}{lcccccc}
\toprule
Configuration & \rotatebox{45}{antmaze-large-diverse} & \rotatebox{45}{antmaze-large-play} & \rotatebox{45}{antmaze-ultra-diverse} & \rotatebox{45}{antmaze-ultra-play} & \rotatebox{45}{kitchen-partial} & \rotatebox{45}{kitchen-mixed} \\
\midrule
\multicolumn{7}{l}{\textit{Asymmetric Planning Performance}} \\
\midrule
$H_{\max}=10.0., \sigma=0.25$ & 58.0 \std{18.8} & 51.5 \std{16.5} & 36.5 \std{32.9} & 45.5 \std{23.9} & 53.2 \std{4.0} & 50.4 \std{6.3} \\
$H_{\max}=20.0., \sigma=0.25$ & \textbf{73.7 \std{13.4}} & 53.8 \std{16.1} & 35.0 \std{24.8} & 28.5 \std{23.2} & \textbf{61.4 \std{5.7}} & 54.7 \std{5.2} \\
$H_{\max}=10.0., \sigma=5.0$ & 57.0 \std{10.3} & 52.2 \std{16.4} & 61.2 \std{15.7} & 61.0 \std{14.1} & 54.8 \std{3.7} & 53.4 \std{6.2} \\
$H_{\max}=20.0., \sigma=5.0$ & 67.4 \std{8.2} & 40.5 \std{16.9} & 68.8 \std{15.6} & 55.0 \std{16.5} & 59.9 \std{5.2} & 51.5 \std{9.4} \\
$H_{\max}=10.0., \sigma=10.0$ & 61.8 \std{9.3} & 58.0 \std{10.9} & 66.8 \std{8.7} & 70.0 \std{10.6} & 56.0 \std{5.9} & 50.6 \std{7.2} \\
$H_{\max}=20.0., \sigma=10.0$ & 65.4 \std{9.0} & 45.8 \std{17.0} & 59.0 \std{10.8} & 66.0 \std{16.8} & 58.8 \std{8.6} & 51.4 \std{8.6} \\
$H_{\max}=10.0., \sigma=20.0$ & 71.8 \std{6.5} & \textbf{62.8 \std{5.9}} & \textbf{79.0 \std{4.5}} & 73.8 \std{6.2} & 60.4 \std{3.5} & 57.2 \std{3.8} \\
$H_{\max}=20.0., \sigma=20.0$ & 73.2 \std{6.7} & 55.8 \std{12.1} & 74.2 \std{6.0} & \textbf{75.2 \std{3.2}} & 59.0 \std{2.3} & \textbf{58.0 \std{3.9}} \\
$H_{\max}=10.0., \sigma=50.0$ & 44.5 \std{21.6} & 51.8 \std{20.2} & 51.8 \std{21.7} & 71.2 \std{3.2} & 61.1 \std{2.6} & 57.3 \std{3.9} \\
$H_{\max}=20.0., \sigma=50.0$ & 53.1 \std{24.3} & 46.0 \std{24.8} & 61.5 \std{7.2} & 63.5 \std{10.5} & 60.4 \std{1.8} & 57.2 \std{2.0} \\
\midrule
\addlinespace
\multicolumn{7}{l}{\textit{Symmetric Planning Performance}} \\
\midrule
$H_{\max}=10.0., \sigma=0.25$ & 59.0 \std{14.2} & 60.5 \std{6.2} & 69.5 \std{17.4} & 57.5 \std{32.0} & 55.8 \std{7.0} & 51.3 \std{6.6} \\
$H_{\max}=20.0., \sigma=0.25$ & \textbf{73.4 \std{12.2}} & 53.2 \std{22.4} & \textbf{74.8 \std{9.4}} & 65.0 \std{28.0} & 59.3 \std{4.0} & 54.1 \std{5.0} \\
$H_{\max}=10.0., \sigma=5.0$ & 61.5 \std{20.0} & 55.5 \std{10.5} & 63.0 \std{12.9} & 62.8 \std{12.5} & 57.8 \std{5.0} & 50.5 \std{6.9} \\
$H_{\max}=20.0., \sigma=5.0$ & 66.3 \std{10.3} & 41.5 \std{20.6} & 65.5 \std{12.7} & 58.8 \std{20.6} & 60.4 \std{4.1} & 52.2 \std{9.5} \\
$H_{\max}=10.0., \sigma=10.0$ & 57.0 \std{18.9} & 55.0 \std{8.3} & 59.0 \std{19.5} & 64.5 \std{11.3} & 56.7 \std{6.8} & 52.1 \std{7.2} \\
$H_{\max}=20.0., \sigma=10.0$ & 63.1 \std{7.6} & 40.0 \std{21.9} & 64.2 \std{10.9} & \textbf{65.2 \std{23.8}} & 61.0 \std{3.5} & 53.4 \std{8.2} \\
$H_{\max}=10.0., \sigma=20.0$ & 55.0 \std{20.6} & \textbf{63.0 \std{14.9}} & 62.5 \std{18.7} & \textbf{65.2 \std{13.5}} & 60.4 \std{5.5} & 57.8 \std{5.1} \\
$H_{\max}=20.0., \sigma=20.0$ & 72.5 \std{10.0} & 46.5 \std{11.4} & 58.8 \std{11.6} & 60.0 \std{19.2} & 58.2 \std{7.0} & \textbf{58.5 \std{3.8}} \\
$H_{\max}=10.0., \sigma=50.0$ & 57.2 \std{12.0} & 52.8 \std{10.7} & 47.8 \std{18.2} & 56.8 \std{24.7} & 60.9 \std{4.5} & 54.1 \std{4.4} \\
$H_{\max}=20.0., \sigma=50.0$ & 55.4 \std{23.0} & 38.2 \std{26.0} & 62.5 \std{10.5} & 52.5 \std{16.7} & \textbf{61.2 \std{3.9}} & 56.4 \std{3.2} \\
\bottomrule
\end{tabular}
}
\label{tab:iel_comparison}
\end{table}